\documentclass[accepted]{uai2026}
\usepackage{graphicx} % Required for inserting images

\usepackage{mathtools} % amsmath with fixes and additions
\usepackage{booktabs} % commands to create good-looking tables
\usepackage{tikz} % nice language for creating drawings and diagrams

\usepackage{natbib} % has a nice set of citation styles and commands
\usepackage{latexsym}
\usepackage[utf8]{inputenc}
\usepackage{amssymb}
\usepackage{bm}
\usepackage{dsfont}
\usepackage{nccmath}
\usepackage{amsmath}
\usepackage{mathtools}
\usepackage{mathrsfs}
\usepackage{amsthm}
\usepackage{amsfonts}
\usepackage[linesnumbered,ruled,vlined]{algorithm2e}
\usepackage{multirow}
\usepackage{diagbox}
\usepackage{comment}
\usepackage{color}
\usepackage{subcaption}
\usepackage{enumitem}
\usepackage{mathtools}
\usepackage{physics}
\usepackage{optidef}

\usepackage{hyperref}

\usepackage{wrapfig}

\newtheorem{assumption}{Assumption}
\newtheorem{definition}{Definition}
\newtheorem{lemma}{Lemma}
\newtheorem{theorem}{Theorem}
\newtheorem{proposition}{Proposition}

\newtheorem{problem}{Problem}
\newtheorem{remark}{Remark}
\newtheorem{example}{Example}

\newcommand{\eventually}{\Diamond}

\begin{document}
\title{Sample-Efficient Learning of Probabilistic Causes for Reachability in Markov Decision Processes with Probabilistic Guarantees}
% \author{Ryohei Oura, Hideki Okamoto, Bardh Hoxha, Georgios Fainekos}
\author[1, 2]{\href{ryohei_oura@mail.toyota.co.jp}{Ryohei~Oura}{}}
\author[1]{\href{georgios.fainekos@toyota.com}{Georgios~Fainekos}{}}
\author[1]{\href{hideki.okamoto@toyota.com}{Hideki~Okamoto}{}}
\author[1]{\href{bardh.hoxha@toyota.com}{Bardh~Hoxha}{}}
% Add affiliations after the authors
\affil[1]{%
    Toyota Research Institute of North America\\
    Toyota Motor North America
}
\affil[2]{%
    Frontier Research Center\\
    Toyota Motor Corporation
}
% \institute{Toyota Research Institute of North America, Toyota Motor North America}
% \author{Anonymous Authors}
% \institute{}

\maketitle

\begin{abstract}
    % Probabilistic model checking for Markov decision processes (MDPs) provides quantitative guarantees for specifications, but often offers limited insight into why undesired outcomes occur. Probability-raising (PR) causality addresses this by identifying states whose visitation increases the probability of reaching designated effect states. Existing PR-cause methods, however, rely on MDP modifications that are ill-suited for learning: they compare conditional and unconditional reachability in a way that can make causal gaps hard to detect from transition samples, and they require reachability values of the original MDP during construction, which are unavailable in MDPs with unknown transition probabilities. We study unknown MDPs and propose a learning approach with probabilistic guarantees for PR-cause identification. Our key ingredient is a lightweight restart-based MDP modification that reduces PR-cause checking to two conditional reachability queries without relying on reachability probabilities in the original MDP. We prove the proposed modification correct, derive finite-sample guarantees and sample-complexity bounds, and develop an anytime learning-and-checking algorithm built upon two-sided value iteration. Experiments on two benchmarks demonstrate reliable and quick identification of PR causes.
    Probabilistic model checking for Markov decision processes (MDPs) provides quantitative guarantees, but often offers limited insight into why undesired outcomes occur. Probability-raising (PR) causality addresses this by identifying states whose visitation increases the probability of reaching designated states. Existing PR-cause identification methods, however, use MDP modifications not well-suited for learning: the gap between conditional and unconditional reachability probabilities can be hard to detect from transition samples, and construction requires reachability probabilities of the MDP, which are unavailable when transition probabilities are unknown. We study unknown MDPs and propose a learning approach with probabilistic guarantees for PR-cause identification. Our key ingredient is a \textit{restart-based MDP modification} that reduces PR-cause checking to two conditional reachability queries without using reachability values of the original MDP. We prove correctness, establish sample-complexity bounds, and develop an anytime learning-and-checking algorithm based on two-sided value iteration that progressively classifies states as causal, non-causal, or undecided. Experiments on two benchmarks demonstrate reliable and fast identification of PR causes.
\end{abstract}

\section{Introduction}
% Correct explanation and verification for stochastic systems like MDPs are important. However, verification does not provide in-depth information for improved behavior when the system does not meet the specification/task.
% A formal explanation of the intricate behavior of MDP is important to provide understandable and actionable insights for human users.
Probabilistic model checking (PMC) \citep{baier2008principles} is widely used to analyze Markov decision processes (MDPs) that capture probabilistic and nondeterministic behavior, such as that arising from permissive policies, randomized decisions, and human-related uncertainties in various applications \citep{drager2015permissive, junges2018model, chen2022multi}. 
% However, its outputs are often difficult to interpret. 
While PMC can quantify the probabilities of undesired outcomes, it typically does not explain why such a behavior happens \citep{baier2021verification}. 
% As a result, a verification result may fail to provide concise information on why it happens. 
In practice, one often wants to know not just the probability but which states are responsible for it: for instance, states whose visitation raises the violation probability no matter which action a permissive controller selects, or states whose traversal raises the satisfaction probability no matter how a human or other agent acts. Identifying such states helps localize the source of undesired/desired behavior.
Cause-effect reasoning aims to close this gap by connecting outcomes for given specifications to concrete system elements such as states or actions \citep{baier2024foundations, oura2025probability, kazemi2025causal, coenen2022temporal, triantafyllou2022actual} (see Section \ref{related_works:causal_explanation} for details).

% Cause-effect reasoning gathers a lot of attention in both machine learning and formal methods communities as a promising way to provide critical information about why the system issues such unsatisfactory behavior.
% PR causality has the advantage of a verified explanation for the system's behavior over other methods.
Probability-raising (PR) causality provides a verification-friendly explanation for probabilistic reachability in MDPs \citep{baier2022probability, baier2024foundations}. In this framework, a cause is a set of states whose visitation increases the probability of reaching designated states, such as rejecting states for logical specifications. The authors have also introduced algorithms based on model checking to compute PR causes under some optimality criteria, but existing approaches have presumed that the MDP is fully known. We focus on reachability since it is a fundamental property to which linear temporal logic specifications are reduced via a standard product construction \citep{baier2008principles}.

% Although PR cause computation is well-established, estimation of the causes for unknown MDPs has not been considered. We face several challenges to correctly estimating PR causes when we treat unknown MDPs: (i) ... (ii) ...
% PAC learning of MDP can be employed to overcome these challenges \cite{}. Controller synthesis, verification, etc., based on PAC-MDP learning has been advocated.
% Most related work is PAC model checking. They introduced a method for learning the upper/lower bounds for reachability probability. However, they merely consider verifying the system rather than computing the cause of the reliability property.
Learning-based verification provides probably approximately correct (PAC) guarantees for unknown stochastic systems \citep{ashok2019pac, agarwal2025pac, fu2014probably, perez2024pac} (see Section \ref{related_works:learning_verification} for details), but it does not directly yield cause identifications. 
% Two-sided bounds computed from transition samples can certify reachability probabilities, but they do not directly address cause identification.
In particular, they are not designed to decide which states (or state sets) are responsible for undesired behavior. Our work builds on a two-sided value iteration scheme \citep{ashok2019pac}, but targets certified cause identification rather than property verification alone.

Bringing PR-cause computation to unknown MDPs raises challenges not typically encountered when the model is fully known. The causation decision depends on state-specific probability gaps that are unknown in advance, making a single fixed-accuracy learning phase potentially insufficient or wasteful. Moreover, standard PR-cause pipelines rely on MDP transformations \citep{baier2022probability} that are ill-suited for learning. They compare a conditional reachability probability with an unconditional one in a way that can be statistically unfavorable. The unconditional quantity averages over runs that do and do not visit the candidate state, which can dilute the gap, especially because the policy optimization used for PR-cause checking tends to steer toward truly causal candidates. Finally, these transformations for PR-cause identification, as well as existing MDP transformations for computing conditional probabilities more generally, require reachability probabilities of the original MDP during construction \citep{baier2014computing, baier2022probability}, which are unavailable in unknown MDPs 
(see Section \ref{related_works:MDP_transformation} for details).

As a closely related and complementary work, \cite{oura2025probability} has studied PAC-guaranteed identification of PR causes using the existing MDP transformation for an uncertain MDP by instantiating transition probabilities. However, the authors do not consider finding a true PR cause in an unknown MDP.
% If one combines this uncertainty-set certification with learning the set from transition data and then seeks near-uniform and high-confidence guarantees over the learned set, the required number of sampled MDP instances can become huge, making this combination computationally intensive.

% In this paper, we propose ...
To address these challenges, we propose a learning approach for PR-cause identification with probabilistic guarantees. First, we introduce a restart-based MDP modification that reduces PR-cause checking to comparing reachability probabilities conditioned on visiting a candidate state and on bypassing it, and we prove its correctness (Proposition \ref{prop:soundness_completeness_restart_modification}). Second, we analyze and compare the sample-complexity bounds for learning PR causes with high confidence in unknown MDPs under the proposed modification and the existing one (Theorem \ref{thm:sample_efficiency}). Third, built upon two-sided value iteration, we present an anytime learning algorithm to progressively classify states into three classes (causal/non-causal/undecided). We prove its correctness (Theorems \ref{thm:Algorithm1_is_solution} and \ref{thm:Algorithm1_is_anytime}). Finally, we evaluate the proposed method on two benchmarks against two baselines: one combining \citep{baier2022probability} with \citep{ashok2019pac}, and the other combining learning of a PAC-guaranteed interval MDP \citep{suilen2022robust} with \citep{oura2025probability}.
\hspace{-3mm}

\section{Preliminaries}
We introduce MDPs and probabilistic causation in an MDP as a set of states, aiming to capture and explain cause-effect relationships for reachability properties on the MDP.

\subsection{Markov Decision Processes and Probability-Raising Causality}
A Markov decision process (MDP) is a tuple $\mathcal{M} = (S, A, s_I, P)$, where $S$ is a finite set of states, $A$ is a finite set of actions, $s_I \in S$ is an initial state, and $P : S \times A \times S \to [0, 1]$ is a transition probability function with $\sum_{s' \in S} P(s, a, s') \in \{0, 1\}$ for any $s \in S$ and any $a \in A$.

For any $s \in S$, we denote the set of actions enabled at $s$ as $\mathcal{A}(s) = \{ a \in A \;|\; \exists s' \mbox{ s.t. } P(s,a,s') > 0 \}$.  We say that $s \in S$ is terminal if $\mathcal{A}(s) = \emptyset$. We denote the set of terminal states by $E$, assuming $E \neq \emptyset$ and $s_I \not\in E$. A path is a finite or infinite sequence $s_0 a_0 s_1 \ldots $, where $P(s_i, a_i, s_{i+1}) > 0$ for any $s_i, s_{i+1} \in S$ and $a_i \in A$ with $i =0, 1, \ldots$. Intuitively, a path that reaches $E$ is considered undesirable.
A policy $\pi : \mathrm{FinPath} \times A \to [0,1]$ is a function that assigns a probability to each action for a finite path, where $\mathrm{FinPath}$ is the set of all finite paths. A policy $\pi$ is memoryless if $\pi(\rho, a) = \pi(\mathrm{last}(\rho), a)$ for any $\rho \in \mathrm{FinPath}$ and any $a \in A$, where $\mathrm{last}(\rho)$ denotes the last state of $\rho$.
For any MDP $\mathcal{M}$, any state $s \in S$, and any policy $\pi$, a unique probability measure $\mathrm{Pr}^\pi_{\mathcal{M}, s}$ on measurable sets of paths starting from $s$ under $\pi$ in a standard way \citep{baier2008principles}. For brevity, we write $\mathrm{Pr}^\pi_{\mathcal{M}}$ to denote $\mathrm{Pr}^\pi_{\mathcal{M}, s_I}$. We also denote the minimal and maximal probabilities over all policies by $\mathrm{Pr}^\mathrm{min}_{\mathcal{M}, s}$ and $\mathrm{Pr}^\mathrm{max}_{\mathcal{M}, s}$, respectively. We denote by $\mathrm{Post}_{s,a}$ the set of successor states of $s$ by executing $a$.
% \begin{assumption}[Successors and Transition Sampling]
% \label{assume:unknown_MDP}
    % For any MDP $\mathcal{M}$, we assume the following two properties.
    % \begin{enumerate}
        % \item For any $s \in S$ and any $a \in A$, $\mathrm{Post}_{s,a}$ is known.
        % \item For any $s \in S$ and any $a \in \mathcal{A}(s)$, we can generate a transition $(s,a,s')$ according to $P$.
    % \end{enumerate}
% \end{assumption}
We refer to an MDP as unknown if the successors $\mathrm{Post}_{s,a}$ are known for any $s \in S$ and $a \in A$, but its transition probability function is unknown. 

For convenience, we use linear temporal logic (LTL)-like notations such as $\neg$ (negation), $\land$ (and),  $\eventually$ (eventually), and $\mathrm{U}$ (until) to represent certain path properties. For any $X_0, X_1 \subseteq S$, the formula $X_0 \mathrm{U} X_1$ is satisfied by a (state) path $\rho = s_0 s_1 \ldots$ such that there exists $j \geq 0$ such that, for any $i < j$, $s_i \in X_0$ and $s_j \in X_1$ hold, and we denote $\eventually X_1 = S \mathrm{U} X_1$. Furthermore, $\neg \eventually X_1$ is satisfied by a path $\rho$ such that all states in $\rho$ are not contained in $X_1$. Throughout, when a set argument is a singleton, we omit the braces. For example, we write $\eventually c$ for $\eventually \{c\}$ and $X \mathrm{U} c$ for $X \mathrm{U} \{c\}$.
\begin{definition}
\label{def:SPR_cause}
    For any MDP $\mathcal{M}$, the set of terminal states $E \subseteq S$, and a nonempty subset $C \subseteq S \setminus E$, we say that $C$ is a strict probability-raising (SPR) cause for $E$ on $\mathcal{M}$ if and only if the following conditions \textbf{(C1)} and \textbf{(C2)} hold:
    \begin{description}
        \item[(C1)] $\forall c \in C, \forall \pi\mbox{ s. t. } \mathrm{Pr}_{\mathcal{M}}^\pi(\neg C \mathrm{U} c) \in (0, 1)$,
        \begin{align}
        \mathrm{Pr}_{\mathcal{M}}^\pi(\eventually E \;|\; \neg C \mathrm{U} c) > \mathrm{Pr}_{\mathcal{M}}^\pi(\eventually E ),
        \end{align}
        \item[(C2)] $\forall c \in C, \exists \pi$ s. t. $\mathrm{Pr}_{\mathcal{M}}^\pi(\neg C \mathrm{U} c) \in (0, 1)$.
    \end{description}
\end{definition}
Intuitively, $\eventually E$ denotes eventually reaching $E$, while $\neg C \mathrm{U} c$ means reaching $c$ before any other state in $C$. An SPR cause is a set $C$ of waypoint states that increases the probability of reaching $E$, as required by \textbf{(C1)}. We exclude policies under which $\mathrm{Pr}^\pi_\mathcal{M}(\neg C \mathrm{U} c) = 0$ (the conditional probability is undefined) or $1$ (the conditional equals the unconditional). \textbf{(C2)} ensures that \textbf{(C1)} is not satisfied vacuously: for each $c \in C$, there exists at least one policy under which this comparison in \textbf{(C1)} is non-trivial.
% removing states in $C$ that are always or never reached before others.
% Intuitively, $\eventually E$ denotes that a path eventually reaches a state in $E$, while $\neg C \mathrm{U} c$ means that a path eventually reaches $c$ without visiting any other states in $C$ beforehand.
% SPR cause is defined as a subset $C$ of states that act as waypoints increasing the probability of reaching terminal states $E$ under any policy. This probability-raising condition is defined by $\textbf{(C1)}$, implying that visiting any state in $C$ increases the probability of reaching $E$. We eliminate policies under which we always or never reach $c$ from consideration because such policies do not change the reachability probability. The non-vacuity is defined by $\textbf{(C2)}$, removing states in $C$ that can only be reached by passing through the others or that are always passed through along the way.
For each $c \in S$, we call $c$ a causal state if $\{c\}$ is an SPR cause, and a non-causal state otherwise.

\begin{example}
\label{example:MDP_SPRcause}
    We consider an MDP depicted in Fig. \ref{fig:example_6state_MDP}. The sets of actions and terminal states are $A = \{a,b\}$ and $E = \{s_5\}$, respectively. The only enabled action is $a$ except for $s_0$. $C = \{s_2, s_3\}$ forms an SPR cause in this MDP for the following reasons. First, there exist two paths $s_0 a s_2$ and $ s_0a s_1a s_3$. Thus, $C$ satisfies the condition $\mathbf{(C2)}$.
    Second, we have $\mathrm{Pr}(\eventually s_5 \mid \eventually s_2) = 0.6$ and $\mathrm{Pr}(\eventually s_5 \mid \eventually s_3) = 1$ while $\mathrm{Pr}^\mathrm{max}_\mathcal{M}(\eventually s_5 ) = 0.48$. Hence, $C$ also satisfies $\mathbf{(C1)}$.
\end{example}
Concrete application scenarios for Def. \ref{def:SPR_cause}, including permissive robotic planning with unsafe states and linear temporal logic planning against an adversary, are discussed in Appendix \ref{appendix:usecases}.

We consider lower and upper bounds for reachability probabilities.
For any MDP $\mathcal{M}$, we say that $\underline{x}^\mathrm{opt}, \overline{x}^\mathrm{opt}: S \to [0,1]$ are lower and upper bounds for the reachability probabilities for the set of terminal states $E$ if and only if
    \begin{align}
        \forall s \in S, \underline{x}^\mathrm{opt}(s) \leq \mathrm{Pr}^\mathrm{opt}_{\mathcal{M}, s}(\eventually E) \leq \overline{x}^\mathrm{opt}(s),
    \end{align}
where $\mathrm{opt} \in \{ \min, \max \}$.
Moreover, for any multi-sample of transitions $\mathcal{D}$ from $\mathcal{M}$ and any $\delta \in (0,1)$, we consider a lower bound $\hat{P}_\delta : S \times A \times S \to [0,1]$ for the transition probability $P$ as, for any $(s,a,s') \in S \times A \times S$,
\begin{align}
\label{lower_bound_transition_prob}
    \hat{P}_\delta(s,a,s') =\!
    \begin{cases}
         \max (0, \frac{N(s,a,s')}{N(s,a)} - \sqrt{\frac{- \ln{\delta}}{2 N(s,a)}}) \\
          & \hspace{-43mm} \mbox{ if } N(s,a) > 0 \land |\mathrm{Post}_{s,a}| > 1, \\
        1 & \hspace{-43mm} \mbox{ if } \mathrm{Post}_{s,a} = \{s'\}, \\
        0 & \hspace{-43mm} \mbox{ otherwise},
    \end{cases}
\end{align}
where $N(s,a,s')$ denotes the number of times the transition $(s,a,s')$ appears in $\mathcal{D}$ and $N(s,a) = \sum_{s' \in S} N(s,a,s')$. We denote $\hat{P}_\delta \leq P$ if and only if $\hat{P}_\delta(s,a,s') \leq P(s,a,s')$ for any $(s,a,s') \in S \times A \times S$. Intuitively, $\hat{P}_\delta$ is derived based on Hoeffding's inequality \citep{ashok2019pac}.

We consider $S^\mathrm{min}_{\mathcal{M}} \!=\! \{ s \in S \mid \mathrm{Pr}^\mathrm{min}_{\mathcal{M}, s}(\eventually E) = 0 \}$ and $S^\mathrm{max}_{\mathcal{M}} \!=\! \{ s \in S \mid \mathrm{Pr}^\mathrm{max}_{\mathcal{M}, s}(\eventually E) = 0 \}$ for any MDP $\mathcal{M}$.
% \begin{align}
    % S^\mathrm{min}_{\mathcal{M},0} = \{ s \in S \setminus E \;|\; \mathrm{Pr}^\mathrm{min}_\mathcal{M}(s \models \eventually E) = 0 \}, \\
    % S^\mathrm{max}_{\mathcal{M},0} = \{ s \in S \setminus E \;|\; \mathrm{Pr}^\mathrm{max}_\mathcal{M}(s \models \eventually E) = 0 \}.
% \end{align}
Note that both $S^\mathrm{min}_{\mathcal{M}}$ and $S^\mathrm{max}_{\mathcal{M}}$ can be computed in polynomial time in the number of states and actions \citep{baier2008principles}.
Based on the lower-bound estimate $\hat{P}_\delta$ of the transition probability function, we consider \textit{modified} Bellman operators that compute lower/upper bounds of the reachability probability as follows \citep{ashok2019pac}.

\begin{definition}
\label{def:bounds_operator_reachability}
    For any MDP $\mathcal{M}$, any multi-sample of transitions $\mathcal{D}$, any $\delta \in (0,1)$, we define operators $f^{\mathrm{opt}^{(1)}, \mathrm{opt}^{(2)}}_{\mathcal{M}, \delta}: [0,1]^S \to [0,1]^S$ such that $f^{\mathrm{opt}^{(1)}, \mathrm{opt}^{(2)}}_{\mathcal{M}, \delta}[x](s) = 1$ for any $s \in E$, $f^{\mathrm{opt}^{(1)}, \mathrm{opt}^{(2)}}_{\mathcal{M}, \delta}[x](s) = 0$ for any $s \in S^{\mathrm{opt}^{(1)}}_{\mathcal{M}}$, and, for any $s \in S \setminus (E \cup S^{\mathrm{opt}^{(1)}}_{\mathcal{M}})$,
    \begin{align}
        \label{opt12_update}
        &f^{\mathrm{opt}^{(1)}, \mathrm{opt}^{(2)}}_{\mathcal{M}, \delta}[x](s) := \mathrm{opt}^{(1)}_{a \in \mathcal{A}(s)} \sum_{s' \in \mathrm{Post}_{s,a}} \! \hat{P}_\delta(s,a,s') x(s') \nonumber \\
        & \hspace{5.9mm} + \mathrm{opt}^{(2)}_{s' \in \mathrm{Post}_{s,a}} x(s') (1 - \sum_{s' \in \mathrm{Post}_{s,a}}\hat{P}_\delta(s,a,s') ),
    \end{align}
    where $\hat{P}_\delta$ is defined by (\ref{lower_bound_transition_prob}) for $\mathcal{M}$ and $\mathrm{opt}^{(1)}, \mathrm{opt}^{(2)} \in \{ \min, \max \}$.
\end{definition}
Intuitively, repeatedly applying $f^{\mathrm{opt}, \mathrm{min}}_{\mathcal{M}, \delta}$ and $f^{\mathrm{opt}, \mathrm{max}}_{\mathcal{M}, \delta}$ to a function $x: S \to [0,1]$ provide lower and upper bounds for $\mathrm{Pr}^{\mathrm{opt}}_{\mathcal{M}, \cdot}(\eventually E)$.
% \setlength{\intextsep}{0pt}
% \begin{wrapfigure}{r}{0.4\textwidth}
        % \centering
        % \includegraphics[width=0.38\textwidth]{For_Paper/Figures/Example_MDP.pdf}
        % \caption{MDP in Example \ref{example:MDP_SPRcause}, where $E=\{s_5\}$. The action enabled at states except for $s_0$ is only $a$ ad hence we abbreviate it.}
        % \label{fig:example_6state_MDP}
% \end{wrapfigure}

\begin{figure}
    \centering
        \includegraphics[width=0.27\textwidth]{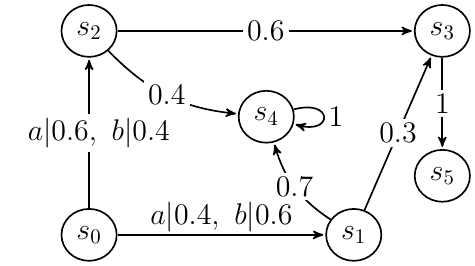}
        \caption{MDP in Example \ref{example:MDP_SPRcause} with  $E=\{s_5\}$. The action enabled at states except for $s_0$ is only $a$, and hence we abbreviate it.}
        \label{fig:example_6state_MDP}
\end{figure}

\subsection{Problem Setting}
Our goal is to identify an SPR cause that maximally covers undesired paths with high probability on a given unknown MDP. To formalize this intuition, we pose the following problem by employing the same path coverage objective as in prior works \citep{baier2022probability, oura2025probability}.
\begin{problem}
\label{problem}
    Given an unknown MDP $\mathcal{M}$, for any $\delta_\mathcal{M} > 0$, design an algorithm that finds a subset of states $C^* \subset S$ such that, w. p. at least $1-\delta_\mathcal{M}$, $C^*$ is an SPR cause for $E$ on $\mathcal{M}$ and, for any SPR cause $C$ for $E$ on $\mathcal{M}$,
    \begin{align}
    \label{PAC-PR-recall_optimal}
        \min_{\pi} \mathrm{Pr}_{\mathcal{M}}^\pi (\eventually C^* \mid \eventually E) - \mathrm{Pr}_{\mathcal{M}}^\pi (\eventually C \mid \eventually E) \!\geq\! 0.
    \end{align}
\end{problem}
Intuitively, we aim to find an SPR cause whose coverage for undesired path, that is $\mathrm{Pr}^\pi_\mathcal{M}(\eventually C^* \mid \eventually E)$, is maximal under all possible policies among all SPR causes. In the following, we do not mention the set of terminal states $E$ for any MDP when it is clear from the context.

\section{Cause Learning Methodology}
% \begin{figure}[htbp]
  	% \centering
    % \includegraphics[width=0.9\linewidth]{For_Paper/Figures/alg_flowchart_v.png}
        % \label{ex:path_planning}
  	% \caption{Overview of proposed method. This corresponds to Algorithm \ref{alg:PACIdentification} as follows: sampling (L. 4), bound computation (L. 5-13), causality check (L. 14 and 15), classification and termination (L. 17-19), and computing $C^*$ (L. 20).}
    % \label{fig:alg_summary}
% \end{figure}
In this section, we propose an anytime learning method for identifying PR causes in unknown MDPs. The method builds on a proposed MDP modification tailored to learning. For each candidate causal state, we redirect its outgoing transitions to the initial state, reducing PR-cause checking to two conditional reachability queries. This modification does not require reachability probabilities of the original MDP and makes the relevant probability gaps easier to detect from samples. Building on it, we develop a learning-and-checking loop that repeatedly tightens lower and upper bounds on reachability probabilities for both the original and modified MDPs. As these bounds tighten, more candidate states are classified into three classes (causal/non-causal/undecided), leaving fewer states unclassified, yielding an anytime procedure with confidence guarantees.
% The overview of the proposed method is summarized in Fig. \ref{fig:alg_summary}.

% \subsection{Reachability Probability Bounds and MDP Modification for SPR Cause Learning}

\subsection{MDP Modifications for SPR Cause Learning in Unknown MDPs}
We first review an existing MDP modification introduced in \citep{baier2022probability, baier2024foundations} for computing SPR causes and point out that it can be sample-inefficient when combined with MDP learning. We then propose a restart-based MDP modification that enables more sample-efficient learning of SPR causes while preserving correctness.

The existing transformation rewrites the transitions outgoing from a specified state $c$ so that the transition probability to a state in $E$ equals $\mathrm{Pr}^\mathrm{min}_{\mathcal{M}, c}(\eventually E)$, and the transition probability to an added sink state equals $1 - \mathrm{Pr}^\mathrm{min}_{\mathcal{M}, c}(\eventually E)$.
\begin{definition}[Minimal-Reachability-based Transformation for MDPs \citep{baier2022probability, baier2024foundations}]
\label{def:existing_MDP_modification}
    For any MDP $\mathcal{M}$ and any $c \in S$, we define a transformed MDP as a tuple $\mathcal{M}_\mathrm{mr}^{[c]} = (S\cup \{s_\mathrm{noeff}\}, A, s_I, P^c_\mathrm{mr})$ such that $P^c_\mathrm{mr}(s_\mathrm{noeff}, a, s_\mathrm{noeff}) = 1$ for any $a \in A$, $P^c_\mathrm{mr}(s,a,s') = P(s,a,s')$ for any $(s,a,s') \in S \setminus \{c\} \times A \times S$, and $P^c_\mathrm{mr}(c, a, s_\mathrm{eff}) = \mathrm{Pr}^\mathrm{min}_{\mathcal{M}, c}(\eventually E)$ and $P^c_\mathrm{mr}(c, a, s_\mathrm{noeff}) = 1 - \mathrm{Pr}^\mathrm{min}_{\mathcal{M}, c}(\eventually E)$ for any $a \in \mathcal{A}(c)$, where $s_\mathrm{eff}$ is a fixed state in $E$. 
\end{definition}
Def. \ref{def:existing_MDP_modification} allows us to check \textbf{(C1)} in Def. \ref{def:SPR_cause} for each $c \in S$ via $\mathrm{Pr}^\mathrm{min}_{\mathcal{M}, c}(\eventually E) - \mathrm{Pr}^\mathrm{max}_{\mathcal{M}^{[c]}_\mathrm{mr}}(\eventually E)$. To see this, recall that in $\mathcal{M}^{[c]}_\mathrm{mr}$ the outgoing transitions of $c$ are rewritten so that $c$ leads to $E$ exactly with probability $\mathrm{Pr}^\mathrm{min}_{\mathcal{M}, c}(\eventually E)$ and otherwise to the sink $s_\mathrm{noeff}$, which never reaches $E$. A policy attaining $\mathrm{Pr}^\mathrm{max}_{\mathcal{M}^{[c]}_\mathrm{mr}}(\eventually E)$ therefore reaches $E$ as much as possible before reaching $c$ and, after reaching $c$, steers to $E$ with the minimal probability. Comparing this maximal reachability against $\mathrm{Pr}^\mathrm{min}_{\mathcal{M}, c}(\eventually E)$, the reachability after a visit to $c$, realizes the test of \textbf{(C1)}. However, when $c$ is causal, a policy attaining $\mathrm{Pr}^\mathrm{max}_{\mathcal{M}^{[c]}_\mathrm{mr}}(\eventually E)$ typically steers toward $c$ with high probability, making the reachability probability gap between the original and transformed MDPs small and requiring many transition samples to determine whether $c$ is causal. Moreover, Def. \ref{def:existing_MDP_modification} requires the (unknown) maximal reachability probabilities at construction time; thus, we must, e.g., replace transition probabilities from $c$ in $\mathcal{M}_\mathrm{mr}^{[c]}$ with estimates, which in turn needs additional samples to guarantee correctness.
To address these issues, we introduce a restart-based modification that removes the effect of the probability of reaching $c$ from the initial state and avoids using estimated reachability probabilities during construction.
\begin{definition}[Restart-based Modification for MDPs]
\label{def:restart_based_modification}
    For any MDP $\mathcal{M}$ and any $c \in S$, a modified MDP is a tuple $\mathcal{M}^{[c]} = (S, A, s_I, P^c)$ such that $P^c(s,a,s') = P(s,a,s')$ for any $(s,a,s') \in S\setminus \{c\} \times A \times S$, and $P^c(c, a, s_I) = 1$ for any $a \in \mathcal{A}(c)$. 
\end{definition}
Intuitively, the transition from $c$ is reset to the initial state. Hence, this enables us to evaluate the reachability probability conditioned on bypass of $c$ in $\mathcal{M}$ by the unconditional reachability probability on $\mathcal{M}^{[c]}$ as indicated in Lemma \ref{lemma:conditioned_reachability_to_reachability_modified_MDP}.
\begin{lemma}
\label{lemma:conditioned_reachability_to_reachability_modified_MDP}
    For any MDP $\mathcal{M}$, any $c \in S \setminus E$, and any policy $\pi$ with $\mathrm{Pr}^\pi_{\mathcal{M}}(\neg \eventually c) > 0$, there exists a policy $\tilde{\pi}$ on $\mathcal{M}^{[c]}$ such that we have
    \begin{align}
        \mathrm{Pr}^{\tilde{\pi}}_{\mathcal{M}^{[c]}}(\eventually E) = \mathrm{Pr}^\pi_{\mathcal{M}}(\eventually E \mid \neg \eventually c).
    \end{align}
    Moreover, if $\pi$ is memoryless, then we can choose $\tilde{\pi} = \pi$.
\end{lemma}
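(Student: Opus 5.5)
The plan is to exploit the renewal structure created by the restart at $c$. Write $p := \mathrm{Pr}^\pi_{\mathcal{M}}(\neg \eventually c)$, $q := \mathrm{Pr}^\pi_{\mathcal{M}}(\neg \eventually c \land \eventually E)$ and $r := \mathrm{Pr}^\pi_{\mathcal{M}}(\eventually c)= 1-p$. Since $c \notin E$ and $E$ consists of terminal states, a path that visits $c$ has not yet reached $E$ before $c$. Hence, in $\mathcal{M}^{[c]}$, every path either reaches $E$ during some ``round'' (a segment started at $s_I$ and not visiting $c$), or hits $c$ and restarts at $s_I$.

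\textbf{Memoryless case.} Take $\tilde{\pi} = \pi$. Because $P^c$ agrees with $P$ outside $c$, the probability that a single round starting at $s_I$ reaches $E$ without visiting $c$ is exactly $q$, and the probability that the round reaches $c$ is $r$. This holds because, up to the first visit to $c$, the two measures coincide on cylinder sets. By memorylessness and the Markov property, after each restart the process is a fresh copy, so the rounds are i.i.d. We then obtain
\[
\mathrm{Pr}^{\pi}_{\mathcal{M}^{[c]}}(\eventually E)=\sum_{k\ge 0} r^k q = \frac{q}{1-r}=\frac{q}{p}=\mathrm{Pr}^\pi_{\mathcal{M}}(\eventually E \mid \neg\eventually c).
\]
Equivalently, the value $x=\mathrm{Pr}^{\pi}_{\mathcal{M}^{[c]}}(\eventually E)$ satisfies the fixed-point equation $x = q + r x$, which has a unique solution since $r<1$ because $p>0$.

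\textbf{General (history-dependent) case.} Define $\tilde{\pi}$ on a finite path $\rho$ of $\mathcal{M}^{[c]}$ by discarding everything up to and including the last occurrence of the restart transition $c\,a\,s_I$. Then $\tilde{\pi}(\rho,a) := \pi(\rho',a)$, where $\rho'$ is the suffix beginning at the most recent restart at $s_I$. This suffix is a valid finite path of $\mathcal{M}$ from $s_I$ that has not visited $c$ except possibly at its last state; the action chosen at $c$ is irrelevant, since every action leads to $s_I$. Under $\tilde{\pi}$, each round is distributed exactly as a run of $\mathcal{M}$ under $\pi$ up to its first visit to $c$, and the rounds are independent, so the same geometric-series computation applies.

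\textbf{Main obstacle.} The main technical step is making the ``independent rounds'' claim rigorous. I will prove by induction on $k$ that the probability of reaching $E$ within round $k+1$, without visiting $c$ in that round, after exactly $k$ restarts, equals $r^k q$. The induction factors cylinder probabilities at each restart time, using the definition of $\tilde{\pi}$ as a function of the suffix only. Finally, I will sum over $k$ using $\sigma$-additivity, noting that the events for different $k$ are disjoint.
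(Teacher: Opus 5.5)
Your proposal is correct and follows essentially the same route as the paper. It uses the same policy $\tilde{\pi}$ (which forgets everything before the most recent restart), the same factorization of cylinder probabilities at restart times, and the same fact that $\mathcal{M}$ under $\pi$ and $\mathcal{M}^{[c]}$ under $\tilde{\pi}$ agree up to the first visit to $c$. The only cosmetic difference is that the paper does a single renewal step at the first hitting time of $c$ to get $x = q + r x$ directly, whereas you unroll this into the geometric series $\sum_{k \ge 0} r^k q$ by induction on the number of restarts.
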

The identity holds because Def. \ref{def:restart_based_modification} restarts every path reaching $c$ back to $s_I$, redistributing the contribution of $c$-visiting paths to fresh paths that never visit $c$.
% that is, we have
% \begin{align}
    % \mathrm{Pr}^\pi_\mathcal{M}(\eventually E \;|\; \neg \eventually c) = \mathrm{Pr}^\pi_{\mathcal{M}^{[c]}}(\eventually E),
% \end{align}
% for any memoryless policy $\pi$ such that $\mathrm{Pr}^\pi_\mathcal{M}(\neg \eventually c) > 0$ (see Lemma \ref{lemma:conditioned_reachability_to_reachability_modified_MDP} in Appendix \ref{appendix:proofs}).
Proof of Lemma \ref{lemma:conditioned_reachability_to_reachability_modified_MDP} is in Appendix \ref{appendix:proofs}. 
Moreover, we explicitly avoid relying on any reachability probabilities on $\mathcal{M}$ to construct $\mathcal{M}^{[c]}$.
% \begin{remark}
    % A two-step MDP modification introduced in \citep{baier2014computing} also relies on a restart-mechanism to compute conditional reachability probabilities. However, their construction requires the respective (unknown) maximal reachability probabilities of the condition and the task, but our modification does not. Hence, ours is more suitable for learning.
% \end{remark}
% We show the modified MDP with $c = s_2$ for the MDP in Example \ref{example:MDP_SPRcause} in Fig. \ref{fig:example_6state_MDP_modified}. The restart transition is illustrated by the red arrow.
% \begin{figure}[htbp]
    % \centering
        % \includegraphics[width=0.3\textwidth]{For_Paper/Figures/Example_MDP_restart.pdf}
        % \caption{The modified MDP with $c = s_2$ for Example \ref{example:MDP_SPRcause}.}
        \label{fig:example_6state_MDP_modified}
% \end{figure}

Proposition~\ref{prop:soundness_completeness_restart_modification} ensures that states are classified correctly as causal/non-causal even when using Def.~\ref{def:restart_based_modification} in place of Def.~\ref{def:existing_MDP_modification}.
Noting that $\eventually c$ equals $\neg c \mathrm{U} c$, to capture the policies for which the probability comparison in \textbf{(C1)} of Def.~\ref{def:SPR_cause} is non-trivial, for any $c \in S$, we define
\begin{align}
\label{policy_set_M_c}
    \Pi_{\mathcal{M}, c} = \{ \pi \mid \mathrm{Pr}^\pi_\mathcal{M}(\eventually c) \in (0,1)\}.
\end{align}
\begin{proposition}[Correctness of Restart-based Modification]
    \label{prop:soundness_completeness_restart_modification}
    For any MDP $\mathcal{M}$ and any $c \in S$, if $\Pi_{\mathcal{M}, c} = \emptyset$, then $c$ is not causal, otherwise we have the following properties:
    \begin{enumerate}
        \item If $\mathrm{Pr}^\mathrm{min}_{\mathcal{M}, c}(\eventually E) > \mathrm{Pr}^\mathrm{max}_{\mathcal{M}^{[c]}}(\eventually E)$, then $c$ is causal.
        \item If $\mathrm{Pr}^\mathrm{min}_{\mathcal{M}, c}(\eventually E) < \mathrm{Pr}^\mathrm{max}_{\mathcal{M}^{[c]}}(\eventually E)$, then $c$ is not causal.
        \item If $\mathrm{Pr}^\mathrm{min}_{\mathcal{M}, c}(\eventually E) = \mathrm{Pr}^\mathrm{max}_{\mathcal{M}^{[c]}}(\eventually E)$, then:
        \begin{enumerate}
            \item if $c$ is not reachable from $s_I$ in $\mathcal{M}_\mathrm{max}^{[c]}$, $c$ is causal,
            \item otherwise, $c$ is not causal,
        \end{enumerate}
    \end{enumerate}
    where $\mathcal{M}^{[c]}$ is defined by Def. \ref{def:restart_based_modification} and $\mathcal{M}_\mathrm{max}^{[c]}$ is the sub-MDP of $\mathcal{M}^{[c]}$ induced by $\mathcal{A}^\mathrm{max}(s) = \{ a \in A \mid \mathrm{Pr}^\mathrm{max}_{\mathcal{M}^{[c]}, s}(\eventually E) = \sum_{s' \in S} P^c(s,a,s') \mathrm{Pr}^\mathrm{max}_{\mathcal{M}^{[c]}, s'}(\eventually E) \}$.
\end{proposition}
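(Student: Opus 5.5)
The approach is to reduce \textbf{(C1)} for a singleton $C=\{c\}$ to comparing two conditional probabilities, and then to relate each side to one of the two quantities in the statement. First, \textbf{(C2)} for $\{c\}$ is exactly $\Pi_{\mathcal{M},c}\neq\emptyset$, since $\neg\{c\}\mathrm{U}c=\eventually c$. This gives the first claim immediately. Next, for any $\pi\in\Pi_{\mathcal{M},c}$ with $p=\mathrm{Pr}^\pi_\mathcal{M}(\eventually c)\in(0,1)$, the law of total probability gives
\begin{align}
\mathrm{Pr}^\pi_\mathcal{M}(\eventually E)=p\,\mathrm{Pr}^\pi_\mathcal{M}(\eventually E\mid\eventually c)+(1-p)\,\mathrm{Pr}^\pi_\mathcal{M}(\eventually E\mid\neg\eventually c).
\end{align}
Hence \textbf{(C1)} is equivalent to $\mathrm{Pr}^\pi_\mathcal{M}(\eventually E\mid\eventually c)>\mathrm{Pr}^\pi_\mathcal{M}(\eventually E\mid\neg\eventually c)$ for all $\pi\in\Pi_{\mathcal{M},c}$. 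So $c$ is causal iff $\inf$ (with attainment issues) of the left side exceeds $\sup$ of the right side, taken jointly over policies.

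Second, I would bound both sides. Since $c\notin E$, the strong Markov property gives $\mathrm{Pr}^\pi_\mathcal{M}(\eventually E\mid\eventually c)\ge\mathrm{Pr}^\mathrm{min}_{\mathcal{M},c}(\eventually E)$. By Lemma~\ref{lemma:conditioned_reachability_to_reachability_modified_MDP}, $\mathrm{Pr}^\pi_\mathcal{M}(\eventually E\mid\neg\eventually c)\le\mathrm{Pr}^\mathrm{max}_{\mathcal{M}^{[c]}}(\eventually E)$. This settles item 1.

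For item 2 and for 3(b), I construct a witness policy in $\Pi_{\mathcal{M},c}$ whose two conditionals nearly attain the extremes. The policy acts before $c$ as a memoryless maximizing policy $\sigma$ of $\mathcal{M}^{[c]}$, and switches to a memoryless minimizing policy for $\mathrm{Pr}^\mathrm{min}_{\mathcal{M},c}(\eventually E)$ upon visiting $c$. By the memoryless part of the lemma, the conditional given $\neg\eventually c$ then equals $\mathrm{Pr}^\mathrm{max}_{\mathcal{M}^{[c]}}(\eventually E)$. The conditional given $\eventually c$ equals $\mathrm{Pr}^\mathrm{min}_{\mathcal{M},c}(\eventually E)$. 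It remains to ensure $\mathrm{Pr}(\eventually c)\in(0,1)$. If $\sigma$ reaches $c$ with probability $0$ or $1$, I perturb it: with a small probability $\varepsilon$, at the first step I follow some fixed $\pi_0\in\Pi_{\mathcal{M},c}$ instead. The right-hand conditional then moves by $O(\varepsilon)$, which preserves the strict inequality in item 2. (If $\sigma$ reaches $c$ a.s. in $\mathcal{M}$, then $\sigma$ loops forever in $\mathcal{M}^{[c]}$, its value is $0$, and the case is trivial.)

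For item 3 the comparison is tight, so attainment matters; this is the main obstacle. I would use the standard characterization of optimal behaviour: a policy attains $\mathrm{Pr}^\mathrm{max}_{\mathcal{M}^{[c]},s}$ only if it uses actions in $\mathcal{A}^\mathrm{max}$ almost surely on reachable histories.

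\emph{Case 3(a).} If $c$ is unreachable from $s_I$ in $\mathcal{M}^{[c]}_\mathrm{max}$, then any $\pi\in\Pi_{\mathcal{M},c}$ must, with positive probability, take a non-maximal action at some state before $c$. A strict-loss argument then shows that the corresponding $\tilde\pi$ on $\mathcal{M}^{[c]}$ has value strictly below the maximum. The loss is a positive-probability event times a strictly positive value gap $\mathrm{Pr}^\mathrm{max}_{s}-\sum_{s'}P^c(s,a,s')\mathrm{Pr}^\mathrm{max}_{s'}>0$. This yields the strict inequality for every $\pi$, so $c$ is causal.

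\emph{Case 3(b).} If $c$ is reachable in $\mathcal{M}^{[c]}_\mathrm{max}$, I build a memoryless policy in $\mathcal{M}^{[c]}$ that mixes $\mathcal{A}^\mathrm{max}$-actions leading toward $c$ with an optimal memoryless policy, which avoids end-component traps. Because the restart sends $c$ to $s_I$, whose value is the maximum, this policy still attains $\mathrm{Pr}^\mathrm{max}_{\mathcal{M}^{[c]}}(\eventually E)$ while visiting $c$ with positive probability. Combining it with the minimizing continuation after $c$, as above, gives $\pi\in\Pi_{\mathcal{M},c}$ with equal conditionals, so \textbf{(C1)} fails. I expect handling end components in this last construction, including the degenerate maximum-zero case, to need the most care.
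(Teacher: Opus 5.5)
Your proposal is correct and follows essentially the same route as the paper's proof. The shared steps are: the $(1-p)$-factorization of the \textbf{(C1)} gap together with Lemma~\ref{lemma:conditioned_reachability_to_reachability_modified_MDP} for item 1; a max-before-$c$/min-after-$c$ policy mixed with a member of $\Pi_{\mathcal{M},c}$ for item 2, where the paper realizes your first-step randomization as a behavioral policy via Lemma~\ref{lemma:policy_convex_combination_any_measurable}; the fact that maximizing policies of $\mathcal{M}^{[c]}$ use only $\mathcal{A}^\mathrm{max}$ actions for 3(a); and, for 3(b), a memoryless maximizing policy of $\mathcal{M}^{[c]}$ that visits $c$ with positive probability, plus the value-zero subcase.

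The paper simply asserts that this last policy exists, and your mixing construction does supply it, with one condition and one corrected justification. The memoryless policy you mix in must be optimal from every state; otherwise it may play non-maximal actions at states on your path to $c$. The mixture is optimal not because of the restart, but because all its actions lie in $\mathcal{A}^\mathrm{max}$ and every bottom component of the induced chain outside $E$ is closed under that uniformly optimal policy, so it has value $0$.
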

At a high level, the cases contrast best-policy reachability after visiting $c$ with worst-policy reachability when bypassing $c$ to decide \textbf{(C1)}. In the equality case with $c$ reachable, some policy makes the two conditional probabilities coincide and thus violates \textbf{(C1)}; otherwise \textbf{(C1)} holds.
Proof for Proposition \ref{prop:soundness_completeness_restart_modification} is in Appendix \ref{appendix:proofs}
\begin{remark}
    We mention that the soundness of the restart-based modification remains even if we instead use an existential quantifier for policies in \textbf{(C1)} of Def. \ref{def:SPR_cause}. Please see Proposition \ref{prop:soundness_completeness_restart_modification_for_existential} in Appendix \ref{appendix:proofs}.
\end{remark}
% Proposition \ref{prop:soundness_completeness_restart_modification} allows us to judge if $c$ is causal or not using value iterations on the original MDP and the modified MDP. 

% Our MDP modification reduces the number of transition samples required for state classification compared to Def. \ref{} for the following two reasons. (i) We directly compare $\mathrm{Pr}^\pi_{\mathcal{M}}(\eventually E \mid \eventually c)$ and $ \mathrm{Pr}^\pi_{\mathcal{M}}(\eventually E \mid \neg \eventually c)$, resulting avoid the influence of $( 1 - \mathrm{Pr}^\pi_{\mathcal{M}}(\eventually c))$. (ii) Our modification explicitly avoids relying on maximal reachability probabilities.

For any $\delta \in (0,1)$, we introduce a lower bound $\hat{P}^c: S \times A \times S \to [0,1]$ for $P^c$ of $\mathcal{M}^{[c]}$ such that, for any $(s,a,s') \in S \times A \times S$, 
% $\hat{P}^c_\delta(s,a,s') = \hat{P}_\delta(s,a,s')$ if $s \neq \hat{c}$, $\hat{P}^c_\delta(s,a,s') = 1$ if $s = \hat{c}$ and $s' = s_I$ with $a \in \mathcal{A}(\hat{c})$, and otherwise $0$, where $\hat{P}_\delta$ is defined by (\ref{lower_bound_transition_prob}) for $\mathcal{M}$ with $\mathcal{D}$.
\begin{align}
\label{P_hat_modified_reset}
    \hat{P}^c_\delta(s,a,s') = 
    \begin{cases}
        \hat{P}_\delta(s,a,s')  & \mbox{if } s \neq c,\\
        1 & \mbox{if } s= c, s' = s_I, \\
        0 & \mbox{otherwise, }
    \end{cases}
\end{align}
where $\hat{P}_\delta$ is defined by (\ref{lower_bound_transition_prob}) for $\mathcal{M}$ with $\mathcal{D}$.

We will analyze the number of transition samples required to correctly classify a state as causal or non-causal under both MDP constructions. Our modification can substantially reduce the required number of samples compared with a straightforward adaptation of Def. \ref{def:existing_MDP_modification} to PAC model checking via Def. \ref{def:bounds_operator_reachability}. We first introduce the mixing time of an MDP.
\begin{definition}
\label{def:mixing_time}
    For any MDP $\mathcal{M}$, any policy $\pi$, any $s \in S$, and any $\epsilon > 0$, $\epsilon$-value mixing time from $s$ is $T^\pi_{\mathcal{M}, \epsilon, s} = \min\{ T \mid |\mathrm{Pr}^\pi_{\mathcal{M}, s}(\eventually^{\leq T} E) - \mathrm{Pr}^\pi_{\mathcal{M}, s}(\eventually E)| \leq \epsilon \}$, where $\eventually^{\leq T} E$ denotes the event of reaching $E$ within $T$ steps.
\end{definition}
Intuitively, $T^\pi_{\mathcal{M}, \epsilon, s}$ denotes the minimal time $T$ such that, under $\pi$, the difference between the probability of eventually reaching $E$ and the probability of reaching $E$ within $T$ steps is less than $\epsilon$. For simplicity, let $T^\mathrm{opt}_{\mathcal{M}, \epsilon, s} = T^\pi_{\mathcal{M}, \epsilon, s}$ for any $\pi$ with $\mathrm{Pr}^\pi_{\mathcal{M}}(\eventually E) = \mathrm{Pr}^\mathrm{opt}_{\mathcal{M}}(\eventually E)$, where $\mathrm{opt} \in \{ \mathrm{min}, \mathrm{max} \}$.

Let $\underline{x}^{\mathrm{opt}, *}_\mathcal{M}$ and $\overline{x}^{\mathrm{opt}, *}_\mathcal{M}$ be fixed points of $f^{\mathrm{opt}, \mathrm{min}}_\mathcal{M}$ and $f^{\mathrm{opt}, \mathrm{max}}_\mathcal{M}$, respectively. We denote the maximum number of successors by $d_\mathrm{max} = \max_{(s,a) \in (S \setminus E) \times A} |\mathrm{Post}_{s,a}|$ and the set of probabilistic transitions by $\mathrm{Tr}(\mathcal{M}) = \{ (s,a,s') \in S \times A \times S \mid P(s,a,s') \in (0,1) \}$, respectively. Similar to (\ref{P_hat_modified_reset}), for any $\delta \in (0,1)$, we define a plug-in lower-bound estimate $\hat{P}^c_{\mathrm{mr}, \delta}: S \times A \times S \to [0,1]$ of $P^c_\mathrm{mr}$ in $\mathcal{M}^{[c]}_\mathrm{mr}$ by setting $\hat{P}^c_{\mathrm{mr}, \delta}(s,a,s') = \hat{P}_\delta(s,a,s')$ for any $s \neq c$, and $\hat{P}^c_{\mathrm{mr}, \delta}(c, a, s_\mathrm{eff}) = \underline{x}^{\mathrm{min}, *}_\mathcal{M}(c)$ and $\hat{P}^c_{\mathrm{mr}, \delta}(c, a, s_\mathrm{noeff}) = 1 - \overline{x}^{\mathrm{min}, *}_\mathcal{M}(c)$ for any $a \in \mathcal{A}(c)$.
\begin{theorem}[Sample Complexity]
    \label{thm:sample_efficiency}
    For any MDP $\mathcal{M}$, any random $N$ transition samples $\mathcal{D}$, any $\delta_\mathcal{M}$, and any $c \in S$, if $\Delta_c := \mathrm{Pr}^\mathrm{min}_{\mathcal{M}, c}(\eventually E) - \mathrm{Pr}^\mathrm{max}_{\mathcal{M}^{[c]}}(\eventually E) \neq 0$ and, for any $(s,a) \in (S \setminus E) \times A$, 
    \begin{align}
    \label{required_samples_our_modification}
        N(s,a) \geq K \frac{d_\mathrm{max}^2 T^2}{\Delta_c^2} \ln \frac{2|\mathrm{Tr(\mathcal{M})|}}{\delta_\mathcal{M}},
    \end{align}
    then $\underline{x}^{\mathrm{min}, *}_\mathcal{M}(c) > \overline{x}^{\mathrm{max}, *}_{\mathcal{M}^{[c]}}(s_I)$ or $\overline{x}^{\mathrm{min}, *}_\mathcal{M}(c) < \underline{x}^{\mathrm{max}, *}_{\mathcal{M}^{[c]}}(s_I)$ w. p. at least $1 - \delta_\mathcal{M}$, moreover, if $\Delta_{\mathrm{mr},c} := \mathrm{Pr}^\mathrm{min}_{\mathcal{M}, c}(\eventually E) - \mathrm{Pr}^\mathrm{max}_{\mathcal{M}^{[c]}_\mathrm{mr}}(\eventually E) \neq 0$ and, for any $(s,a) \in (S \setminus E) \times A$, 
    \begin{align}
    \label{required_samples_existing_modification}
        N(s,a) \geq K \frac{d_\mathrm{max}^4 T_\mathrm{mr}^4}{\Delta_{\mathrm{mr},c}^2} \ln \frac{2|\mathrm{Tr(\mathcal{M})|}}{\delta_\mathcal{M}},
    \end{align}
    then $\underline{x}^{\mathrm{min}, *}_\mathcal{M}(c) \!>\! \overline{x}^{\mathrm{max}, *}_{\mathcal{M}^{[c]}_\mathrm{ex}}(s_I)$ or $\overline{x}^{\mathrm{min}, *}_\mathcal{M}(c) \!<\! \underline{x}^{\mathrm{max}, *}_{\mathcal{M}^{[c]}_\mathrm{mr}}(s_I)$ w. p. at least $1 - \delta_\mathcal{M}$, where $T \!=\! \max \{T^\mathrm{min}_{\mathcal{M}, \frac{|\Delta_c|}{K_1}, c}, T^\mathrm{max}_{\mathcal{M}^{[c]}, \frac{|\Delta_c|}{K_1}, s_I} \}$, 
    $T_\mathrm{mr} \!=\! \mathrm{min} \{ T \mid T \geq \mathrm{max} \{T^\mathrm{min}_{\mathcal{M}, \epsilon, c}, T^\mathrm{max}_{\mathcal{M}^{[c]}_\mathrm{mr}, \epsilon, s_I} \} \}$ with $\epsilon = |\Delta_{\mathrm{mr},c}|/(K_1 (T d_\mathrm{max} + 1))$, and $K, K_1 > 0$ are constants.
\end{theorem}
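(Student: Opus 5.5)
The approach is a standard perturbation argument. First, a union bound over transitions and Hoeffding's inequality make $\hat P_\delta$ uniformly accurate. Second, a finite-horizon ($T$-step) simulation argument turns transition-level accuracy into accuracy of the fixed points $\underline{x}^{\cdot,*}$ and $\overline{x}^{\cdot,*}$. Finally, we show that the resulting interval widths are below $|\Delta_c|/2$, so the two intervals separate.

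\textbf{Concentration.} Set $\delta=\delta_\mathcal{M}/(2|\mathrm{Tr}(\mathcal{M})|)$. By the two-sided Hoeffding inequality and a union bound over $\mathrm{Tr}(\mathcal{M})$, with probability at least $1-\delta_\mathcal{M}$ every probabilistic transition satisfies
\[
|N(s,a,s')/N(s,a)-P(s,a,s')|\le \eta:=\sqrt{\ln(1/\delta)/(2N(s,a))}.
\]
Deterministic transitions are exact by (\ref{lower_bound_transition_prob}). On this event $\hat P_\delta\le P$, and the missing mass satisfies $1-\sum_{s'}\hat P_\delta(s,a,s')\le 2d_\mathrm{max}\eta$. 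The same holds for $\hat P^c_\delta$ in (\ref{P_hat_modified_reset}), since the restart transition is deterministic. Consequently, as in \citep{ashok2019pac}, the fixed points are valid bounds:
\[
\underline{x}^{\min,*}_\mathcal{M}(c)\le \mathrm{Pr}^{\min}_{\mathcal{M},c}(\eventually E)\le\overline{x}^{\min,*}_\mathcal{M}(c),
\]
and likewise for $\mathcal{M}^{[c]}$ at $s_I$.

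\textbf{Width bound.} I would show
\[
\overline{x}^{\mathrm{opt},*}(s)-\underline{x}^{\mathrm{opt},*}(s)\le 2\epsilon+2Td_\mathrm{max}\eta
\]
with $\epsilon=|\Delta_c|/K_1$ and $T$ the relevant mixing time. Fix an optimal policy for the true MDP. Each application of $f^{\mathrm{opt},\max}$ or $f^{\mathrm{opt},\min}$ differs from the true Bellman operator only through the redistributed mass, which is at most $2d_\mathrm{max}\eta$ per step. Unrolling $T$ steps therefore gives at most $2Td_\mathrm{max}\eta$ of accumulated error. The tail beyond $T$ contributes at most $\epsilon$ by Def.~\ref{def:mixing_time}.

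This is the main obstacle. The fixed points are infinite-horizon quantities, and the mixing time is defined for the true optimal policy, not for the optimizer of the pessimistic or optimistic operator. To handle this I would compare the $T$-step truncations rather than the fixed points. I would also use that the zero-sets $S^{\mathrm{opt}}_\mathcal{M}$ are computed exactly from the known supports, which rules out spurious end components and lets the tail argument go through for both bounding operators.

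\textbf{Conclusion for $\mathcal{M}^{[c]}$.} Requiring each width to be at most $|\Delta_c|/4$ gives $Td_\mathrm{max}\eta\lesssim|\Delta_c|$. Rearranging yields $N(s,a)\ge K d_\mathrm{max}^2T^2\Delta_c^{-2}\ln(2|\mathrm{Tr}|/\delta_\mathcal{M})$, which is (\ref{required_samples_our_modification}). If $\Delta_c>0$, then
\[
\underline{x}^{\min,*}_\mathcal{M}(c)\ge \mathrm{Pr}^{\min}_{\mathcal{M},c}(\eventually E)-|\Delta_c|/2>\mathrm{Pr}^{\max}_{\mathcal{M}^{[c]}}(\eventually E)+|\Delta_c|/2\ge\overline{x}^{\max,*}_{\mathcal{M}^{[c]}}(s_I).
\]
The case $\Delta_c<0$ is symmetric.

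\textbf{Conclusion for $\mathcal{M}^{[c]}_\mathrm{mr}$.} The plug-in transitions at $c$ are themselves intervals, namely $\underline{x}^{\min,*}_\mathcal{M}(c)$ and $1-\overline{x}^{\min,*}_\mathcal{M}(c)$. These leave missing mass equal to the width of the first estimate, which is $O(Td_\mathrm{max}\eta+\epsilon)$ rather than $O(d_\mathrm{max}\eta)$. Feeding this into the same $T_\mathrm{mr}$-step unrolling compounds the errors, giving a width of order $T_\mathrm{mr}d_\mathrm{max}(T_\mathrm{mr}d_\mathrm{max}\eta+\epsilon)$. The choice $\epsilon=|\Delta_{\mathrm{mr},c}|/(K_1(Td_\mathrm{max}+1))$ is made precisely to absorb the $\epsilon$ term. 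Requiring the width to be at most $|\Delta_{\mathrm{mr},c}|/4$ then gives $T_\mathrm{mr}^2d_\mathrm{max}^2\eta\lesssim|\Delta_{\mathrm{mr},c}|$, hence $N\gtrsim d_\mathrm{max}^4T_\mathrm{mr}^4/\Delta_{\mathrm{mr},c}^2$, which is (\ref{required_samples_existing_modification}). The separation then follows exactly as before.
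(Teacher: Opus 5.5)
Your outline matches the paper's proof. Both use Hoeffding plus a union bound for $\|P-\hat P_\delta\|_\infty\le\varepsilon$, reduce fixed points to $T$-th iterates, accumulate $T d_\mathrm{max}\varepsilon$ error, add an $\epsilon$ tail from the mixing time, and use sign preservation. For $\mathcal{M}^{[c]}_\mathrm{mr}$ both compound the plug-in error $T d_\mathrm{max}\varepsilon+\epsilon$ at $c$ (Lemma~\ref{lemma:transition_probability_bound_modified_MDPs}) over $T_\mathrm{mr}$ steps. The gap is the step you flag as the main obstacle, and your proposed fix does not close it. Part of it the paper handles the way you gesture at, and you should make this explicit. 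Validity plus monotone iteration from a sub-/super-fixpoint initialization sandwiches each fixed point between its $T$-th iterate and the true value (Lemma~\ref{lemma:operators_monotonically_error_decreasing}). The perturbed and true $T$-fold operators are then compared at the operator level via $|\mathrm{opt}_a u_a-\mathrm{opt}_a v_a|\le\max_a|u_a-v_a|$ (Lemmas~\ref{lemma:true_and_approximate_operator_bound} and~\ref{lemma:T_step_operators_bound}). So the optimizer of the pessimistic or optimistic operator never needs to be controlled.

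What remains is the claim that the true-model iterate $(f^{\mathrm{opt}}_\mathcal{M})^T[x_0]$ is within $\epsilon$ of $\mathrm{Pr}^{\mathrm{opt}}$ when $T$ is the mixing time of an optimal policy. This fails for exactly the two quantities needed when $\Delta_c>0$. For instance, $(f^{\min}_\mathcal{M})^T[\underline{x}_0](c)=\min_\pi \mathrm{Pr}^\pi_{\mathcal{M},c}(\eventually^{\le T}E)$ is only bounded \emph{above} by $\mathrm{Pr}^{\pi_{\min}}_{\mathcal{M},c}(\eventually^{\le T}E)$, because a non-optimal action can delay absorption. A long transient detour is not an end component, so exact zero sets do not help. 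The paper's proof makes the same identification without justification, and the separation itself can fail. Give $c$ one action reaching $E$ in one step with probability $p$ (otherwise a sink). Give it a second action entering a chain of $L$ probabilistic steps, each leaking a little mass to the sink, with overall success probability slightly above $p$. The chain is reachable only through $c$, so it is cut off in $\mathcal{M}^{[c]}$; choose the rest of $\mathcal{M}$ so that $\Delta_c>0$. The min-optimal action at $c$ is the first one, so $T$ does not depend on $L$, and neither does your $\eta$ at the prescribed sample size (of order $|\Delta_c|/(d_\mathrm{max}T)$). Yet the pessimistic value along the chain loses an extra factor of roughly $1-\eta$ per step. For large $L$ this gives $\underline{x}^{\min,*}_\mathcal{M}(c)<\mathrm{Pr}^{\max}_{\mathcal{M}^{[c]}}(\eventually E)\le\overline{x}^{\max,*}_{\mathcal{M}^{[c]}}(s_I)$, while $\overline{x}^{\min,*}_\mathcal{M}(c)\ge\mathrm{Pr}^{\min}_{\mathcal{M},c}(\eventually E)>\underline{x}^{\max,*}_{\mathcal{M}^{[c]}}(s_I)$. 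Your argument, and the paper's, goes through once $T$ is made policy-uniform. One choice is the number of true-model value-iteration steps from the initial bounds needed for accuracy $\epsilon$, and likewise for $T_\mathrm{mr}$. (Minor: with widths at most $|\Delta_c|/4$, your final display should use $|\Delta_c|/4$; with $|\Delta_c|/2$ its middle inequality is an equality, not strict.)
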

Proof for Theorem \ref{thm:sample_efficiency} is in Appendix \ref{appendix:proofs}.
Combined with Proposition \ref{prop:soundness_completeness_restart_modification}, the respective lower bound in (\ref{required_samples_our_modification}) and (\ref{required_samples_existing_modification}) acts as a number of samples required for correct classification of states as causal or non-causal.

\begin{remark}[Comparison of Sample Complexities]
% Although both bounds (\ref{required_samples_our_modification}) and (\ref{required_samples_existing_modification}) in Theorem \ref{thm:sample_efficiency} involve the common mixing time $T^{\min}_{\mathcal{M},\cdot,c}$,
The restart-based transformation $\mathcal{M}^{[c]}$ can yield more favorable sample-complexity bound than
the existing modification $\mathcal{M}^{[c]}_{\mathrm{mr}}$.
First, $\mathcal{M}^{[c]}_{\mathrm{mr}}$ embeds the unknown reachability probability
$\Pr^{\min}_{\mathcal{M},c}(\Diamond E)$ into the transition probabilities from $c$.
So, estimation errors enter twice through learning the original transitions and
through plugging an estimated reachability probability back into the transformed MDP.
This explains the stronger $d_{\max}^4T_{\mathrm{mr}}^4$ scaling,
compared with $d_{\max}^2T^2$ for $\mathcal{M}^{[c]}$.
Second, this plug-in construction imposes a stricter finite-horizon accuracy requirement,
so that $T_{\mathrm{mr}}$ is evaluated at a smaller tolerance
($( T_\mathrm{mr} d_\mathrm{max} + 1) \epsilon = |\Delta_{\mathrm{mr},c}|/K_1$ instead of $\epsilon = |\Delta_c|/K_1$), which can further increase $T_{\mathrm{mr}}$.
Finally, when $c$ is causal, a policy that achieves
$\Pr^{\max}_{\mathcal{M}^{[c]}_{\mathrm{mr}}}(\Diamond E)$ tends to steer towards $c$,
which can severely reduce $\Delta_{\mathrm{mr},c}$,
whereas $\mathcal{M}^{[c]}$ emphasizes the gap between reaching $E$ when passing through $c$ and when bypassing $c$.
% Formal details are in Appendix.
\end{remark}

% To ensure the convergence of the upper bounds in the non-trivial and non-bottom MECs to the true fixed points, we define a deflating operation as follows.
% \begin{definition}
% \label{deflating_operation}
    % For any MDP $\mathcal{M}$, any function $x : S \to [0,1]$, and a subset $S'$ of $S$, we define $d: [0,1]^S \to [0,1]^S$ such that
    % \begin{align}
        % d[x](s) = \min(x(s), \max_{s' \in \partial S'}x(s')),
    % \end{align}
    % where $\partial S' = \{s' \not\in S' \;|\; \exists (s,a,s') \mbox{ s.t. } P(s,a,s')>0 \land s \in S'\}$.
% \end{definition}

\subsection{Probably Approximately Correct SPR Cause Learning}
\label{subsection:learning_algorithm}
We provide a solution to Problem \ref{problem} in Algorithm \ref{alg:PACIdentification}. We first precompute a set of predetermined non-causal states (Line 2), then iteratively collect $k$ transition samples and update the lower/upper bounds for $\mathrm{Pr}^\mathrm{min}_{\mathcal{M}, s}(\eventually E)$ and $\mathrm{Pr}^\mathrm{max}_{\mathcal{M}^{[c]}}(\eventually E)$ (Lines 4-6). Using Proposition \ref{prop:soundness_completeness_restart_modification}, each remaining candidate state $c$ is progressively assigned to $C_\top$ or $C_\bot$ when the corresponding bound intervals certify the ordering (Lines 7-12). With introducing a margin $\tau \geq 0$, the procedure supports a three-way outcome (causal/non-causal/undecided): for $\tau > 0$, a still-unclassified state $c$ is assigned to $C_?$ once the current confidence interval of its reachability probability gap is entirely included by $[-\tau, \tau]$, capturing near-tie cases. In practice, we optionally stop bound updates for $c$ and decide its class as soon as the compared intervals become disjoint, since the ordering is then fixed by monotonicity of (\ref{opt12_update}) (see Lemma \ref{lemma:monotone_continuos} in Appendix \ref{appendix:proofs}). We describe a practical initialization and an early stopping scheme in Appendix \ref{appendix:initializtion_early_stopping}. After all candidates have been assigned, we re-check the classifications using Lines 6-12 for all classified states; % $c \in C_\top \cup C_\bot \cup C_?$; 
if $(C_\top, C_\bot, C_?)$ is unchanged, we return $C^*$ as the ``front'' of $C_\top$ to satisfy Def. \ref{def:SPR_cause}, and otherwise we resume learning.

\begin{algorithm}[h]
\caption{PAC Cause Identification}
\label{alg:PACIdentification}
\KwIn{Unknown MDP $\mathcal{M}$, $\delta_\mathcal{M} > 0$, $k > 0$, $\tau \geq 0$.}
\KwOut{$C^* \subset S$}

$ \mathcal{D}, C_\top, C_\bot, C_? \gets \emptyset $, and $ \delta \gets \frac{\delta_\mathcal{M}}{\mathrm{Tr}(\mathcal{M})} $.\\
$ C^\mathsf{pre}_\bot \gets \{ c \in S \mid \Pi_{\mathcal{M}, c} = \emptyset \} \cup E$.\\ 
% Compute MEC decomposition $S = \biguplus_{k=1}^K S_k \uplus \{t_l\}_{l=1}^L \uplus \biguplus_{m=1}^M B_m \uplus E$.
\Repeat{$S \setminus (C_\top \cup C_\bot \cup C_? \cup C^\mathrm{pre}_\bot) = \emptyset$}{
    % $ \mathcal{D}' \gets \mathrm{TransSample}(\mathcal{M}, k) $ and $ \mathcal{D} \gets \mathcal{D} \cup \mathcal{D}' $.\\
    Sample $k$ transitions in $\mathcal{M}$ and add them to $\mathcal{D}$.\\
    % \tcp{Initialize bounds}
    Initialize lower and upper bounds: $[\underline{x}^\mathrm{min}_\mathcal{M},  \overline{x}^\mathrm{min}_\mathcal{M}]$, and for all $c \in S \setminus C^\mathrm{pre}_\bot$, $[\underline{x}^\mathrm{max}_{\mathcal{M}^{[c]}},  \overline{x}^\mathrm{max}_{\mathcal{M}^{[c]}}]$. \\
    Compute the fixed points for (\ref{opt12_update}).\\
    % $\underline{x}^{\mathrm{min}, *}_\mathcal{M}, \overline{x}^{\mathrm{max}, *}_{\mathcal{M}^{[c]}}, \overline{x}^{\mathrm{min},*}_\mathcal{M}, \underline{x}^{\mathrm{max},*}_{\mathcal{M}^{[c]}} $ for (\ref{opt12_update}) with the initialized bounds.\\
    % $\forall s,c \in S,\ x^l(s) \gets 0,\ x^u(s) \gets 1, y^l_c(s) \gets 0,\ y^u_c(s) \gets 1$.\\
    % $\forall s \in E, \forall s' \in S^\mathrm{min}_{\mathcal{M}},\ x^l(s) \gets 1,\ x^u(s') \gets 0$.\\
    % $\forall c \in S, \forall s \in E, \forall s' \in S^\mathrm{max}_{\mathcal{M}^{[c]}}, \ y^l_c(s) \gets 1,\ y^u_c(s') \gets 0$.\\
    % $\forall s \in S \setminus ( E \cup S^\mathrm{min}_{\mathcal{M},0} ),\ x^l(s) \gets 0,\ x^u(s) \gets 1$.\\
    \For{$c \in S \setminus (C_\top \cup C_\bot \cup C_? \cup C^\mathrm{pre}_\bot)$}{
        Add $c$ to $C_\top$ if $\underline{x}^{\mathrm{min},*}_\mathcal{M}(c) > \overline{x}^{\mathrm{max},*}_{\mathcal{M}^{[c]}}(s_I) $. \\
        Add $c$ to $C_\bot$ if $\overline{x}^{\mathrm{min},*}_\mathcal{M}(c) < \underline{x}^{\mathrm{max},*}_{\mathcal{M}^{[c]}}(s_I) $. \\
        % \If{$\exists c_\top \in C_\top$ s.t. $\forall \rho, \rho \models \eventually c_\top \Leftrightarrow \eventually c$}{
            % Add $c$ to $C_\top$.
        % }
        \If{$[\underline{x}^{\mathrm{min},*}_\mathcal{M}(c) - \overline{x}^{\mathrm{max},*}_{\mathcal{M}^{[c]}}(s_I) , \overline{x}^{\mathrm{min},*}_\mathcal{M}(c) - \underline{x}^{\mathrm{max},*}_{\mathcal{M}^{[c]}}(s_I)] \subseteq [-\tau, \tau]$ and $c \not\in C_\top \cup C_\bot$}{
            $\tau > 0$: Add $c$ to $C_?$.\\
            $\tau = 0$: Add $c$ to $C_\top$ if 3-(a) in Proposition \ref{prop:soundness_completeness_restart_modification} is satisfied, otherwise add $c$ to $C_\bot$.
        }
    }
}
\If{ $(C_\top, C_\bot, C_?)$ remains unchanged under $\mathcal{D}$}{
    \textbf{Return} $C^* = \{ c \in C_\top \mid \exists \rho \text{ s.t. } \rho \models \neg C_\top \mathrm{U} c \}$.   
}
Go back to Line 4 with $C_\top, C_\bot, C_? \gets \emptyset$.

\end{algorithm}

In the following Example \ref{example:PACIdentification_MDP}, we show how well Algorithm \ref{alg:PACIdentification} finds an SPR cause that satisfies (\ref{PAC-PR-recall_optimal}).
\begin{example}
\label{example:PACIdentification_MDP}
    We applied Algorithm \ref{alg:PACIdentification} to the MDP in Example \ref{example:MDP_SPRcause}. Let $\delta_\mathcal{M} = 0.05$ so that the found SPR cause is guaranteed w. p. at least $0.95$, $\tau=0$ to classify all states, and sample $k=100$ transitions at each iteration by selecting $k$ state-action pairs uniformly at random. We conducted $30$ experimental runs. In every run, we consistently found $s_3$ as a causal state first and then obtained $C^* = \{s_2, s_3\}$. This is because $s_3$ has a higher difference in probabilities in terms of \textbf{(C1)} in Def. \ref{def:SPR_cause}. Algorithm \ref{alg:PACIdentification} converged within $130$ iterations per run.
\end{example}

Theorem \ref{thm:Algorithm1_is_solution} shows that Algorithm \ref{alg:PACIdentification} is a solution to Problem \ref{problem}. Theorem \ref{thm:Algorithm1_is_anytime} guarantees an anytime correctness, that is, $C_\top$ and $C_\bot$ in Algorithm \ref{alg:PACIdentification} are always a correct set of causal and non-causal states, respectively, and $C_?$ contains only states with near-tie cases, with a high probability.

\begin{theorem}
\label{thm:Algorithm1_is_solution}
    For any MDP $\mathcal{M}$, any $\delta_\mathcal{M} > 0$, and any $\tau \geq 0$, $C^*$ obtained from Algorithm \ref{alg:PACIdentification} satisfies (\ref{PAC-PR-recall_optimal}) for all SPR causes included by $S \setminus C_?$ w. p. at least $1 - \delta_\mathcal{M}$. Moreover, if $\tau = 0$, Algorithm \ref{alg:PACIdentification} is a solution to Problem \ref{problem}.
\end{theorem}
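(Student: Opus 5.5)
We first fix a good event and argue deterministically on it. Let $G$ be the event that $\hat{P}_\delta \leq P$ holds for every transition in $\mathrm{Tr}(\mathcal{M})$. With $\delta = \delta_\mathcal{M}/|\mathrm{Tr}(\mathcal{M})|$, Hoeffding's inequality gives that each single inequality fails with probability at most $\delta$, since the confidence radius in (\ref{lower_bound_transition_prob}) is chosen for exactly this. A union bound over $\mathrm{Tr}(\mathcal{M})$ then yields $\Pr(G) \geq 1-\delta_\mathcal{M}$. Non-probabilistic transitions (deterministic ones or absent ones) are estimated exactly by construction.

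The main obstacle is that the sample set grows adaptively, so the Hoeffding bound must hold uniformly across iterations. I would handle this as in \citep{ashok2019pac}: invoke their argument for adaptively collected samples, or use a sample-count-indexed union bound and absorb it into the constants. This is the step I would need to check most carefully.

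On $G$, the monotonicity of the operators (Lemma \ref{lemma:monotone_continuos}) together with the result of \citep{ashok2019pac} gives valid bounds. The fixed points satisfy $\underline{x}^{\min,*}_\mathcal{M}(c) \leq \Pr^{\min}_{\mathcal{M},c}(\eventually E) \leq \overline{x}^{\min,*}_\mathcal{M}(c)$. The same argument applied to $\mathcal{M}^{[c]}$ via $\hat{P}^c_\delta$ gives $\underline{x}^{\max,*}_{\mathcal{M}^{[c]}}(s_I) \leq \Pr^{\max}_{\mathcal{M}^{[c]}}(\eventually E) \leq \overline{x}^{\max,*}_{\mathcal{M}^{[c]}}(s_I)$; here the restart transitions from $c$ are exact. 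For $\mathcal{M}^{[c]}$, $\hat{P}^c_\delta \leq P^c$ follows from $G$, so the same good event covers all $c$ simultaneously with no extra union bound.

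Next we check that the classification is correct on $G$. States in $C^\mathsf{pre}_\bot$ are non-causal: for $c \in E$ this holds by Definition \ref{def:SPR_cause}, and for $\Pi_{\mathcal{M},c}=\emptyset$ by Proposition \ref{prop:soundness_completeness_restart_modification}. If Line 8 fires, the true gap $\Delta_c > 0$, so $c$ is causal by item 1 of Proposition \ref{prop:soundness_completeness_restart_modification}. If Line 9 fires, $\Delta_c < 0$ and $c$ is non-causal by item 2. For $\tau = 0$, the interval condition forces $\Delta_c = 0$, so item 3 applies. Checking 3-(a) needs only the support graph of $\mathcal{M}^{[c]}_{\max}$; I expect this to require the interval being a point and the max-optimal actions being identifiable, which is the same subtlety the algorithm's design already presumes. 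For $\tau>0$, states in $C_?$ are simply excluded. Hence $C_\top$ is exactly the set of causal states in $S\setminus C_?$, and $C_\bot \cup C^\mathsf{pre}_\bot$ contains only non-causal states.

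Finally we transfer optimality from singletons to sets. First, $C^*$, the front of $C_\top$, is an SPR cause. For $c \in C^*$, (C2) holds by the existence of a path $\rho \models \neg C_\top \mathrm{U} c$ combined with $\Pi_{\mathcal{M},c}\neq\emptyset$. For (C1), I would use the lemma from \citep{baier2022probability} that a set whose elements are singleton SPR causes, restricted to elements satisfying (C2), is an SPR cause.

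Second, $C^*$ has optimal coverage. Any SPR cause $C \subseteq S\setminus C_?$ consists of causal singletons (again via \citep{baier2022probability}), so $C \subseteq C_\top$. Every path visiting $C$ visits $C_\top$, and hence visits its first $C_\top$-state, which lies in $C^*$. Therefore $\Pr^\pi(\eventually C^* \mid \eventually E) \geq \Pr^\pi(\eventually C\mid\eventually E)$ for every $\pi$, which gives (\ref{PAC-PR-recall_optimal}) on $G$. For $\tau=0$ we have $C_?=\emptyset$, so this covers all SPR causes and Algorithm \ref{alg:PACIdentification} solves Problem \ref{problem}.
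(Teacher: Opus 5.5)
Your skeleton matches the paper's proof. You fix the event $\hat P_\delta\le P$, which has probability at least $1-\delta_\mathcal{M}$ by Lemma~\ref{lemma:transition_lower_bound}. On it, the fixed points bracket the true values, so Proposition~\ref{prop:soundness_completeness_restart_modification} makes $C_\top$ and $C_\bot$ correct. You then conclude that every SPR cause in $S\setminus C_?$ lies in $C_\top$, and get (\ref{PAC-PR-recall_optimal}) because every path visiting $C$ first meets $C_\top$ in a state of $C^*$. This is Lemma~\ref{lemma:Algorithm1_is_anytime_under_probability_bound} plus conditioning.

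The paper, however, only proves (\ref{PAC-PR-recall_optimal}) and never shows that $C^*$ is an SPR cause. The step you add to cover this is false for Definition~\ref{def:SPR_cause}. Take the following MDP:
\begin{itemize}
\item action $\alpha$ leads from $s_I$ to $c'$;
\item action $\beta$ leads from $s_I$ to $c$ or to a sink, with probability $1/2$ each;
\item $c'$ moves to $c$;
\item at $c$, action $x$ (resp.\ $y$) reaches $E$ with probability $0.9$ (resp.\ $0.5$), and otherwise the sink.
\end{itemize}
Then $\{c\}$ is causal: $E$ is unreachable while bypassing $c$, so $\Delta_c=0.5$. Also $\{c'\}$ is causal, since $\mathrm{Pr}^{\min}_{\mathcal{M},c'}(\eventually E)=0.5>0.45=\mathrm{Pr}^{\max}_{\mathcal{M}^{[c']}}(\eventually E)$. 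All other states are non-causal, and both $c$ and $c'$ lie in the front, so $C^*=\{c,c'\}$. Now consider the policy that picks $\alpha,\beta$ with probability $1/2$ each and, at $c$, plays $x$ iff $c'$ was visited. It gives $\mathrm{Pr}(\neg C^*\mathrm{U}c)=1/4$ and $\mathrm{Pr}(\eventually E\mid\neg C^*\mathrm{U}c)=0.5<0.575=\mathrm{Pr}(\eventually E)$, so \textbf{(C1)} fails. A history-dependent adversary can treat $c$ differently depending on whether another cause state came first, and no singleton test sees this. So the SPR property of $C^*$, needed for ``solution to Problem~\ref{problem}'', cannot be derived from singleton causality. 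In fact, whenever Algorithm~\ref{alg:PACIdentification} halts on the good event with $\tau<0.05$, it outputs exactly this non-cause.

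The other direction you cite is that every element of an SPR cause is a causal singleton; the paper uses the same inference when it asserts $C\subseteq C_\top$ ``by property 2 and Proposition~\ref{prop:soundness_completeness_restart_modification}''. Under the $(0,1)$ convention this also fails, in the tie case. Take $\alpha\colon s_I\to c'$, $\beta\colon s_I\to c$ or sink ($1/2$ each), $c'\to c,E,\text{sink}$ with probabilities $0.5,0.3,0.2$, and $c\to E$ with probability $0.6$. Then $\{c,c'\}$ is an SPR cause, but $\{c\}$ is not: under $\alpha$, $\mathrm{Pr}(\eventually E\mid\eventually c)=\mathrm{Pr}(\eventually E)=0.6$, so $\Delta_c=0$ and case 3(b) applies.
\begin{itemize}
\item For $\tau>0$ you can repair this. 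Prove directly that every element $c$ of an SPR cause has $\Pi_{\mathcal{M},c}\neq\emptyset$ and $\Delta_c\ge 0$: play an optimal policy of $\mathcal{M}^{[c]}$ until $c$, then a minimizing one, and mix with a \textbf{(C2)}-witness. Tie states then end up in $C_?$.
\item For $\tau=0$ the inclusion $C\subseteq C_\top$ genuinely breaks.
\end{itemize}

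Finally, your adaptivity concern is legitimate; the paper simply applies Lemma~\ref{lemma:transition_lower_bound} to the final, data-dependent $\mathcal{D}$. But there are no constants into which a union bound over sample counts could be absorbed. $\delta$ is fixed in Line~1 and the stated confidence is exactly $1-\delta_\mathcal{M}$, so a time-uniform bound would require changing the algorithm's $\delta$.
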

% Obviously, if $\tau = 0$, Algorithm \ref{alg:PACIdentification} is a solution to Problem \ref{problem}

\begin{theorem}[Anytime Correctness]
\label{thm:Algorithm1_is_anytime}
    For any MDP $\mathcal{M}$, any $\delta_\mathcal{M} > 0$, any $\tau \geq 0$, and any multi-sample $\mathcal{D}$ of transitions, $C_\top$, $C_\bot$, and $C_?$ computed in Algorithm \ref{alg:PACIdentification}, respectively, satisfy the following properties:
    \begin{enumerate}
        \item $\forall c \in C_\top, \{c\} \models \mathbf{(C1)}$ w. p. at least $1 - \delta_\mathcal{M}$,
        \item $\forall c \in C_\bot, \{c\} \not \models \mathbf{(C1)}$ w. p. at least $1 - \delta_\mathcal{M}$,
        \item $\forall c \in C_?, |\mathrm{Pr}^\mathrm{min}_{\mathcal{M}}(\eventually E | \eventually c) - \mathrm{Pr}^\mathrm{max}_\mathcal{M}(\eventually E | \neg \eventually c)| \leq \tau $ w. p. at least $1 - \delta_\mathcal{M}$.
        % \item $C^*$ satisfies (\ref{PAC-PR-recall_optimal}).
    \end{enumerate}
\end{theorem}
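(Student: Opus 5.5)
The plan is to condition everything on a single good event $\mathcal{G}$ and then reduce each of the three claims to a deterministic consequence of the bound ordering, via Proposition \ref{prop:soundness_completeness_restart_modification} and Lemma \ref{lemma:conditioned_reachability_to_reachability_modified_MDP}.

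\textbf{Step 1: the good event.} Let $\mathcal{G}$ be the event that $\hat{P}_\delta \leq P$ holds for every transition in $\mathrm{Tr}(\mathcal{M})$. Transitions outside $\mathrm{Tr}(\mathcal{M})$ are deterministic or impossible, and there $\hat P_\delta$ is exact (value $1$ or $0$). For a fixed $(s,a,s')\in\mathrm{Tr}(\mathcal{M})$, Hoeffding's inequality gives failure probability at most $\delta$. A union bound with $\delta=\delta_\mathcal{M}/|\mathrm{Tr}(\mathcal{M})|$ (Line 1) then gives $\Pr(\mathcal{G})\geq 1-\delta_\mathcal{M}$.

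\textbf{Step 2: the bounds are sound on $\mathcal{G}$.} Since $c$'s row is exact in (\ref{P_hat_modified_reset}), on $\mathcal{G}$ we also have $\hat P^c_\delta\leq P^c$ for every $c$. I then invoke the monotonicity and soundness of the operators in Definition \ref{def:bounds_operator_reachability}, as in \citep{ashok2019pac} and Lemma \ref{lemma:monotone_continuos}. The residual mass $1-\sum \hat P$ is assigned to the worst (respectively best) successor, so any true transition distribution dominating $\hat P$ lies between the two extremes. Hence on $\mathcal{G}$ the fixed points computed in Line 6 satisfy
\[
\underline{x}^{\min,*}_\mathcal{M}(c)\leq \Pr^{\min}_{\mathcal{M},c}(\eventually E)\leq \overline{x}^{\min,*}_\mathcal{M}(c),
\]
and the analogous bracketing holds for $\Pr^{\max}_{\mathcal{M}^{[c]}}(\eventually E)$ at $s_I$. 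This must hold simultaneously for all $c$, which is immediate because all of them are functions of the same event $\mathcal{G}$.

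\textbf{Step 3: the three claims on $\mathcal{G}$.}

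For claim 1, a state enters $C_\top$ either through Line 8 or through the $\tau=0$ tie branch. In the Line 8 case, Step 2 gives $\Pr^{\min}_{\mathcal{M},c}(\eventually E)>\Pr^{\max}_{\mathcal{M}^{[c]}}(\eventually E)$, and since $c\notin C^\mathrm{pre}_\bot$ we have $\Pi_{\mathcal{M},c}\neq\emptyset$, so Proposition \ref{prop:soundness_completeness_restart_modification}(1) applies. In the tie branch with $\tau=0$, the interval containing the true gap is contained in $[0,0]$, so the gap is exactly $0$ and case 3(a) applies. Claim 2 follows symmetrically from case 2 and case 3(b). States in $C^\mathrm{pre}_\bot$ are non-causal by the first clause of the proposition, but they are not in $C_\bot$ anyway.

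For claim 3, the true gap lies in the certified interval, which is contained in $[-\tau,\tau]$. It remains to identify that gap with the quantity in the claim. Because $c$ is reached at most once before it is re-entered, the Markov property gives $\Pr^{\min}_{\mathcal{M},c}(\eventually E)=\min_\pi \Pr^\pi_\mathcal{M}(\eventually E\mid\eventually c)$, since the behavior after $c$ can be chosen freely. By Lemma \ref{lemma:conditioned_reachability_to_reachability_modified_MDP}, together with optimality of memoryless policies on $\mathcal{M}^{[c]}$, we get $\Pr^{\max}_{\mathcal{M}^{[c]}}(\eventually E)=\sup_\pi\Pr^\pi_\mathcal{M}(\eventually E\mid\neg\eventually c)$.

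\textbf{Main obstacle.} The hardest step is this last identification. The lemma gives only the direction ``conditional value is at most the $\mathcal{M}^{[c]}$ value''; the converse requires showing that a memoryless optimal policy of $\mathcal{M}^{[c]}$ with $\Pr(\neg\eventually c)>0$ realizes the conditional value. I would handle the degenerate case, where $c$ is reached almost surely under the optimal policy, by noting that in $\mathcal{M}^{[c]}$ such a policy then never reaches $E$, so its value is $0$. This case must be matched with the paper's convention for the $\max$-conditional.

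A secondary subtlety is that the repeated re-checking in the algorithm uses data-dependent stopping. Because $\mathcal{G}$ is required for whatever $\mathcal{D}$ is current, the statement is per-$\mathcal{D}$, as phrased in the theorem, so no additional union over iterations is needed.
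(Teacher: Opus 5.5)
\textbf{Verdict.} Your core argument is the paper's, but your closing remark about iterations is wrong.

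\textbf{Where you match or improve on the paper.} The good event $\hat P_\delta\le P$ via Hoeffding and a union bound (Lemma \ref{lemma:transition_lower_bound}), soundness of the interval iterates on that event (Lemma \ref{lemma:correct_update}), and then Proposition \ref{prop:soundness_completeness_restart_modification} are exactly the content of Lemma \ref{lemma:Algorithm1_is_anytime_under_probability_bound}. You are more careful than the paper in two places:
\begin{enumerate}
\item You treat the $\tau=0$ tie branch, which the paper's lemma never discusses.
\item You justify identifying $\Pr^{\min}_{\mathcal{M},c}(\eventually E)$ and $\Pr^{\max}_{\mathcal{M}^{[c]}}(\eventually E)$ with the conditional extremes in property 3. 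You use the first-visit decomposition plus the memoryless clause of Lemma \ref{lemma:conditioned_reachability_to_reachability_modified_MDP}, and the almost-sure-visit case gives value $0$. The paper simply asserts this identification.
\end{enumerate}

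\textbf{The step that fails.} You claim that a single event $\mathcal{G}$ for the current $\mathcal{D}$ suffices, with no accounting over iterations. This does not hold for Algorithm \ref{alg:PACIdentification}.
\begin{itemize}
\item The classification loop (Lines 7--12) only ranges over still-unclassified states. So at round $j$ the set $C_\top$ (likewise $C_\bot$, $C_?$) contains states certified in earlier rounds $i<j$, using $\hat P_\delta$ computed from $\mathcal{D}_i$. These are not re-examined until the final re-check.
\item $\hat P_\delta$ is not monotone in the data: an early empirical overestimate can later regress. So $\hat P_\delta(\mathcal{D}_j)\le P$ does not imply $\hat P_\delta(\mathcal{D}_i)\le P$.
\item Your Step 3 in fact uses the bounds from the round of entry. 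The event it needs is therefore $\mathcal{G}(\mathcal{D}_i)$, not $\mathcal{G}(\mathcal{D}_j)$.
\end{itemize}

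\textbf{What is actually needed.} The paper at least ties each $c$ to the round $i$ at which it was assigned and invokes the good event for $\mathcal{D}_i$. Strictly, that round is itself data-dependent. A wrong classification of $c$ by round $j$ is therefore only controlled by $\Pr\bigl(\bigcup_{i\le j}\neg\mathcal{G}(\mathcal{D}_i)\bigr)$. That requires a union bound over rounds, or a time-uniform confidence bound (for example, splitting $\delta_\mathcal{M}$ across rounds). This is the opposite of your conclusion, and a point the paper also glosses over. Your single-event argument covers only classifications computed from one data set, such as those made in the current round or recertified by the final re-check.
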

Proofs for Theorems \ref{thm:Algorithm1_is_solution} and \ref{thm:Algorithm1_is_anytime} are in Appendix \ref{appendix:proofs}.
Almost sure convergence to a true SPR cause that satisfies (\ref{PAC-PR-recall_optimal}) when the amount of data goes to infinity is also guaranteed under some conditions (see Propositions \ref{prop:fixed_point} and \ref{prop:convergence_property} in Appendix \ref{appendix:proofs}).

\begin{remark}[On Weaker Assumptions for MDPs]
The above guarantees rely on knowing the successor supports of $\mathcal{M}$, but this assumption can be relaxed. Def. \ref{def:restart_based_modification} uses none of $c$'s successor support: it resets every action at $c$ to $s_I$ regardless of $c$'s original outgoing transitions, and it uses no reachability value of $\mathcal{M}$. Hence, the construction is unaffected if we weaken ``successor supports are known'' to, e.g., ``only a lower bound on the minimum nonzero transition probability is known'' \citep{ashok2019pac}. Algorithm \ref{alg:PACIdentification} is still applicable through a black-box two-sided value iteration \citep{ashok2019pac}. In contrast, Def. \ref{def:existing_MDP_modification} requires reachability information of the original MDP, so weakening the assumption loosens its estimated bounds and further hampers the separation check for \textbf{(C1)}.
\end{remark}

\section{Examples}
\begin{figure}[htbp]
  	\centering
  	\begin{subfigure}{0.45\linewidth}
  		\includegraphics[width=\linewidth]{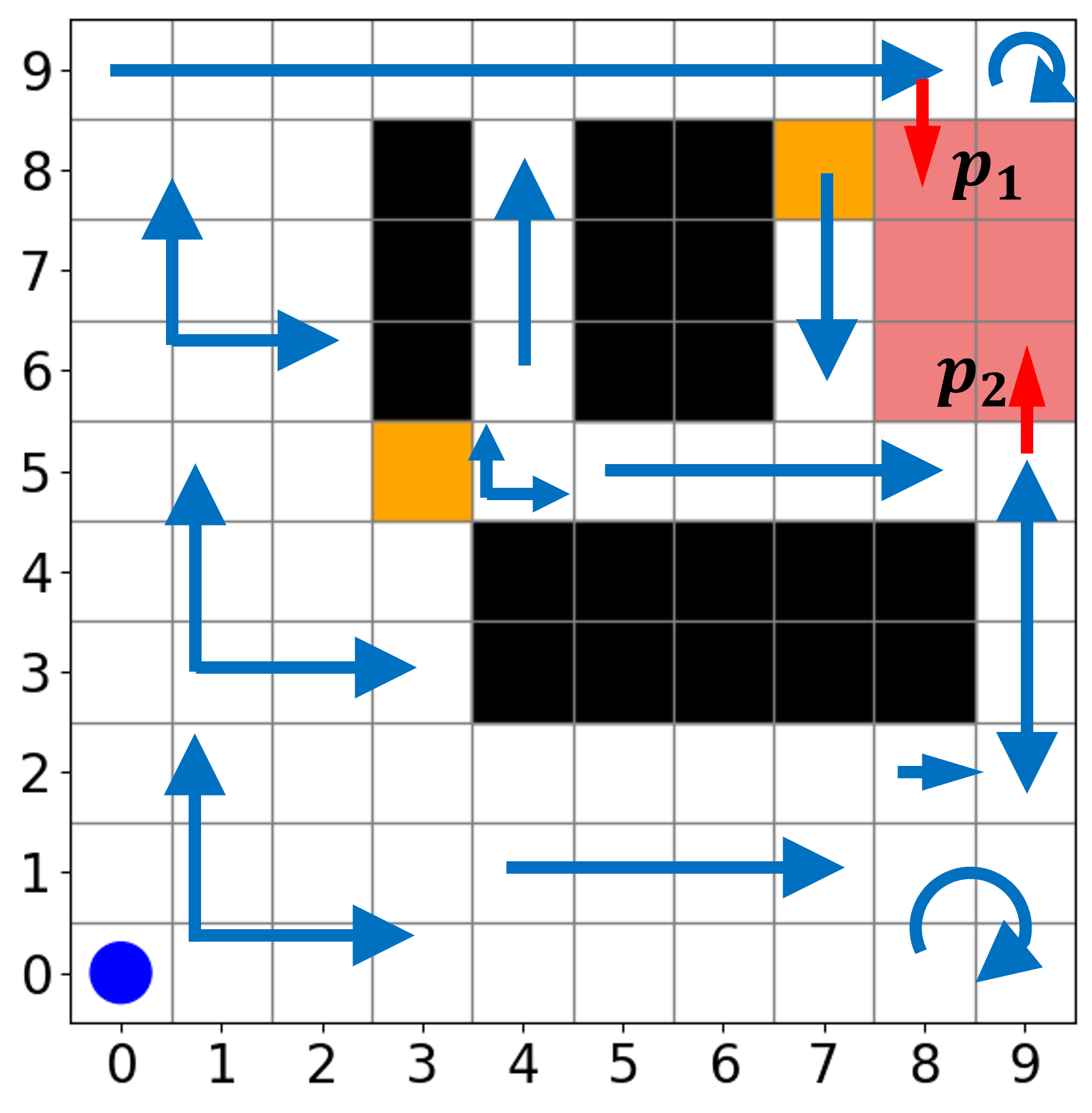}
        \caption{Nondeterministic planning}
        \label{ex:path_planning}
    \end{subfigure}
  	\begin{subfigure}{0.45\linewidth}
  		\includegraphics[width=\linewidth]{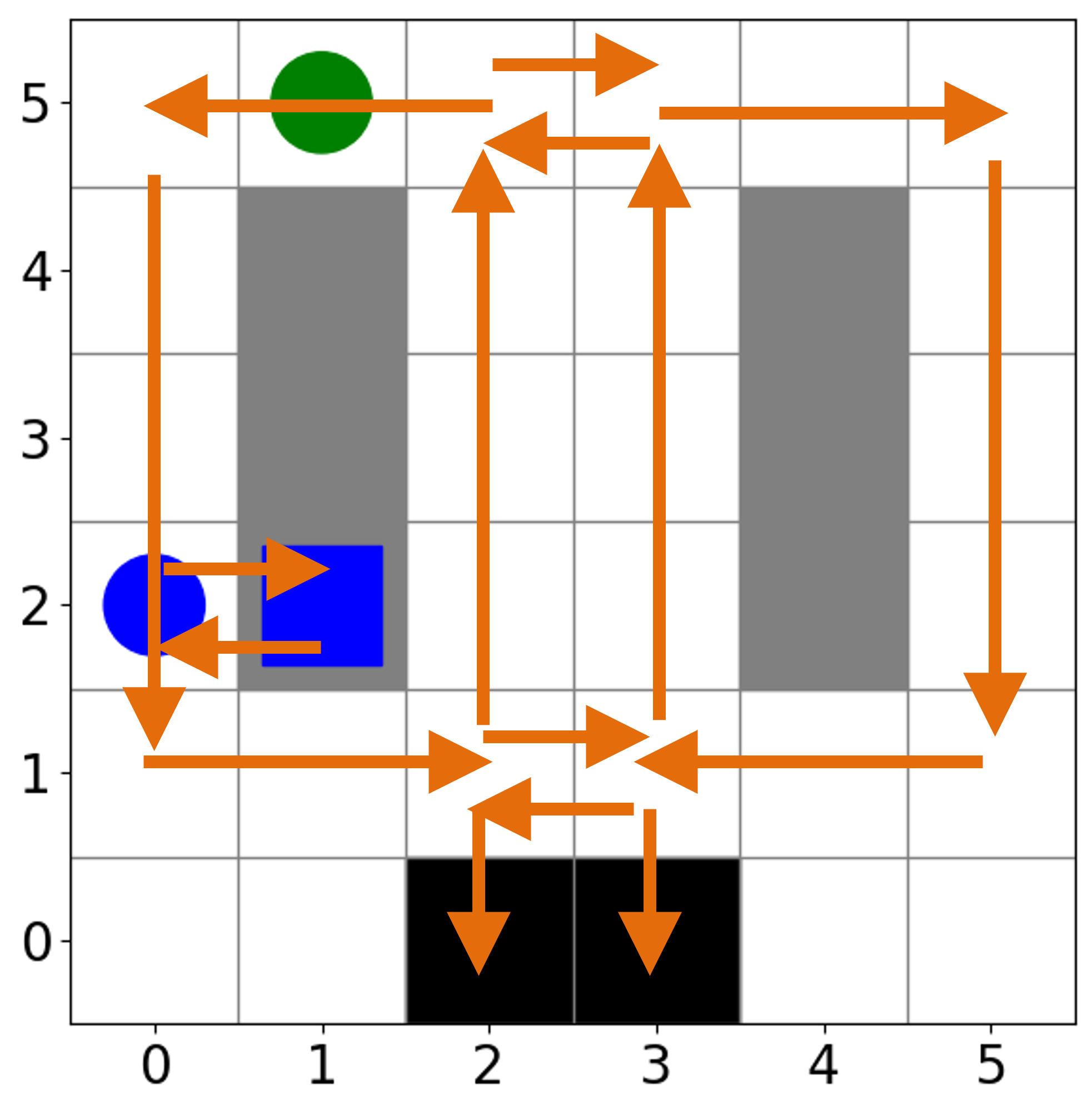}
        \caption{Warehouse delivery}
  		\label{ex:warehouse}
  	\end{subfigure}
  	\caption{ (a) Nondeterministic planning and (b) warehouse delivery. Blue/orange arrows denote enabled actions; black cells are obstacles (a) or goals (b); red cells are the undesired set $E$, entered with the probabilities $p_1$ and $p_2$. Orange cells in (a) mark the SPR cause satisfying \eqref{PAC-PR-recall_optimal}.}
    % (a) A grid world with black obstacles. The robot starts at $(0,0)$ and enters the terminal red cells via the red-arrow transitions with probabilities $p_1$ and $p_2$. The blue arrows denote the actions enabled in each region. The orange cells indicate the SPR cause satisfying (\ref{PAC-PR-recall_optimal}) when $p_1 < p_2$. (b) A warehouse with two robots. The blue robot delivers the blue item to the goal position (black cell) while avoiding collisions with the green robot. The orange arrows denote the enabled movement actions. The two robots are jointly controlled by a deterministic RL controller.}
\end{figure}

We demonstrate how correctly and quickly the proposed method identifies causal states in two environments: (i) a path planning scenario with nondeterministic actions and (ii) a warehouse delivery example under a reinforcement learning controller, as depicted in Figs.\ \ref{ex:path_planning} and \ref{ex:warehouse}. We implemented Algorithm \ref{alg:PACIdentification} in Python 3.11 with the initialization and the early stopping described in Appendix \ref{appendix:initializtion_early_stopping}, and all experiments ran on a computer equipped with Intel Core i9-14900HX and 62GB of RAM. We compared our method against two baselines.
$\mathrm{Baseline1}$ runs Algorithm 2 in \citep{baier2022probability}, but compares lower/upper bounds on reachability probabilities computed using \citep{ashok2019pac} from $k$ sampled transitions, as in Algorithm \ref{alg:PACIdentification}. It replaces the transition probabilities in Def. \ref{def:existing_MDP_modification} with the estimated lower bounds and repeats the procedure until all states are classified. An algorithmic summary of \textrm{Baseline1} is provided in Appendix \ref{appendix:baseline1}.
$\mathrm{Baseline2}$ first learns intervals for transition probabilities from $k$ transition samples such that $P$ is contained in the intervals w. p. at least $1-\delta_\mathcal{M}$ \citep{suilen2022robust}. It then runs Algorithm 1 in \citep{oura2025probability} by instantiating 10 MDPs from these intervals. 
We treat an experiment as timed out if the iteration count exceeds $2 \times 10^4$.
For all methods, we sample $k$ state–action pairs uniformly at random and draw transitions. This sampling scheme is an experimental choice, and we could instead use an exploration policy.

\subsection{Setting for Path Planning}
\label{Example:path-planning}
In the path-planning environment (Fig. \ref{ex:path_planning}), the available actions are $\mathsf{Up}$, $\mathsf{Right}$, $\mathsf{Down}$, $\mathsf{Left}$, and $\mathsf{Stay}$; entering the red cells is undesired. The robot moves in the intended direction with probability $0.9$ and moves in a perpendicular direction with probability $0.05$ each. It remains in the same cell if it chooses $\mathsf{Stay}$ or if the intended move would hit an obstacle (black cells) or leave the grid. When adjacent to walls or obstacles, actions that would move into them are disabled. When the robot is at $(9,5)$ (resp., $(8,9)$), it enters the red cells with probability $p_1$ (resp., $p_2$) and moves to the intended cell with the remaining probability. In the cells adjacent to the red cells, except for $(9,5)$ and $(8,9)$, the robot moves as intended with probability $1$. We conducted 20 runs for each baseline and our method. We set $k = 5 \times 10^4$, $\delta_\mathcal{M} = 0.05$, and $(p_1, p_2) \in {(0.2, 0.7), (0.4, 0.5)}$. Experiments with other parameter settings are in Appendix \ref{appendix:further_experimental_results}.

In this example, when $p_1 < p_2$, the true SPR cause is the orange cells shown in Fig. \ref{ex:path_planning}. Intuitively, traversing these cells makes the robot more likely to reach the red cells from below via the middle corridor than via alternative routes. Our method gradually identifies these cells by comparing upper and lower bounds on reachability probabilities.

\subsection{Setting for Warehouse Delivery}
\label{Example:warehouse}
% As shown in Fig. \ref{ex:warehouse}, two robots (blue and green disks) navigate a warehouse. Using an automata-guided approach \citep{oura2020reinforcement}, they jointly learn policies to deliver the item (blue square) assigned to the blue robot to one of the goal locations (black cells) while avoiding collisions. 
In the warehouse environment (Fig. \ref{ex:warehouse}), two robots jointly learn policies via an automata-guided approach \citep{oura2020reinforcement} to deliver the item (blue square) to a goal (black cells) while avoiding collisions, which constitute the undesired behavior.
The enabled actions $\mathsf{Up}$, $\mathsf{Right}$, $\mathsf{Down}$, $\mathsf{Left}$, and $\mathsf{PickUp}$ for each region are indicated by orange arrows. The $\mathsf{PickUp}$ action is available only when a robot is at the item location. To encourage picking up the item, the blue robot is forced to choose $\mathsf{Right}$ with probability $0.4$ when adjacent to the item location. Detailed parameter settings are provided in Appendix \ref{appendix:experimental_settings}.
The undesired behavior is a collision. When not carrying an item, each robot moves in the intended direction with probability $1$. When carrying the item, the robot moves as intended with probability $0.6$ and stays in place with probability $0.4$, reflecting the slowdown due to the item's weight. The state space consists of $2808$ states. We set $k = 5 \times 10^5$, $\delta_\mathcal{M} = 0.1$, and conducted $20$ runs for both our method and the baselines.

Intuitively, collisions are more likely when the green robot is on the left side and behind the blue robot while it is carrying the item, since the green robot can catch up with positive probability. In contrast, if the green robot stays on the right side of the warehouse while the blue robot is carrying the item, the collision probability is $0$.

\subsection{Results}
\begin{table*}[t]
  \caption{Means and standard deviations (stds.) of the accuracies for the identified causal states, non-causal states, and final SPR causes, as well as of the number of iterations each method required to obtain the final outputs. The left and middle columns report results for the first nondeterministic planning example, and the right column reports results for the warehouse example. In all nondeterministic planning experiments, \textbf{\textrm{Baseline1} timed out}.}
  \centering
  \scalebox{0.72}{
  \label{table:correctness_path_planning}
    \begin{tabular}{ c||c c c c|c c c c|c c c c} \hline
       & \multicolumn{4}{c|}{Path planning with $(p_1, p_2)=(0.4, 0.5)$} & \multicolumn{4}{c|}{Path planning with $(p_1, p_2) = (0.2, 0.7)$} & \multicolumn{4}{c}{Warehouse} \\ \hline 
      & $C_\top$ & $C_\bot$ & $C^{*}$ & Iter & $ C_\top$ & $C_\bot$ & $C^{*}$ & Iter & $ C_\top$ & $C_\bot$ & $C^{*}$ & Iter \\ \hline \hline
       Ours & $1.0 \pm 0.0$ & $1.0 \pm 0.0$ & $1.0 \pm 0.0$ & $1449 \pm 188$ & $1.0 \pm 0.0$ & $1.0 \pm 0.0$ & $1.0 \pm 0.0$ & $168 \pm 5$ & $1.0 \pm 0.0$ & $1.0 \pm 0.0$ & $1.0 \pm 0.0$ & $57 \pm 11$ \\ 
       \hline
       $\mathrm{Baseline1}$ & $1.0 \pm 0.0$ & $1.0 \pm 0.0$ & -- & TO & $1.0 \pm 0.0$ & $1.0 \pm 0.0$ & -- & TO & $1.0 \pm 0.0$ & $1.0 \pm 0.0$ & $1.0 \pm 0.0$ & $86 \pm 12$ \\ 
       \hline
       $\mathrm{Baseline2}$ & -- & -- & $0.25 \pm 0.09$ & $55 \pm 69$ & -- & -- & $0.95 \pm 0.05$ & $18 \pm 2$ & -- & -- & $1.0 \pm 0.0$ & $5 \pm 4$ \\ 
       \hline
    \end{tabular}
    }
\end{table*}
       % $(p_1, p_2)$ & $(0.7, 0.2)$ & $(0.5, 0.4)$ & $(0.7, 0.2)$ & $(0.5, 0.4)$ & $(0.7, 0.2)$ & $(0.5, 0.4)$ \\ \hline \hline
      % Ours1 & $1.0 \pm 0.0$ & $1.0 \pm 0.0$ & $1.0 \pm 0.0$ & $1.0 \pm 0.0$ & $1.0 \pm 0.0$ & $1.0 \pm 0.0$ \\ 
      % Ours2 & $1.0 \pm 0.0$ & $1.0 \pm 0.0$ & $1.0 \pm 0.0$ & $1.0 \pm 0.0$ & $1.0 \pm 0.0$ & $1.0 \pm 0.0$ \\ 
      % Baseline & $1.0 \pm 0.0$ & $1.0 \pm 0.0$ & $0.96 \pm 0.21$ & $0.33 \pm 0.47$ & $0.95 \pm 0.21$ & $0.25 \pm 0.43$ \\ 
      % \hline
    % \end{tabular}
    % }
% \end{table*}
\paragraph{Performance comparisons.} Our method successfully and quickly identified causal states during the learning process and produced SPR causes that satisfy (\ref{PAC-PR-recall_optimal}) for both examples, whereas the baselines failed to do so. 
% To evaluate the correctness of Algorithm \ref{alg:PACIdentification}, we define the indicator function $\mathbb{I}: 2^S \times 2^S \to \{0, 1\}$ as $\mathbb{I}(C, C') = 1$ if $C \subseteq C'$, and otherwise $0$. 
% To evaluate the correctness, we consider the mean number of times all states in $C_\top$ and $C_\bot$ are causal and non-causal, respectively, and $C^{*}$ is an SPR cause that satisfies (\ref{PAC-PR-recall_optimal}), across all iterations and experiments.
% , where $C^{i, j}_\top$ and $C^{i, j}_\bot$ denote the sets of states that are classified as causal and non-causal, respectively, and $C^{*, j}$ is an output from our method and the baselines, at the $i$-th iteration in the $j$-th experiment. 
Table \ref{table:correctness_path_planning} reports the accuracy that states in $C_\top / C_\bot$ are truly causal/non-causal, and that $C^*$ is an SPR cause satisfying (\ref{PAC-PR-recall_optimal}); results are from our method with $\tau=0$ and each baseline. Across all experiments, every intermediate state labeled causal/non-causal by our method was correct, and the output $C^*$ was also correct. $\mathrm{Baseline1}$ also produced correct labels, but required far more samples; in nondeterministic planning, all runs timed out and left $(3,5)$ and $(4,5)$ unclassified. This occurred because, in the transformed MDP in Def. \ref{def:existing_MDP_modification}, a policy maximizing reachability probability to $E$ tends to take paths reaching $(3,5)$ (and similarly $(4,5)$), making the reachability gap between the original and transformed MDPs extremely small; our method mitigates this issue.
$\mathrm{Baseline2}$ converged quickly, but when $(p_1,p_2)=(0.4,0.5)$ its success frequency fell below $1-\delta_\mathcal{M}$. In some generated MDPs, transitions from $(9,5)$ to the red cell had smaller/larger probabilities under \textsf{Up}/\textsf{Down} than those from $(7,9)$, leading to the incorrect SPR cause $\{(3,5),(7,5)\}$. In contrast, the warehouse delivery example does not have nondeterministic actions, and the gap of reachability probabilities under a visit to a causal state and the bypass of it is not tiny. So, causal states are easier to detect. 
Moreover, for $\mathrm{Baseline2}$, enforcing that both the lower bound of probability over the learned intervals that a returned set of states is an SPR cause and its confidence are near $1$, e.g., $1-10^{-6}$, requires about $1.4\times 10^{7}$ MDP instances by Theorem 1 in \citep{oura2025probability}, which is infeasible since checking all states for one instantiated MDP took at least $10$ seconds in both environments. Further experimental results are in Appendix \ref{appendix:further_experimental_results}.

\paragraph{Convergence comparisons.}
Figs. \ref{fig:causal_noncausal_comparison_54} and \ref{fig:causal_noncausal_comparison_72} show, for the nondeterministic planning scenario, the numbers of causal and non-causal states correctly identified at each iteration by our method and \textrm{Baseline1}. 
For readability, we plot the results only up to 2000 and 300 iterations in Figs. \ref{fig:causal_noncausal_comparison_54} and \ref{fig:causal_noncausal_comparison_72}, respectively. 
Overall, our method identifies states earlier than \textrm{Baseline1}, and the advantage is more pronounced for causal states; particularly, when $(p_1, p_2) = (0.4, 0.5)$, the causal states are identified substantially earlier. Moreover, even when $(p_1, p_2) = (0.4, 0.5)$, our method classified all but two causal states into $C_\top$ within $300$ iterations; only $(3,5)$ and $(4,5)$ remain unclassified.

\paragraph{Effects of $\tau$ and early stopping.} 
Figs. \ref{fig:effect_of_tau_54} and \ref{fig:effect_of_tau_72} show that, 
% we show how the number of undecided states, $|C_?|$, and the number of iterations to convergence vary with the margin $\tau$ in the nondeterministic planning scenario. 
as $\tau$ increases, $|C_?|$ increases while the number of iterations decreases, enabling faster convergence at the expense of more undecided states. In particular, when $\tau \geq 0.1$ and $(p_1, p_2) = (0.2, 0.7)$, our method converged in fewer than 30 iterations, with $|C_?|$ at most five at termination. For all $\tau$, the causal/non-causal state classifications were correct. In Table \ref{table:early_stopping}, we compare the wall-clock time over the first 100 iterations with and without early stopping for bound updates. Early stopping was about $1.2\times$ faster on average and preserved the correctness of causal/non-causal state classification in all experiments.

\paragraph{Scalability.} Our method is amenable to scaling. PR-cause checking for each state can be parallelized, since each candidate $c$ is classified independently, and the interval value iteration can be GPU-accelerated \citep{farrington2025going}. Both reduce wall-clock time without altering the algorithm. Realizing such parallel and GPU-accelerated implementations is left as future work.
% This indicates that several states have a large gap between the two conditional reachability probabilities.
\begin{figure}[htbp]
    \centering
    \begin{subfigure}{0.47\linewidth}
        \includegraphics[width=\linewidth]{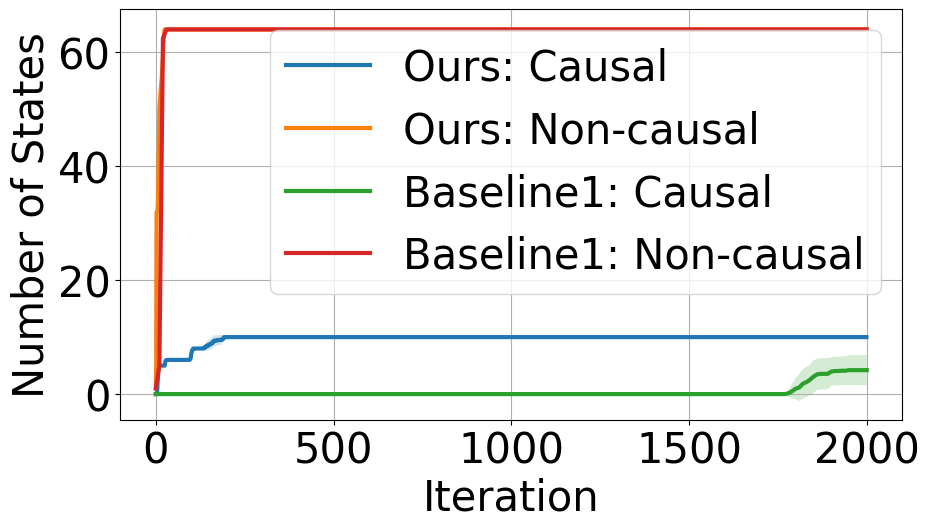}
        \caption{$(p_1, p_2) = (0.4, 0.5)$}
        \label{fig:causal_noncausal_comparison_54}
    \end{subfigure}
    \begin{subfigure}{0.47\linewidth}
        \includegraphics[width=\linewidth]{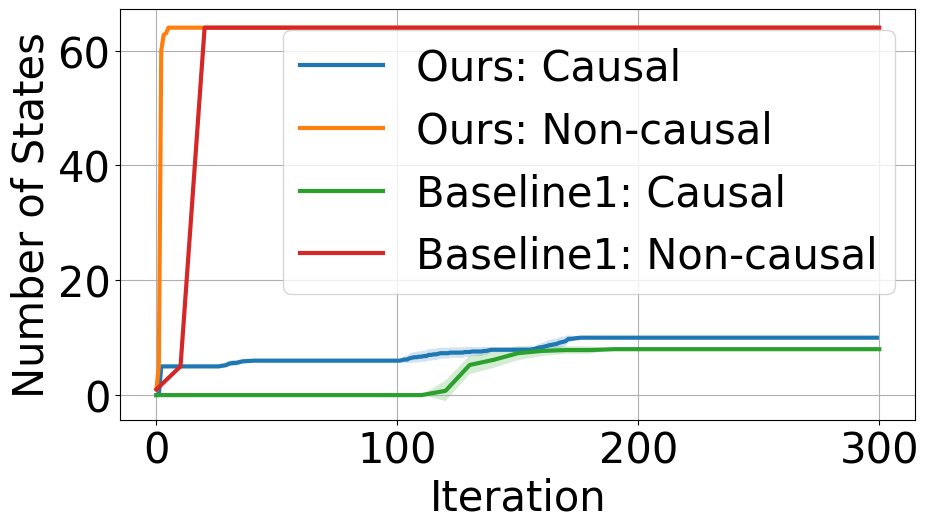}
        \caption{$(p_1, p_2) = (0.2, 0.7)$}
        \label{fig:causal_noncausal_comparison_72}
    \end{subfigure}
    \begin{subfigure}{0.49\linewidth}
        \includegraphics[width=\linewidth]{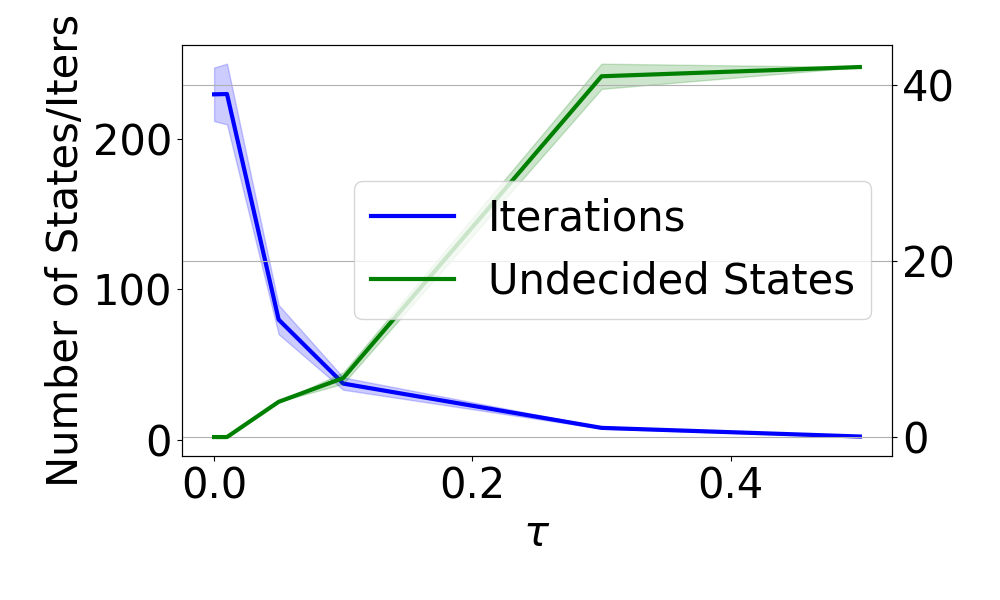}
        \caption{$(p_1, p_2) = (0.4, 0.5)$}
        \label{fig:effect_of_tau_54}
    \end{subfigure}
    \begin{subfigure}{0.49\linewidth}
        \includegraphics[width=\linewidth]{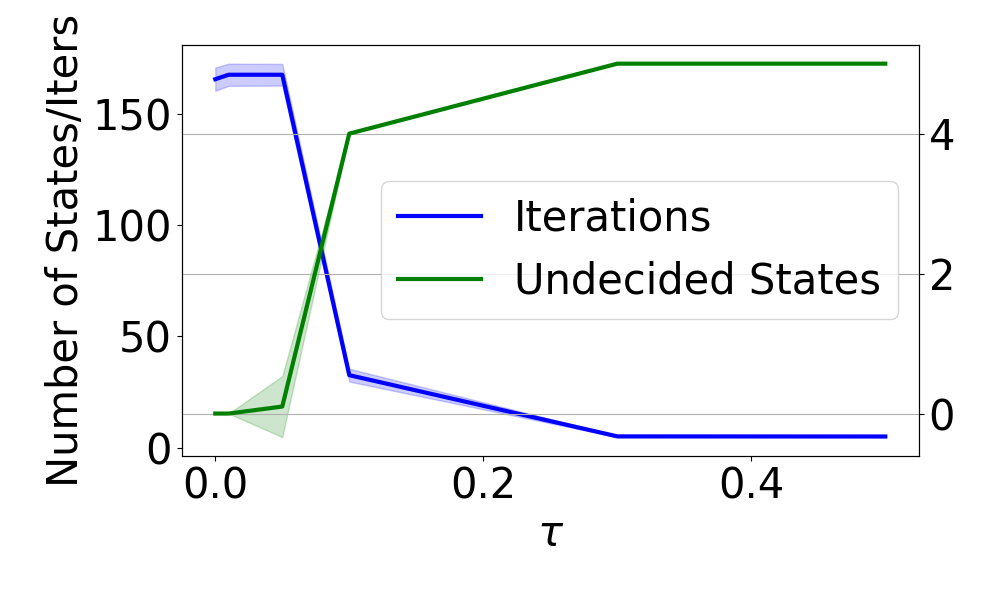}
        \caption{$(p_1, p_2) = (0.2, 0.7)$}
        \label{fig:effect_of_tau_72}
    \end{subfigure}
    \caption{Nondeterministic planning. (a) and (b): mean numbers of causal/non-causal states identified by our method and \textrm{Baseline1}. (c) and (d): mean numbers of $|C_?|$ and iteration count vs. $\tau$. Shaded areas show standard deviation; see Fig \ref{fig:causal_noncausal_comparison_54_delta_05}-\ref{fig:effect_of_tau_72_delta_05} for more details.}
    % (a) and (b) show, for the nondeterministic planning scenario, the mean numbers of causal and non-causal states identified by our method and \textrm{Baseline1}, respectively. (c) and (d) show the mean number of undecided states, $|C_?|$, and the mean number of iterations for our method as a function of $\tau$. In (a)–(d), shaded areas indicate the standard deviation.}
\end{figure}

\begin{table}[htbp]
  \caption{Means and standard deviations of the measured wall-clock times over the first 100 iterations, with and without the early stopping (ES) described in Sec. \ref{subsection:learning_algorithm}.}
  \centering
  \scalebox{0.8}{
  \label{table:early_stopping}
    \begin{tabular}{ c||c|c} \hline
       & $(p_1, p_2)=(0.4, 0.5)$ & $(p_1, p_2) = (0.2, 0.7)$ \\ \hline \hline
       Ours with ES & $2140 \pm 264$ [sec.] & $1971 \pm 701$ [sec.] \\ 
       \hline
       Ours w/o ES & $2609 \pm 747$ [sec.] & $2450 \pm 1042$ [sec.] \\ 
       \hline
    \end{tabular}
    }
\end{table}

\section{Related Works}
% We review related works on causality and explanations in decision making, MDP transformations for conditional probabilities, and learning-based verification. We use this discussion to motivate the specific gap addressed in this paper: identifying probability-raising causes for reachability properties in unknown MDPs from transition data with verifiable and probabilistic guarantees.
\subsection{Causality and Explanations for Decision Making}
\label{related_works:causal_explanation}
A line of work has studied actual causality, often defined over known system models \citep{coenen2022temporal, finkbeiner2024synthesis, dimitrova2020probabilistic, clarkson2014temporal, finkbeiner2023counterfactuals, finkbeiner2024counterfactual, carelli2025closure, oberst2019counterfactual, tsirtsis2021counterfactual}. These approaches aim to explain why a particular outcome occurred. 
\cite{rafieioskouei2025efficient} has considered causal states on tree-structured known Markov chains.
In contrast, we focus on estimating certified causal states for probabilistic behavior over unknown MDPs.
Closest to our setting are PR causes as sets of responsible states for Markovian models and computational procedures for them \citep{baier2021probabilistic, baier2022probability, baier2024foundations}. However, existing approaches assume a known MDP, and applying their methods to an unknown MDP leads to sample inefficiency.
As a context of explanations for verification, \cite{wimmer2014minimal, feng2018counterexamples, jantsch2020minimal, jansen2011hierarchical} have provided a subsystem and trajectories as evidence of a specification violation. However, they did not consider either unknown systems or causal principles.
Recent works on causal explanation for reinforcement learning are also related \citep{madumal2020explainable, madumal2020distal, yu2023explainable}, but they typically aim to explain agents' actions on fixed-length trajectories using learned or known SCM-like models. Moreover, they do not consider correctness and nondeterministic behavior.

\subsection{Transformations for Conditional Queries in Markovian Models}
\label{related_works:MDP_transformation}
Several works have studied the reduction of conditional reachability to unconditional reachability via model transformations \citep{baier2014computing, baier2017maximizing, marcker2017computing}. These methods and tools are primarily developed for known models; in the MDP setting, these pipelines require reachability probabilities in the original model during transformations, which are unavailable in unknown MDPs. Our proposed modification follows the same high-level idea but avoids querying reachability probabilities.

\subsection{Learning-based verification and PR causes under uncertainty}
\label{related_works:learning_verification}
% Learning verification, PAC model checking, PAC-MDP learning (Ufuc Topuc)
% All of the works mainly focused on verification and synthesis for the specifications, and do not consider causal explanation on why the specifications are fulfilled or violated.
PAC learning for unknown MDPs provides sampling-based verification and policy synthesis under specifications and stopping criteria \citep{ashok2019pac, agarwal2025pac,fu2014probably, wen2016probably}. While these methods underpin our bound computation, they have mainly aimed at property verification rather than causal identification.
% they do not classify states as causes/non-causes across cause-dependent variants. 
PR causes have also been studied under explicit uncertain MDPs, where causes are inferred by reasoning over a specified uncertain set for transition probabilities \citep{oura2025probability}. Our focus instead is a single unknown MDP, and we target (i) sample-efficient learning of PR causes and (ii) anytime certification. 

\section{Conclusion}
We studied PR-cause identification for reachability properties in unknown MDPs from transition samples. Our key ingredient is a restart-based MDP modification that addresses learning-specific challenges; we prove its correctness by reducing PR-cause checking to conditional reachability queries in the modified MDP, without requiring reachability values of the original MDP. We also derive and compare sample-complexity bounds for the existing and proposed MDP modifications. Building on two-sided value iteration, we develop an anytime learning-and-checking algorithm with probabilistic guarantees that progressively classifies candidate states as causal, non-causal, or undecided, and outputs a PR cause. Experiments on two benchmarks demonstrate that the approach reliably and quickly identifies the desired PR cause.
Future work includes extending the approach to temporal logic formulas as causes, investigating adaptive sampling strategies, partial observability, and multi-agent settings, and considering other policy classes.

\bibliography{reference}

@book{durrett2019probability,
  title={Probability: theory and examples},
  author={Durrett, Rick},
  volume={49},
  year={2019},
  publisher={Cambridge university press}
}

@book{baier2008principles,
  title={Principles of Model Checking},
  author={Baier, Christel and Katoen, Joost-Pieter},
  year={2008},
  publisher={MIT press}
}

@book{de1998formal,
  title={Formal verification of probabilistic systems},
  author={De Alfaro, Luca},
  year={1998},
  publisher={stanford university}
}

@inproceedings{chen2022multi,
  title={Multi-objective controller synthesis with uncertain human preferences},
  author={Chen, Shenghui and Boggess, Kayla and Parker, David and Feng, Lu},
  booktitle={Proceedings of the ACM/IEEE 13th International Conference on Cyber-Physical Systems},
  pages={170--180},
  year={2022}
}

@article{drager2015permissive,
  title={Permissive controller synthesis for probabilistic systems},
  author={Drager, Klaus and Forejt, Vojtech and Kwiatkowska, Marta and Parker, David and Ujma, Mateusz},
  journal={Logical Methods in Computer Science},
  volume={11},
  year={2015},
  publisher={Episciences. org}
}

@inproceedings{junges2018model,
  title={Model checking for safe navigation among humans},
  author={Junges, Sebastian and Jansen, Nils and Katoen, Joost-Pieter and Topcu, Ufuk and Zhang, Ruohan and Hayhoe, Mary},
  booktitle={Proceedings of the 15th International Conference on Quantitative Evaluation of Systems},
  pages={207--222},
  year={2018},
  organization={Springer}
}

@article{farrington2025going,
  title={Going faster to see further: graphics processing unit-accelerated value iteration and simulation for perishable inventory control using JAX: J. Farrington et al.},
  author={Farrington, Joseph and Wong, Wai Keong and Li, Kezhi and Utley, Martin},
  journal={Annals of Operations Research},
  volume={349},
  number={3},
  pages={1609--1638},
  year={2025},
  publisher={Springer}
}

@inproceedings{ashok2019pac,
  title={PAC statistical model checking for Markov decision processes and stochastic games},
  author={Ashok, Pranav and K{\v{r}}et{\'\i}nsk{\`y}, Jan and Weininger, Maximilian},
  booktitle={Proceedings of the 31st International Conference on Computer Aided Verification},
  pages={497--519},
  year={2019},
  organization={Springer}
}

@article{agarwal2025pac,
  title={PAC statistical model checking of mean payoff in discrete-and continuous-time MDP},
  author={Agarwal, Chaitanya and Guha, Shibashis and K{\v{r}}et{\'\i}nsk{\`y}, Jan and Pazhamalai, M},
  journal={Formal Methods in System Design},
  volume={66},
  number={2},
  pages={195--237},
  year={2025},
  publisher={Springer}
}

@inproceedings{fu2014probably,
  title={Probably approximately correct MDP learning and control with temporal logic constraints},
  author={Fu, Jie and Topcu, Ufuk},
  booktitle={Proceedings of Robotics: Science and Systems},
  year={2014}
}

@inproceedings{wen2016probably,
  title={Probably Approximately Correct Learning in Stochastic Games with Temporal Logic Specifications.},
  author={Wen, Min and Topcu, Ufuk},
  booktitle={Proceedings of the 25th International Joint Conference on Artificial Intelligence},
  pages={3630--3636},
  year={2016}
}

@inproceedings{perez2024pac,
  title={A PAC learning algorithm for LTL and omega-regular objectives in MDPs},
  author={Perez, Mateo and Somenzi, Fabio and Trivedi, Ashutosh},
  booktitle={Proceedings of the 38th Annual AAAI Conference on Artificial Intelligence},
  volume={38},
  number={19},
  pages={21510--21517},
  year={2024}
}

@article{suilen2022robust,
  title={Robust anytime learning of Markov decision processes},
  author={Suilen, Marnix and Sim{\~a}o, Thiago D and Parker, David and Jansen, Nils},
  journal={Advances in Neural Information Processing Systems},
  volume={35},
  pages={28790--28802},
  year={2022}
}

@article{oura2020reinforcement,
  title={Reinforcement learning of control policy for linear temporal logic specifications using limit-deterministic generalized B{\"u}chi automata},
  author={Oura, Ryohei and Sakakibara, Ami and Ushio, Toshimitsu},
  journal={Control Systems Letters},
  volume={4},
  number={3},
  pages={761--766},
  year={2020},
  publisher={IEEE}
}

@article{haddad2018interval,
  title={Interval iteration algorithm for MDPs and IMDPs},
  author={Haddad, Serge and Monmege, Benjamin},
  journal={Theoretical Computer Science},
  volume={735},
  pages={111--131},
  year={2018},
  publisher={Elsevier}
}

@inproceedings{baier2014computing,
  title={Computing conditional probabilities in Markovian models efficiently},
  author={Baier, Christel and Klein, Joachim and Kl{\"u}ppelholz, Sascha and M{\"a}rcker, Steffen},
  booktitle={Proceedings of the 20th International Conference on Tools and Algorithms for the Construction and Analysis of Systems},
  pages={515--530},
  year={2014},
  organization={Springer}
}

@inproceedings{baier2017maximizing,
  title={Maximizing the conditional expected reward for reaching the goal},
  author={Baier, Christel and Klein, Joachim and Kl{\"u}ppelholz, Sascha and Wunderlich, Sascha},
  booktitle={Proceedings of the 23rd International Conference on Tools and Algorithms for the Construction and Analysis of Systems},
  pages={269--285},
  year={2017},
  organization={Springer}
}

@inproceedings{marcker2017computing,
  title={Computing conditional probabilities: implementation and evaluation},
  author={M{\"a}rcker, Steffen and Baier, Christel and Klein, Joachim and Kl{\"u}ppelholz, Sascha},
  booktitle={Proceedings of the 15th International Conference on Software Engineering and Formal Methods},
  pages={349--366},
  year={2017},
  organization={Springer}
}

@article{hoeffding1963probability,
  title={Probability inequalities for sums of bounded random variables},
  author={Hoeffding, Wassily},
  journal={Journal of the American Statistical Association},
  volume={58},
  number={301},
  pages={13--30},
  year={1963},
  publisher={Taylor \& Francis}
}

@inproceedings{baier2021verification,
  title={From Verification to Causality-Based Explications},
  author={Baier, Christel and Dubslaff, Clemens and Funke, Florian and Jantsch, Simon and Majumdar, Rupak and Piribauer, Jakob and Ziemek, Robin},
  booktitle={Proceedings of the 48th International Colloquium on Automata, Languages, and Programming},
  pages={1--1},
  year={2021}
}

@inproceedings{oberst2019counterfactual,
  title={Counterfactual off-policy evaluation with Gumbel-max structural causal models},
  author={Oberst, Michael and Sontag, David},
  booktitle={Proceedings of the 15th International Conference on Machine Learning},
  pages={4881--4890},
  year={2019},
  organization={PMLR}
}

@article{tsirtsis2021counterfactual,
  title={Counterfactual Explanations in Sequential Decision Making under Uncertainty},
  author={Tsirtsis, Stratis and De, Abir and Rodriguez, Manuel},
  journal={Advances in Neural Information Processing Systems},
  volume={34},
  pages={30127--30139},
  year={2021}
}

@inproceedings{triantafyllou2022actual,
  title={Actual Causality and Responsibility Attribution in Decentralized Partially Observable {Markov} Decision Processes},
  author={Triantafyllou, Stelios and Singla, Adish and Radanovic, Goran},
  booktitle={Proceedings of the 5th AAAI/ACM Conference on AI, Ethics, and Society},
  pages={739--752},
  year={2022}
}

@article{rafieioskouei2025efficient,
  title={Efficient Discovery of Actual Causality with Uncertainty},
  author={Rafieioskouei, Arshia and Rogale, Kenneth and Bonakdarpour, Borzoo},
  journal={arXiv preprint arXiv:2507.09000},
  year={2025}
}

@inproceedings{baier2021probabilistic,
  title={Probabilistic Causes in {Markov} Chains},
  author={Baier, Christel and Funke, Florian and Jantsch, Simon and Piribauer, Jakob and Ziemek, Robin},
  booktitle={Proceedings of the 19th International Symposium on Automated Technology for Verification and Analysis},
  pages={205--221},
  year={2021}
}

@article{baier2022probability,
  title={On probability-raising causality in {Markov} decision processes},
  author={Baier, Christel and Funke, Florian and Piribauer, Jakob and Ziemek, Robin},
  journal={arXiv preprint arXiv:2201.08768},
  year={2022}
}

@article{baier2024foundations,
  title={Foundations of probability-raising causality in {Markov} decision processes},
  author={Baier, Christel and Piribauer, Jakob and Ziemek, Robin},
  journal={Logical Methods in Computer Science},
  volume={20},
  year={2024},
  publisher={Episciences. org}
}

@inproceedings{oura2025probability,
  title={Probability-raising causality for uncertain parametric Markov decision processes with PAC guarantees},
  author={Oura, Ryohei and Ito, Yuji},
  booktitle={Proceedings of the 41st Conference on Uncertainty in Artificial Intelligence},
  year={2025}
}

@inproceedings{coenen2022temporal,
  title={Temporal Causality in Reactive Systems},
  author={Coenen, Norine and Finkbeiner, Bernd and Frenkel, Hadar and Hahn, Christopher and Metzger, Niklas and Siber, Julian},
  booktitle={Proceedings of the 20th International Symposium on Automated Technology for Verification and Analysis},
  pages={208--224},
  year={2022},
  organization={Springer}
}

@inproceedings{finkbeiner2024synthesis,
  title={Synthesis of Temporal Causality},
  author={Finkbeiner, Bernd and Frenkel, Hadar and Metzger, Niklas and Siber, Julian},
  booktitle={Proceedings of the 36th International Conference on Computer Aided Verification},
  pages={87--111},
  year={2024},
  organization={Springer}
}

@inproceedings{finkbeiner2024counterfactual,
  title={Counterfactual Explanations for MITL Violations},
  author={Finkbeiner, Bernd and Jahn, Felix and Siber, Julian},
  booktitle={Proceedings of the 44th IARCS Annual Conference on Foundations of Software Technology and Theoretical Computer Science},
  pages={22--1},
  year={2024},
  organization={Schloss Dagstuhl--Leibniz-Zentrum f{\"u}r Informatik}
}

@article{carelli2025closure,
  title={Closure and Complexity of Temporal Causality},
  author={Carelli, Mishel and Finkbeiner, Bernd and Siber, Julian},
  journal={arXiv preprint arXiv:2505.10186},
  year={2025}
}

@article{kazemi2025causal,
  title={Causal temporal reasoning for Markov decision processes},
  author={Kazemi, Milad and Lally, Jessica and Paoletti, Nicola},
  journal={Research Directions: Cyber-Physical Systems},
  volume={3},
  pages={1--23},
  year={2025},
  publisher={Cambridge University Press}
}

@inproceedings{finkbeiner2023counterfactuals,
  title={Counterfactuals Modulo Temporal Logics},
  author={Finkbeiner, Bernd and Siber, Julian},
  booktitle={Proceedings of the 24th International Conference on Logic},
  volume={94},
  pages={181--204},
  year={2023}
}

@inproceedings{dimitrova2020probabilistic,
  title={Probabilistic Hyperproperties of {Markov} Decision Processes},
  author={Dimitrova, Rayna and Finkbeiner, Bernd and Torfah, Hazem},
  booktitle={Proceedings of the 18th International Symposium on Automated Technology for Verification and Analysis},
  pages={484--500},
  year={2020},
  organization={Springer}
}

@inproceedings{clarkson2014temporal,
  title={Temporal Logics for Hyperproperties},
  author={Clarkson, Michael R and Finkbeiner, Bernd and Koleini, Masoud and Micinski, Kristopher K and Rabe, Markus N and S{\'a}nchez, C{\'e}sar},
  booktitle={Proceedings of the 3rd International Conference on Principles of Security and Trust},
  pages={265--284},
  year={2014},
  organization={Springer Verlag}
}

@inproceedings{madumal2020explainable,
  title={Explainable reinforcement learning through a causal lens},
  author={Madumal, Prashan and Miller, Tim and Sonenberg, Liz and Vetere, Frank},
  booktitle={Proceedings of the 34th Annual AAAI conference on artificial intelligence},
  volume={34},
  number={03},
  pages={2493--2500},
  year={2020}
}

@article{madumal2020distal,
  title={Distal explanations for explainable reinforcement learning agents},
  author={Madumal, Prashan and Miller, Tim and Sonenberg, Liz and Vetere, Frank},
  journal={arXiv preprint arXiv:2001.10284},
  volume={2},
  year={2020}
}

@inproceedings{yu2023explainable,
  title={Explainable reinforcement learning via a causal world model},
  author={Yu, Zhongwei and Ruan, Jingqing and Xing, Dengpeng},
  booktitle={Proceedings of the 32nd International Joint Conference on Artificial Intelligence},
  pages={4540--4548},
  year={2023}
}

@article{wimmer2014minimal,
  title={Minimal counterexamples for linear-time probabilistic verification},
  author={Wimmer, Ralf and Jansen, Nils and {\'A}brah{\'a}m, Erika and Katoen, Joost-Pieter and Becker, Bernd},
  journal={Theoretical Computer Science},
  volume={549},
  pages={61--100},
  year={2014},
  publisher={Elsevier}
}

@inproceedings{feng2018counterexamples,
  title={Counterexamples for robotic planning explained in structured language},
  author={Feng, Lu and Ghasemi, Mahsa and Chang, Kai-Wei and Topcu, Ufuk},
  booktitle={Proceedings of the International Conference on Robotics and Automation},
  pages={7292--7297},
  year={2018},
  organization={IEEE}
}

@inproceedings{jantsch2020minimal,
  title={Minimal witnesses for probabilistic timed automata},
  author={Jantsch, Simon and Funke, Florian and Baier, Christel},
  booktitle={Proceedings of the 18th International Symposium on Automated Technology for Verification and Analysis},
  pages={501--517},
  year={2020},
  organization={Springer}
}

@inproceedings{jansen2011hierarchical,
  title={Hierarchical counterexamples for discrete-time Markov chains},
  author={Jansen, Nils and {\'A}brah{\'a}m, Erika and Katelaan, Jens and Wimmer, Ralf and Katoen, Joost-Pieter and Becker, Bernd},
  booktitle={Proceedings of the 9th International Symposium on Automated Technology for Verification and Analysis},
  pages={443--452},
  year={2011},
  organization={Springer}
}

\newpage

\onecolumn

\title{Supplementary Material}
\maketitle

\appendix

\section{Proofs of Theorems and Propositions}
\label{appendix:proofs}
We start with proving that, for any policy $\pi: \mathrm{FinPath} \times A \to [0,1]$, there exists a policy $\tilde{\pi}: \mathrm{FinPath} \times A \to [0,1]$ such that the reachability probability conditioned on bypass of a state $c$ on the original MDP under $\pi$ is equivalent to the reachability probability on our modified MDP under $\tilde{\pi}$, that is, $\mathrm{Pr}^\pi_{\mathcal{M}}(\eventually E \mid \neg \eventually c) = \mathrm{Pr}^{\tilde{\pi}}_{\mathcal{M}^{[c]}}(\eventually E)$. We then show that each state $c$ can be classified as causal or non-causal using the modified MDP $\mathcal{M}^{[c]}$. Let $\mathrm{InfPath}$ be the set of infinite paths on $\mathcal{M}$. Recall that we denote $\mathrm{\Pi}_{\mathcal{M},c} = \{ \pi : \mathrm{FinPath} \times A \to [0,1] \mid \mathrm{Pr}^\pi_{\mathcal{M}}(\eventually c) \in (0,1)  \}$ for any MDP $\mathcal{M}$ and any state $c \in S$.
For any $\rho \in \mathrm{FinPath}$, we denote $C_\rho = \{ \rho' \mid \rho' \in \mathrm{InfPath}, \rho \in \mathrm{pref}(\rho') \}$, where $\mathrm{pref}(\rho)$ denotes the set of prefixes of $\rho$. We define the concatenation $\rho \rho'$ of any $\rho = s_0 a_0 \ldots s_n$ and any $\rho' = s_0'a_0' \ldots s_m'$ such that $s_n = s_0'$ by $\rho \rho' = s_0 a_0 \ldots s_n a_0' \ldots s_m'$. Moreover, for any $\rho = s_0a_0\ldots s_n \in \mathrm{FinPath}$, we denote $\rho_{t} = s_t$, $\rho_{t:} = s_ta_t \ldots s_n$, $\rho_{:t} = s_0a_0 \ldots s_t$, and its length by $|\rho| = n+1$. In what follows, without loss of generality, each terminal state $s \in E$ is treated as absorbing by adding a dummy self-loop action.

\begin{proof}[Proof of Lemma \ref{lemma:conditioned_reachability_to_reachability_modified_MDP}]
Fix a state $c \in S \setminus E$ and a policy $\pi: \mathrm{FinPath} \times A \to [0,1]$. For any finite path $\rho$ on $\mathcal{M}^{[c]}$, write $\rho = s_0a_0s_1\ldots a_{n-1}s_n$. We define \textit{episode suffix} $\mathrm{suf}_c(\rho)$ as the suffix of $\rho$ that starts at the most recent occurrence of $s_I$ that is the initial or is jumped from $c$: let
    \begin{align}
        i^\star(\rho) = \max\{ i \mid s_i = s_I \mbox{ and } ( i=0 \mbox{ or } s_{i-1}=c) \},
    \end{align}
    and set
    \begin{align}
        \mathrm{suf}_c(\rho) = 
        \begin{cases}
            s_{i^\star(\rho)}a_{i^\star(\rho)}\ldots a_{n-1}s_n & \mbox{ if } i^\star(\rho) < n, \\
            s_n & \mbox{ otherwise}.
        \end{cases}
    \end{align}
    Now we define the policy $\tilde{\pi}: \mathrm{FinPath} \times A \to [0,1]$ as follows: for any $\rho \in \mathrm{FinPath}$ and any $a \in \mathcal{A}(\mathrm{last}(\rho))$,
    \begin{align}
    \label{tilde_pi}
        \tilde{\pi}(\rho, a) = \pi(\mathrm{suf}_c(\rho), a).
    \end{align}
    % Hence, the probability of reaching $E$ after visiting $c$ and going back to $s_I$ on $\mathcal{M}^{[c]}$ under $\tilde{\pi}$ equals the probability of reaching $E$ when starting from $s_I$ on $\mathcal{M}^{[c]}$ under $\tilde{\pi}$. 
    % Let $t_c$ be the first hitting time of $c$. 
    For any $t > 0$, we consider $A_t = \{ \rho = s_0a_0 \ldots s_t a_{t} s_{t+1} \in \mathrm{FinPath} \mid c \not\in \rho_{:t-1}, s_t = c, s_{t+1} = s_I \}$.
    Then, consider the event $H_t$ of the first hitting time of $c$ is $t < \infty$, that is, $H_t = \bigcup_{\rho \in A_t} C_\rho$. For any $A_t$, any $\rho = s_0a_0 \ldots s_{t+1} \in A_t$, and any $\rho' = s_{t+1}a_{t+1} \ldots s_n \in \mathrm{FinPath}$ such that $s_{t+1} = s_I$, by (\ref{tilde_pi}), we have
    \begin{align}
    \label{probability_decomposition}
        \mathrm{Pr}^{\tilde{\pi}}_{\mathcal{M}^{[c]}}(C_{\rho \rho'}) & = \Pi_{i=0}^{n-1} P^c(s_i, a_i, s_{i+1}) \tilde{\pi}(s_0a_0\ldots s_i, a_i) \nonumber \\
        & =\Pi_{i=0}^{t} P^c(s_i, a_i, s_{i+1}) \tilde{\pi}(s_0a_0\ldots s_i, a_i) \Pi_{j=t+1}^{n-1} P^c(s_j, a_j, s_{j+1}) \tilde{\pi}(s_{t+1}a_{t+1}\ldots s_j, a_j) \nonumber \\
        & = \mathrm{Pr}^{\tilde{\pi}}_{\mathcal{M}^{[c]}}(C_{\rho}) \mathrm{Pr}^{\tilde{\pi}}_{\mathcal{M}^{[c]}}(C_{\rho'}).
    \end{align}
    Let $D_t({\rho'}) = \bigcup_{\rho \in A_t} C_{\rho \rho'}$. Since $C_{\rho_1} \cap C_{\rho_2} = \emptyset$ for any ${\rho_1}, {\rho_2} \in A_t$ such that ${\rho_1} \neq {\rho_2}$, we have, when $\mathrm{Pr}^{\tilde{\pi}}_{\mathcal{M}^{[c]}}(H_t) > 0$,
    \begin{align}
    \label{cylinder_probability_after_restart}
        \mathrm{Pr}^{\tilde{\pi}}_{\mathcal{M}^{[c]}}(D_t({\rho'}) \mid H_t) & = \frac{\mathrm{Pr}^{\tilde{\pi}}_{\mathcal{M}^{[c]}}(D_t({\rho'}) \cap H_t)}{\mathrm{Pr}^{\tilde{\pi}}_{\mathcal{M}^{[c]}}(H_t)} \nonumber \\
        & = \frac{\sum_{\rho \in A_t} \mathrm{Pr}^{\tilde{\pi}}_{\mathcal{M}^{[c]}}(D_t({\rho'}) \cap C_\rho)}{\mathrm{Pr}^{\tilde{\pi}}_{\mathcal{M}^{[c]}}(H_t)} \nonumber \\
        & = \sum_{\rho \in A_t} \mathrm{Pr}^{\tilde{\pi}}_{\mathcal{M}^{[c]}}(D_t({\rho'}) \mid C_\rho) \mathrm{Pr}^{\tilde{\pi}}_{\mathcal{M}^{[c]}}(C_{\rho} \mid H_t). \nonumber \\
        \intertext{By (\ref{probability_decomposition}), we have $\mathrm{Pr}^{\tilde{\pi}}_{\mathcal{M}^{[c]}}(C_{\rho \rho'} | C_\rho) = \mathrm{Pr}^{\tilde{\pi}}_{\mathcal{M}^{[c]}}(C_{\rho'})$ for each $\rho \in A_t$. Hence,}
        & = \mathrm{Pr}^{\tilde{\pi}}_{\mathcal{M}^{[c]}}(C_{\rho'}) \sum_{\rho \in A_t} \mathrm{Pr}^{\tilde{\pi}}_{\mathcal{M}^{[c]}}(C_{\rho} \mid H_t) \nonumber \\
        & = \mathrm{Pr}^{\tilde{\pi}}_{\mathcal{M}^{[c]}}(C_{\rho'})
        % \intertext{since $C_{\rho \rho'} \subseteq D_{\rho'} := \bigcup_{\rho \in A_t} C_ $}
    \end{align}
    Consider a set of paths of length $n+1$ starting from $s_I$ that reach $E$: $\mathcal{P}_n = \{ \rho' \in \mathrm{FinPath} \mid \rho'_0 = s_I, |\rho'| = n+1, \exists i > 0 \mbox{ s.t. } \forall j \geq i, \rho_j \in E\}$, and let $B_{t, n} = \bigcup_{\rho' \in \mathcal{P}_n} D_t(\rho')$. By (\ref{cylinder_probability_after_restart}), we have 
    \begin{align}
        \mathrm{Pr}^{\tilde{\pi}}_{\mathcal{M}^{[c]}}(B_{t, n} \mid H_t)
        & = \sum_{\rho' \in \mathcal{P}_n} \mathrm{Pr}^{\tilde{\pi}}_{\mathcal{M}^{[c]}}(C_{\rho'}) \nonumber \\
        & = \mathrm{Pr}^{\tilde{\pi}}_{\mathcal{M}^{[c]}}(\bigcup_{\rho' \in \mathcal{P}_n} C_{\rho'}) \nonumber \\
        & = \mathrm{Pr}^{\tilde{\pi}}_{\mathcal{M}^{[c]}}(\eventually^{\leq n} E),
    \end{align}
    where $\eventually^{\leq T} E$ denotes the event of reaching $E$ within $T$ steps.
    Note that, since $E$ is the set of terminal states, if $m < n$, then, for any $\rho \in \mathcal{P}_m$, there exists $\rho' \in \mathcal{P}_n$ such that $\rho'_{:m} = \rho$.
    Thus, $(B_{t,n})_{n > 0}$ is increasing and $\bigcup_{n > 0} B_{t, n} = \{ \rho \in \mathrm{InfPath} \mid \rho \models \eventually E\} \cap H_t$. Hence,
    \begin{align}
        \mathrm{Pr}^{\tilde{\pi}}_{\mathcal{M}^{[c]}}(\eventually E \mid H_t) = \mathrm{Pr}^{\tilde{\pi}}_{\mathcal{M}^{[c]}}(\eventually E).
    \end{align}
    Thus, since $\eventually c$ represents $\bigcup_{t>0} H_t$ and $H_t \cap H_{t'} = \emptyset$ if $t \neq t'$, we have, when $\mathrm{Pr}^{\tilde{\pi}}_{\mathcal{M}^{[c]}}( \eventually c) > 0$,
    \begin{align}
    \label{restart_reachability}
        \mathrm{Pr}^{\tilde{\pi}}_{\mathcal{M}^{[c]}}(\eventually E \mid \eventually c) & = \sum_{t > 0} \mathrm{Pr}^{\tilde{\pi}}_{\mathcal{M}^{[c]}}(\eventually E \mid H_t) \mathrm{Pr}^{\tilde{\pi}}_{\mathcal{M}^{[c]}}(H_t \mid \eventually c) \nonumber \\
        & = \mathrm{Pr}^{\tilde{\pi}}_{\mathcal{M}^{[c]}}(\eventually E ) \sum_{t > 0} \mathrm{Pr}^{\tilde{\pi}}_{\mathcal{M}^{[c]}}(H_t \mid \eventually c) \nonumber \\
        & = \mathrm{Pr}^{\tilde{\pi}}_{\mathcal{M}^{[c]}}(\eventually E ).
    \end{align}
    % Consider $\mathcal{M}^{[c]}$ under $\tilde{\pi}$. 
    % By Def.\ \ref{def:restart_based_modification}, the only change of $\mathcal{M}^{[c]}$ from $\mathcal{M}$ is that the outgoing transitions from $c$ are replaced with the transition into $s_I$. By construction of $\tilde{\pi}$ in (\ref{tilde_pi}), the distribution of trajectories within any episode up to the first time hitting $c$ in $\mathcal{M}^{[c]}$ under $\tilde{\pi}$ is identical to the distribution in $\mathcal{M}$ under $\pi$ up to the first time hitting $c$. That is,
    By Def.\ \ref{def:restart_based_modification} and (\ref{tilde_pi}), for any $\rho = s_0a_0 \ldots s_{t+1} \in \mathrm{FinPath}$ such that $c \not\in \rho_{:t}$, we have
    \begin{align}
        \mathrm{Pr}^\pi_\mathcal{M}(C_\rho) & = \Pi_{i=0}^t P(s_i,a_i, s_{i+1})\pi(s_0a_0 \ldots s_i, a_{i}) \nonumber \\
        & = \Pi_{i=0}^t P^c(s_i,a_i, s_{i+1})\tilde{\pi}(s_0a_0 \ldots s_i, a_{i}) \nonumber \\
        & = \mathrm{Pr}^{\tilde{\pi}}_{\mathcal{M}^{[c]}}(C_\rho).
    \end{align}
    % Moreover, when we hit $c$ before $E$, i.e., the event $A_2$ happens, on $\mathcal{M}^{[c]}$, we deterministically jump back to $s_I$. 
    Hence, combined with (\ref{restart_reachability}), we have
    \begin{align}  
        \mathrm{Pr}^{\tilde{\pi}}_{\mathcal{M}^{[c]}}(\eventually E) & = \mathrm{Pr}^{\tilde{\pi}}_{\mathcal{M}^{[c]}}(\eventually E \land \neg \eventually c) + \mathrm{Pr}^{\tilde{\pi}}_{\mathcal{M}^{[c]}}(\eventually c) \mathrm{Pr}^{\tilde{\pi}}_{\mathcal{M}^{[c]}}(\eventually E ) \\
        & = \mathrm{Pr}^\pi_\mathcal{M}(\eventually E \land \neg \eventually c) + \mathrm{Pr}^\pi_\mathcal{M}(\eventually c)\mathrm{Pr}^{\tilde{\pi}}_{\mathcal{M}^{[c]}}(\eventually E) \\
        & = \mathrm{Pr}^\pi_\mathcal{M}(\eventually E \mid \neg \eventually c).
    \end{align}
    Moreover, if $\pi$ is memoryless, then $\pi(\rho, a) = \pi(\mathrm{last}(\rho), a)$, and hence $\pi(\mathrm{suf}_c(\rho), a) = \pi(\rho, a)$ for any $\rho \in \mathrm{FinPath}$ and any $a \in \mathcal{A}(\mathrm{last}(\rho))$. Thus, we have $\tilde{\pi} = \pi$.
    % we have
    % \begin{align}
        % \mathrm{Pr}^\pi_{\mathcal{M}^{[c]}}(\eventually E) & = \mathrm{Pr}^\pi_{\mathcal{M}^{[c]}}(\eventually E \;|\; \neg \eventually c) \mathrm{Pr}^\pi_{\mathcal{M}^{[c]}}(\neg \eventually c) + \mathrm{Pr}^\pi_{\mathcal{M}^{[c]}}(\eventually E \mid \eventually c) \mathrm{Pr}^\pi_{\mathcal{M}^{[c]}}(\eventually c) \nonumber \\
        % & = \mathrm{Pr}^\pi_{\mathcal{M}}(\eventually E \;|\; \neg \eventually c) \mathrm{Pr}^\pi_{\mathcal{M}}(\neg \eventually c) + \mathrm{Pr}^\pi_{\mathcal{M}^{[c]}}(\eventually E \mid \eventually c) \mathrm{Pr}^\pi_{\mathcal{M}}(\eventually c) \nonumber \\
         % & = \mathrm{Pr}^\pi_{\mathcal{M}}(\eventually E \;|\; \neg \eventually c) \mathrm{Pr}^\pi_{\mathcal{M}}(\neg \eventually c) + \mathrm{Pr}^\pi_{\mathcal{M}^{[c]}}(\eventually E) \mathrm{Pr}^\pi_{\mathcal{M}}(\eventually c) \nonumber \\
        % & =  \mathrm{Pr}^\pi_{\mathcal{M}}(\eventually E \;|\; \neg \eventually c).
    % \end{align}
\end{proof}
% Recall that the probability space for an MDP $\mathcal{M}$ under a policy $\pi$ is constructed as $(\mathrm{InfPath}, \mathcal{F}_\mathrm{InfPath}, \mathrm{Pr}^\pi_\mathcal{M})$ \citep{baier2008principles}.
\begin{lemma}
\label{lemma:policy_convex_combination}
    For any MDP $\mathcal{M}$, any policy $\pi_1, \pi_2$, and any $\lambda \in [0,1]$, there exists a policy $\pi_\lambda$ such that, for any $\rho \in \mathrm{FinPath}$,
    \begin{align}
    \label{pi_lambda_equation}
        \mathrm{Pr}^{\pi_\lambda}_{\mathcal{M}}(C_\rho) = \lambda \mathrm{Pr}^{\pi_1}_{\mathcal{M}}(C_\rho) + (1 - \lambda) \mathrm{Pr}^{\pi_2}_{\mathcal{M}}(C_\rho),
    \end{align}
    where $C_\rho = \{\rho \rho' \mid \rho' \in \mathrm{InfPath} \}$ for any $\rho \in \mathrm{FinPath}$.
\end{lemma}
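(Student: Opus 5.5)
We construct the mixed policy $\pi_\lambda$ explicitly. The path measure of $\pi_\lambda$ should be the $\lambda$-mixture of the two path measures, so at each finite path $\rho$ it should pick an action with probability given by the posterior-weighted mixture of $\pi_1$ and $\pi_2$. For $\rho = s_0a_0\ldots s_n \in \mathrm{FinPath}$, write $p_i(\rho) = \mathrm{Pr}^{\pi_i}_{\mathcal{M}}(C_\rho)$ for $i=1,2$. For $a \in A$ define
\begin{align}
    \pi_\lambda(\rho, a) = \frac{\lambda p_1(\rho)\pi_1(\rho,a) + (1-\lambda) p_2(\rho)\pi_2(\rho,a)}{\lambda p_1(\rho) + (1-\lambda) p_2(\rho)}
\end{align}
whenever the denominator is positive. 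Otherwise set $\pi_\lambda(\rho,\cdot) = \pi_1(\rho,\cdot)$; any choice works there. This is a convex combination of two distributions on actions, so $\pi_\lambda(\rho,\cdot)$ is a probability distribution and $\pi_\lambda$ is a legitimate (history-dependent) policy.

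We prove (\ref{pi_lambda_equation}) by induction on the length $|\rho|$. The base case is $\rho = s_0$. All three measures start in the same initial state, so $\mathrm{Pr}(C_{s_0})$ equals $1$ if $s_0 = s_I$ and $0$ otherwise for each policy, and the identity holds. For the inductive step, consider $\rho a s'$ with $\mathrm{last}(\rho)=s$. By the product formula for cylinder probabilities (as used in the proof of Lemma \ref{lemma:conditioned_reachability_to_reachability_modified_MDP}),
\begin{align}
    \mathrm{Pr}^{\pi}_{\mathcal{M}}(C_{\rho a s'}) = \mathrm{Pr}^{\pi}_{\mathcal{M}}(C_\rho)\,\pi(\rho,a)\,P(s,a,s').
\end{align}
Suppose first that $\lambda p_1(\rho) + (1-\lambda)p_2(\rho) > 0$. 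The induction hypothesis says $\mathrm{Pr}^{\pi_\lambda}_{\mathcal{M}}(C_\rho)$ equals this denominator. Multiplying it by $\pi_\lambda(\rho,a)P(s,a,s')$ cancels the denominator and gives $\lambda p_1(\rho)\pi_1(\rho,a)P(s,a,s') + (1-\lambda)p_2(\rho)\pi_2(\rho,a)P(s,a,s')$, which is the required right-hand side. Now suppose the denominator is zero. Then both $\lambda p_1(\rho)$ and $(1-\lambda)p_2(\rho)$ vanish, and so does $\mathrm{Pr}^{\pi_\lambda}_{\mathcal{M}}(C_\rho)$. Both sides for $\rho a s'$ are then $0$, since the extension terms carry the same vanishing factors.

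The main point requiring care is the degenerate zero-denominator case, because the identity must hold for every $\rho$, not only for positive-probability ones. The decomposition above shows that the arbitrary choice of $\pi_\lambda$ there is harmless. A second subtlety is that the paper's definition of $C_\rho$ in this lemma uses concatenation with infinite paths, while the cylinder-product formula is stated for the standard construction. These describe the same event, because concatenation requires the first state of $\rho'$ to equal $\mathrm{last}(\rho)$. Finally, no extension beyond cylinders is needed, since the statement only asserts equality on cylinder sets. By Carathéodory uniqueness, it would then also extend to all measurable sets, which is presumably how it will be used later.
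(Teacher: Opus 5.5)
Your proof is correct and takes essentially the same route as the paper: the same posterior-weighted mixture $\pi_\lambda(\rho,\cdot)=\alpha(\rho)\pi_1(\rho,\cdot)+(1-\alpha(\rho))\pi_2(\rho,\cdot)$ with $\alpha(\rho)=\lambda p_1(\rho)/(\lambda p_1(\rho)+(1-\lambda)p_2(\rho))$ and default $\pi_1$, then induction on $|\rho|$ via the cylinder product formula. Your version is slightly more careful in two places. The guard ``denominator positive'' avoids a $0/0$ that the paper's guard $p_1(\rho)>0 \lor p_2(\rho)>0$ leaves when $\lambda\in\{0,1\}$. You also treat the zero-denominator case in the inductive step explicitly, where the paper only glosses over it.
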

\begin{proof}
    For any $\pi_1$ and $\pi_2$, and any $\lambda \in [0,1]$, we define $\pi_\lambda : \mathrm{FinPath} \times A \to [0,1]$ such that, for any $\rho \in \mathrm{FinPath}$,
    \begin{align}
        \pi_\lambda(\rho, a) = \alpha(\rho)\pi_1(\rho, a) + (1 - \alpha(\rho))\pi_2(\rho, a),
    \end{align}
    where
    \begin{align}
        \alpha(\rho) = 
        \begin{cases}
            \frac{\lambda \mathrm{Pr}^{\pi_1}_\mathcal{M}(C_\rho)}{\lambda \mathrm{Pr}^{\pi_1}_\mathcal{M}(C_\rho) + (1 - \lambda)\mathrm{Pr}^{\pi_2}_\mathcal{M}(C_\rho)} & \mbox{if } \mathrm{Pr}^{\pi_1}_\mathcal{M}(C_\rho) > 0 \lor \mathrm{Pr}^{\pi_2}_\mathcal{M}(C_\rho) > 0, \\
            1 & \mbox{otherwise}.
        \end{cases}
    \end{align}
    We show the claim by induction.
    \begin{enumerate}
        \item If $|\rho|=1$, that is the length of $\rho$ is $0$, then we have 
        \begin{align}
            \mathrm{Pr}^{\pi_\lambda}_\mathcal{M}(C_\rho) =
            \begin{cases}
                1 & \mbox{if } \rho = s_I, \\
                0 & \mbox{ otherwise}.
            \end{cases}
        \end{align}
        Likewise,
        \begin{align}
            \lambda \mathrm{Pr}^{\pi_1}_{\mathcal{M}}(C_\rho) + (1 - \lambda) \mathrm{Pr}^{\pi_2}_{\mathcal{M}}(C_\rho) =
            \begin{cases}
                1 & \mbox{if } \rho = s_I, \\
                0 & \mbox{ otherwise}.
            \end{cases}
        \end{align}
        \item Suppose that we have (\ref{pi_lambda_equation}) for any $\rho \in \mathrm{FinPath}$ with $|\rho| = n$, then, for any $\rho' = \rho as'$, we have
        \begin{align}
            \mathrm{Pr}^{\pi_\lambda}_\mathcal{M}(C_{\rho'}) & = \mathrm{Pr}^{\pi_\lambda}_\mathcal{M}(C_\rho)\pi_\lambda(\rho, a)P(s,a,s'), \nonumber \\
            \intertext{where $s$ is the last state of $\rho$, by the assumption and the definition of $\pi_\lambda$, we have}
            & = ( \lambda \mathrm{Pr}^{\pi_1}_\mathcal{M}(C_\rho) \pi_1(\rho, a) + (1 - \lambda) \mathrm{Pr}^{\pi_2}_\mathcal{M}(C_\rho) \pi_2(\rho, a)) P(s,a,s') \nonumber \\
            & = \lambda \mathrm{Pr}^{\pi_1}_{\mathcal{M}}(C_{\rho '}) + (1 - \lambda) \mathrm{Pr}^{\pi_2}_{\mathcal{M}}(C_{\rho '}).
        \end{align}
    \end{enumerate}
\end{proof}

\begin{lemma}
\label{lemma:policy_convex_combination_any_measurable}
    For any MDP $\mathcal{M}$, any policy $\pi_1, \pi_2$, and any $\lambda \in [0,1]$, there exists a policy $\pi_\lambda$ such that, for any measurable sets $\mathcal{T} \subseteq \mathrm{InfPath}$,
    \begin{align}
    \label{pi_lambda_equation_any_measurable}
        \mathrm{Pr}^{\pi_\lambda}_{\mathcal{M}}(\mathcal{T}) = \lambda \mathrm{Pr}^{\pi_1}_{\mathcal{M}}(\mathcal{T}) + (1 - \lambda) \mathrm{Pr}^{\pi_2}_{\mathcal{M}}(\mathcal{T}).
    \end{align}
\end{lemma}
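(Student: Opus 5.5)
We take $\pi_\lambda$ to be exactly the policy constructed in the proof of Lemma~\ref{lemma:policy_convex_combination}. We then extend the identity (\ref{pi_lambda_equation}) from cylinder sets to all measurable sets using the uniqueness of measures that agree on a generating $\pi$-system. Since that lemma establishes the identity only for cylinders $C_\rho$, the whole task is a measure-theoretic extension; no new construction of policies is needed.

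\textbf{Step 1.} Define the set function $\mu(\mathcal{T}) := \lambda \mathrm{Pr}^{\pi_1}_{\mathcal{M}}(\mathcal{T}) + (1-\lambda)\mathrm{Pr}^{\pi_2}_{\mathcal{M}}(\mathcal{T})$ on the $\sigma$-algebra $\mathcal{F}_{\mathrm{InfPath}}$ generated by the cylinder sets. It is a convex combination of two probability measures, so it is nonnegative and countably additive with $\mu(\mathrm{InfPath}) = 1$; hence $\mu$ is a probability measure on the same measurable space as $\mathrm{Pr}^{\pi_\lambda}_{\mathcal{M}}$.

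\textbf{Step 2.} Check that the family $\mathcal{C} = \{C_\rho \mid \rho \in \mathrm{FinPath}\} \cup \{\emptyset\}$ is a $\pi$-system generating $\mathcal{F}_{\mathrm{InfPath}}$. Take two cylinders $C_\rho$ and $C_{\rho'}$. If one of $\rho, \rho'$ is a prefix of the other, their intersection is the cylinder of the longer path. Otherwise the intersection is empty. By Lemma~\ref{lemma:policy_convex_combination}, $\mathrm{Pr}^{\pi_\lambda}_{\mathcal{M}}$ and $\mu$ agree on every $C_\rho$, and trivially on $\emptyset$. They also agree on $\mathrm{InfPath}$, since both assign it mass $1$; equivalently, $\mathrm{InfPath}$ is the cylinder of the length-one path $s_I$ up to a null set.

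\textbf{Step 3.} Apply Dynkin's $\pi$--$\lambda$ theorem, i.e.\ uniqueness of extension for finite measures agreeing on a generating $\pi$-system \citep{durrett2019probability}. This gives $\mathrm{Pr}^{\pi_\lambda}_{\mathcal{M}} = \mu$ on all of $\mathcal{F}_{\mathrm{InfPath}}$, which is (\ref{pi_lambda_equation_any_measurable}).

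\textbf{Main obstacle.} The only delicate point is bookkeeping about the sample space. The cylinder notation $C_\rho$ is defined in two slightly different ways in the text (as extensions $\rho\rho'$, and via prefixes), and paths not starting at $s_I$ have measure zero under every policy. I would fix the convention that $\mathrm{InfPath}$ consists of infinite paths whose cylinders generate the standard $\sigma$-algebra of \citep{baier2008principles}, so that the generating class is indeed closed under intersection. The total masses coincide because all measures involved are probability measures.
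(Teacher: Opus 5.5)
Your proposal is correct and follows the paper's proof. The paper likewise takes $\pi_\lambda$ from Lemma~\ref{lemma:policy_convex_combination} and extends the cylinder identity to all measurable sets via the $\pi$--$\lambda$ theorem (Theorem 2.1.6 in Durrett), while you spell out the steps it leaves implicit: that cylinders together with $\emptyset$ form a generating $\pi$-system, and that both sides are probability measures and therefore also agree on $\mathrm{InfPath}$.
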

\begin{proof}
    This immediately follows from Lemma \ref{lemma:policy_convex_combination} and Theorem 2.1.6 in \citep{durrett2019probability}.
\end{proof}

Based on Lemmas \ref{lemma:conditioned_reachability_to_reachability_modified_MDP} and \ref{lemma:policy_convex_combination_any_measurable}, we prove Proposition \ref{prop:soundness_completeness_restart_modification}.
\begin{proof}[Proof of Proposition \ref{prop:soundness_completeness_restart_modification}]
% Suppose $\Pi_{\mathcal{M}, c} = \emptyset$. Then, we have the two cases: (i) for any $\pi$, $\mathrm{Pr}^\pi_\mathcal{M}(\eventually c) = 0$, and (ii) for any $\pi$, $\mathrm{Pr}^\pi_\mathcal{M}(\eventually c) = 1$. In case (i), $\{c\}$ violates the condition \textbf{(C2)} in Def. \ref{def:SPR_cause}. In case (ii), $\{c\}$ violates the condition \textbf{(C1)} in Def. \ref{def:SPR_cause}. Next, we consider $\Pi_{\mathcal{M}, c} \neq \emptyset$.
Suppose $\Pi_{\mathcal{M}, c} = \emptyset$. Then, $\{c\}$ violates the condition \textbf{(C2)} in Def. \ref{def:SPR_cause}. Next, we consider $\Pi_{\mathcal{M}, c} \neq \emptyset$.
    \begin{enumerate}
        \item Suppose $\mathrm{Pr}^\mathrm{min}_{\mathcal{M}, c}(\eventually E) > \mathrm{Pr}^\mathrm{max}_{\mathcal{M}_{[c]}}(\eventually E)$. Then, by Lemma \ref{lemma:conditioned_reachability_to_reachability_modified_MDP}, for any $\pi \in \Pi_{\mathcal{M}, c}$, there exists a policy $\tilde{\pi}$ such that $\mathrm{Pr}^{\tilde{\pi}}_{\mathcal{M}^{[c]}}(\eventually E) = \mathrm{Pr}^\pi_{\mathcal{M}}(\eventually E \mid \neg \eventually c)$. Thus,
        \begin{align}
            \mathrm{Pr}^\pi_{\mathcal{M}}(\eventually E \mid \eventually c) - \mathrm{Pr}^\pi_\mathcal{M}(\eventually E)
            & = (1 - \mathrm{Pr}^\pi_\mathcal{M}(\eventually c)) (\mathrm{Pr}^\pi_{\mathcal{M}}(\eventually E \mid \eventually c) - \mathrm{Pr}^{\tilde{\pi}}_{\mathcal{M}^{[c]}}(\eventually E)), \nonumber \\
            \intertext{hence,}
            & \geq (1 - \mathrm{Pr}^\pi_\mathcal{M}(\eventually c)) (\mathrm{Pr}^\mathrm{min}_{\mathcal{M}, c}(\eventually E) - \mathrm{Pr}^\mathrm{max}_{\mathcal{M}^{[c]}}(\eventually E)) \nonumber \\
            & > 0.
        \end{align}
        \item Suppose $\mathrm{Pr}^\mathrm{min}_{\mathcal{M}, c}(\eventually E) < \mathrm{Pr}^\mathrm{max}_{\mathcal{M}^{[c]}}(\eventually E)$. Then, there exists $\pi_\mathrm{2mem} : \mathrm{FinPath} \to A$ such that $\pi_\mathrm{2mem}$ is identical to a memoryless policy $\pi_\mathrm{max}$ with  $\mathrm{Pr}^{\pi_\mathrm{max}}_{\mathcal{M}^{[c]}}(\eventually E) = \mathrm{Pr}^\mathrm{max}_{\mathcal{M}^{[c]}}(\eventually E)$ before hitting $c$, and identical to a memoryless policy $\pi_\mathrm{min}$ with $\mathrm{Pr}^{\pi_\mathrm{min}}_{\mathcal{M}, c}(\eventually E) = \mathrm{Pr}^\mathrm{min}_{\mathcal{M}, c}(\eventually E)$ after hitting $c$.
        % Hence, we have $\mathrm{Pr}^\mathrm{min}_{\mathcal{M}, c}(\eventually E) < \mathrm{Pr}^{\pi_\mathrm{2mem}}_{\mathcal{M}^{[c]}}(\eventually E) \leq \mathrm{Pr}^\mathrm{max}_{\mathcal{M}^{[c]}}(\eventually E)$. 
        Moreover, there exists a policy $\pi$ such that $\pi$ is identical to a memoryless policy $\pi'$ with $\mathrm{Pr}^{\pi'}_\mathcal{M}(\eventually c) \in (0, 1)$ before hitting $c$, and identical $\pi_\mathrm{min}$ after hitting $c$. By Lemma \ref{lemma:policy_convex_combination_any_measurable}, for any $\lambda \in (0,1)$, there exists a policy $\pi_\lambda$ such that, for any measurable set $\mathcal{T} \subseteq \mathrm{InfPath}$,
        \begin{align}
            \mathrm{Pr}^{\pi_\lambda}_{\mathcal{M}}(\mathcal{T}) = \lambda \mathrm{Pr}^{\pi_\mathrm{2mem}}_{\mathcal{M}}(\mathcal{T}) + (1 - \lambda) \mathrm{Pr}^\pi_{\mathcal{M}}(\mathcal{T}).
        \end{align}
        Hence,
        \begin{align}
            & \mathrm{Pr}^{\pi_\lambda}_{\mathcal{M}}(\eventually c) \geq (1 - \lambda) \mathrm{Pr}^{\pi}_{\mathcal{M}}(\eventually c) > 0, \\
            & \mathrm{Pr}^{\pi_\lambda}_{\mathcal{M}}(\neg \eventually c) \geq (1 - \lambda) \mathrm{Pr}^{\pi}_{\mathcal{M}}(\neg \eventually c) > 0.
        \end{align}
        % Moreover, we have
        % \begin{align}
            % \mathrm{Pr}^{\pi_\lambda}_{\mathcal{M}^{[c]}}(\eventually E) & \leq \lambda \mathrm{Pr}^{\pi_\mathrm{max}}_{\mathcal{M}^{[c]}}(\eventually E) + (1 - \lambda) \mathrm{Pr}^\pi_{\mathcal{M}^{[c]}}(\eventually E), \nonumber \\
            % \intertext{by Lemmas \ref{lemma:conditioned_reachability_to_reachability_modified_MDP} and \ref{lemma:policy_convex_combination_any_measurable},}
            % & = \lambda \mathrm{Pr}^{\pi_\mathrm{max}}_{\mathcal{M}}(\eventually E \mid \neg \eventually c) + (1 - \lambda) \mathrm{Pr}^\pi_{\mathcal{M}}(\eventually E \mid \neg \eventually c) \nonumber \\
            % & = \mathrm{Pr}^{\pi_\lambda}_{\mathcal{M}}(\eventually E \mid \neg \eventually c).
        % \end{align}
        Moreover,
        \begin{align}
            \mathrm{Pr}^{\pi_\lambda}_{\mathcal{M}}(\eventually E \mid \eventually c) & = \frac{\lambda \mathrm{Pr}^{\pi_\mathrm{2mem}}(\eventually E \land \eventually c) + (1 - \lambda)\mathrm{Pr}^{\pi}(\eventually E \land \eventually c)}{\lambda \mathrm{Pr}^{\pi_\mathrm{2mem}}(\eventually c) + (1 - \lambda)\mathrm{Pr}^{\pi}(\eventually c)} \nonumber \\
                & = \mathrm{Pr}^{\mathrm{min}}_{\mathcal{M}, c}(\eventually E).
        \end{align}
        So, we have
        \begin{align}
            \mathrm{Pr}^{\pi_\lambda}_{\mathcal{M}}(\eventually E \mid \eventually c) - \mathrm{Pr}^{\pi_\lambda}_\mathcal{M}(\eventually E) & = (1 - \mathrm{Pr}^{\pi_\lambda}_\mathcal{M}(\eventually c)) (\mathrm{Pr}^{\pi_\lambda}_{\mathcal{M}}(\eventually E \mid \eventually c) - \mathrm{Pr}^{\pi_\lambda}_{\mathcal{M}}(\eventually E \mid \neg \eventually c)) \nonumber \\
            & = (1 - \mathrm{Pr}^{\pi_\lambda}_\mathcal{M}(\eventually c)) (\mathrm{Pr}^{\mathrm{min}}_{\mathcal{M},c}(\eventually E) - \mathrm{Pr}^{\pi_\lambda}_{\mathcal{M}}(\eventually E \mid \neg \eventually c)).
            % &\hspace{5mm} \times \left\{(\mathrm{Pr}^{\mathrm{min}}_{\mathcal{M},c}(\eventually E) - \mathrm{Pr}^{\pi}_{\mathcal{M}}(\eventually E \mid \neg \eventually c)) - \lambda (\mathrm{Pr}^{\pi_\mathrm{max}}_{\mathcal{M}}(\eventually E \mid \neg \eventually c) - \mathrm{Pr}^{\pi}_{\mathcal{M}}(\eventually E \mid \neg \eventually c)) \right\} \nonumber
            % \intertext{by Lemma \ref{lemma:conditioned_reachability_to_reachability_modified_MDP},}
            % & = (1 - \mathrm{Pr}^{\pi_\lambda}_\mathcal{M}(\eventually c)) \nonumber \\
            % &\hspace{5mm} \times \left\{(\mathrm{Pr}^{\mathrm{min}}_{\mathcal{M},c}(\eventually E) - \mathrm{Pr}^{\pi}_{\mathcal{M}^{[c]}}(\eventually E)) - \lambda (\mathrm{Pr}^{\mathrm{max}}_{\mathcal{M}^{[c]}}(\eventually E) - \mathrm{Pr}^{\pi}_{\mathcal{M}^{[c]}}(\eventually E)) \right\}
        \end{align}
        We consider the two cases.
        \begin{enumerate}
            \item $\mathrm{Pr}^{\pi}_{\mathcal{M}}(\eventually E \mid \neg \eventually c) = \mathrm{Pr}^{\mathrm{max}}_{\mathcal{M}}(\eventually E \mid \neg \eventually c)$: then, we have
            \begin{align}
                \mathrm{Pr}^{\pi_\lambda}_{\mathcal{M}}(\eventually E \mid \neg \eventually c) & = \frac{\lambda \mathrm{Pr}^{\pi_\mathrm{max}}(\eventually E \land \neg \eventually c) + (1 - \lambda)\mathrm{Pr}^{\pi}(\eventually E \land \neg \eventually c)}{\lambda \mathrm{Pr}^{\pi_\mathrm{max}}(\neg \eventually c) + (1 - \lambda)\mathrm{Pr}^{\pi}(\neg \eventually c)} \nonumber \\
                & = \mathrm{Pr}^{\pi_\mathrm{max}}_{\mathcal{M}}(\eventually E \mid \neg \eventually c),
                \intertext{by Lemma \ref{lemma:conditioned_reachability_to_reachability_modified_MDP},}
                & = \mathrm{Pr}^{\mathrm{max}}_{\mathcal{M}^{[c]}}(\eventually E).
            \end{align}
            Thus, by the assumption, $\mathrm{Pr}^{\pi_\lambda}_{\mathcal{M}}(\eventually E \mid \eventually c) < \mathrm{Pr}^{\pi_\lambda}_\mathcal{M}(\eventually E)$.
            \item $\mathrm{Pr}^{\pi}_{\mathcal{M}}(\eventually E \mid \neg \eventually c) < \mathrm{Pr}^{\mathrm{max}}_{\mathcal{M}}(\eventually E \mid \neg \eventually c)$: then, 
            \begin{align}
                \mathrm{Pr}^{\pi_\lambda}_{\mathcal{M}}(\eventually E \mid \neg \eventually c) & = \frac{\lambda \mathrm{Pr}^{\pi_\mathrm{2mem}}_{\mathcal{M}}(\eventually E \land \neg \eventually c) + (1 - \lambda) \mathrm{Pr}^\pi_{\mathcal{M}}(\eventually E \land \neg \eventually c)}{\lambda \mathrm{Pr}^{\pi_\mathrm{2mem}}_{\mathcal{M}}(\neg \eventually c) + (1 - \lambda) \mathrm{Pr}^\pi_{\mathcal{M}}(\neg \eventually c)} \nonumber \\
                & = \frac{\lambda \mathrm{Pr}^{\pi_\mathrm{max}}_{\mathcal{M}}(\eventually E \land \neg \eventually c) + (1 - \lambda) \mathrm{Pr}^\pi_{\mathcal{M}}(\eventually E \land \neg \eventually c)}{\lambda \mathrm{Pr}^{\pi_\mathrm{max}}_{\mathcal{M}}(\neg \eventually c) + (1 - \lambda) \mathrm{Pr}^\pi_{\mathcal{M}}(\neg \eventually c)}
            \end{align}
            is continuous with respect to $\lambda$. Moreover, as $\lambda \to 1$, we have $\mathrm{Pr}^{\pi_\lambda}_{\mathcal{M}}(\eventually E \mid \neg \eventually c) = \mathrm{Pr}^{\pi_\mathrm{max}}_{\mathcal{M}}(\eventually E \mid \neg \eventually c)$. So, by Lemma \ref{lemma:conditioned_reachability_to_reachability_modified_MDP}, $\mathrm{Pr}^{\pi_\lambda}_{\mathcal{M}}(\eventually E \mid \neg \eventually c) = \mathrm{Pr}^{\mathrm{max}}_{\mathcal{M}^{[c]}}(\eventually E)$ as $\lambda \to 1$. Hence, combined with the assumption, there exists $\lambda < 1$ such that $\mathrm{Pr}^{\mathrm{min}}_{\mathcal{M},c}(\eventually E) < \mathrm{Pr}^{\pi_\lambda}_{\mathcal{M}}(\eventually E \mid \neg \eventually c)$. Thus, $\mathrm{Pr}^{\pi_\lambda}_{\mathcal{M}}(\eventually E \mid \eventually c) < \mathrm{Pr}^{\pi_\lambda}_\mathcal{M}(\eventually E)$.
        \end{enumerate}
        % By the construction of $\pi_\lambda$, we have $\mathrm{Pr}^{\pi_\lambda}_{\mathcal{M}^{[c]}}(\eventually E) \geq \lambda \mathrm{Pr}^{\pi_\mathrm{2mem}}_{\mathcal{M}^{[c]}}(\eventually E) > \mathrm{Pr}^\mathrm{min}_{\mathcal{M}, c}(\eventually E)$, hence
        % We consider the two cases.
        % \begin{enumerate}
            % \item $\mathrm{Pr}^{\pi}_{\mathcal{M}^{[c]}}(\eventually E) = \mathrm{Pr}^{\mathrm{max}}_{\mathcal{M}^{[c]}}(\eventually E)$: we have immediately $\mathrm{Pr}^{\pi_\lambda}_{\mathcal{M}}(\eventually E \mid \eventually c) < \mathrm{Pr}^{\pi_\lambda}_\mathcal{M}(\eventually E)$ by the assumption.
            % \item $\mathrm{Pr}^{\pi}_{\mathcal{M}^{[c]}}(\eventually E) < \mathrm{Pr}^{\mathrm{max}}_{\mathcal{M}^{[c]}}(\eventually E)$: by the assumption, we can choose $\lambda$ such that $\lambda > \frac{\mathrm{Pr}^\mathrm{min}_{\mathcal{M}, c}(\eventually E) - \mathrm{Pr}^{\pi}_{\mathcal{M}^{[c]}} (\eventually E)}{ \mathrm{Pr}^{\mathrm{max}}_{\mathcal{M}^{[c]}} (\eventually E) - \mathrm{Pr}^{\pi}_{\mathcal{M}^{[c]}} (\eventually E)}$ and $\lambda \in (0, 1)$. Thus, $\mathrm{Pr}^{\pi_\lambda}_{\mathcal{M}}(\eventually E \mid \eventually c) < \mathrm{Pr}^{\pi_\lambda}_\mathcal{M}(\eventually E)$.
        % \end{enumerate}
        \item Suppose $\mathrm{Pr}^\mathrm{min}_{\mathcal{M}, c}(\eventually E) = \mathrm{Pr}^\mathrm{max}_{\mathcal{M}_{[c]}}(\eventually E)$. Then, we consider the following two cases.
        \begin{enumerate}
            \item We suppose $c$ is not reachable from $s_I$ in $\mathcal{M}^{[c]}_\mathrm{max}$. For any $\pi$ such that $\mathrm{Pr}^\pi_{\mathcal{M}^{[c]}}(\eventually E) = \mathrm{Pr}^\mathrm{max}_{\mathcal{M}^{[c]}}(\eventually E)$, we have $\mathrm{Pr}^\pi_{\mathcal{M}^{[c]}}(\eventually c) = 0$. Thus, for any $\tilde{\pi}$ such that $\mathrm{Pr}^{\tilde{\pi}}_{\mathcal{M}^{[c]}}(\eventually c) \in (0, 1)$, we have $\mathrm{Pr}^{\tilde{\pi}}_{\mathcal{M}^{[c]}}(\eventually E) < \mathrm{Pr}^\mathrm{max}_{\mathcal{M}^{[c]}}(\eventually E)$. Thus, by Lemma \ref{lemma:conditioned_reachability_to_reachability_modified_MDP}, for any $\pi \in \Pi_{\mathcal{M}, c}$, there exists $\tilde{\pi}$ such that
            \begin{align}
                \mathrm{Pr}^\pi_\mathcal{M}(\eventually E \mid \eventually c) - \mathrm{Pr}^\pi_\mathcal{M}(\eventually E) & = (1 - \mathrm{Pr}^\pi_\mathcal{M}(\eventually c) ) (\mathrm{Pr}^\pi_{\mathcal{M}}(\eventually E \mid \eventually c) - \mathrm{Pr}^{\tilde{\pi}}_{\mathcal{M}^{[c]}}(\eventually E)) \\
                & \geq (1 - \mathrm{Pr}^\pi_\mathcal{M}(\eventually c) ) (\mathrm{Pr}^\mathrm{min}_{\mathcal{M}, c}(\eventually E) - \mathrm{Pr}^{\tilde{\pi}}_{\mathcal{M}^{[c]}}(\eventually E)) \\
                \intertext{by the assumption}
                & = (1 - \mathrm{Pr}^\pi_\mathcal{M}(\eventually c) ) (\mathrm{Pr}^\mathrm{max}_{\mathcal{M}^{{[c]}}}(\eventually E) - \mathrm{Pr}^{\tilde{\pi}}_{\mathcal{M}^{[c]}}(\eventually E)) \\
                & > 0.
            \end{align}
            Hence, the conditions \textbf{(C1)} and \textbf{(C2)} hold for $c$.
            \item We suppose $c$ is reachable in $\mathcal{M}^{[c]}_\mathrm{max}$. Then, there exists a memoryless policy $\pi$ such that $\mathrm{Pr}^\pi_{\mathcal{M}^{[c]}}(\eventually E) = \mathrm{Pr}^\mathrm{max}_{\mathcal{M}^{[c]}}(\eventually E)$ and $\mathrm{Pr}^\pi_{\mathcal{M}}(\eventually c) > 0$. We consider the two cases:
            \begin{enumerate}
                \item $\mathrm{Pr}^\pi_{\mathcal{M}}(\eventually c) = 1$: by Def. \ref{def:restart_based_modification}, we have $\mathrm{Pr}^{\mathrm{max}}_{\mathcal{M}^{[c]}}(\eventually E) = 0$. Hence, $\mathrm{Pr}^{\mathrm{min}}_{\mathcal{M}, c}(\eventually E) = 0$. Thus, by Lemma \ref{lemma:conditioned_reachability_to_reachability_modified_MDP}, there exists a memoryless policy $\pi'$ such that $\mathrm{Pr}^{\pi'}_{\mathcal{M}, c}(\eventually E) = \mathrm{Pr}^{\mathrm{min}}_{\mathcal{M}, c}(\eventually E)$,
                \begin{align}
                    \mathrm{Pr}^{\pi'}_\mathcal{M}(\eventually E \mid \eventually c) - \mathrm{Pr}^{\pi'}_\mathcal{M}(\eventually E) & = (1 - \mathrm{Pr}^{\pi'}_\mathcal{M}(\eventually c) ) (\mathrm{Pr}^{\pi'}_{\mathcal{M}, c}(\eventually E) - \mathrm{Pr}^{\pi'}_{\mathcal{M}^{[c]}}(\eventually E)) \\
                    & = 0.
                \end{align}
                \item $\mathrm{Pr}^\pi_{\mathcal{M}}(\eventually c) \in (0,1)$: by Lemma \ref{lemma:conditioned_reachability_to_reachability_modified_MDP}, we have $\mathrm{Pr}^\pi_{\mathcal{M}}(\eventually E \mid \neg \eventually c) = \mathrm{Pr}^\mathrm{max}_{\mathcal{M}^{[c]}}(\eventually E)$. Thus, there exists a policy $\pi' \in \Pi_{\mathcal{M}, c}$ such that $\mathrm{Pr}^{\pi'}_{\mathcal{M}}(\eventually E \mid \neg \eventually c) = \mathrm{Pr}^\mathrm{max}_{\mathcal{M}^{[c]}}(\eventually E)$ and $\mathrm{Pr}^{\pi'}_{\mathcal{M}, c}(\eventually E) = \mathrm{Pr}^\mathrm{min}_{\mathcal{M}, c}(\eventually E)$, hence we have
                \begin{align}
                    \mathrm{Pr}^{\pi'}_\mathcal{M}(\eventually E \mid \eventually c) - \mathrm{Pr}^{\pi'}_\mathcal{M}(\eventually E) & = (1 - \mathrm{Pr}^{\pi'}_\mathcal{M}(\eventually c) ) (\mathrm{Pr}^{\pi'}_{\mathcal{M}, c}(\eventually E) - \mathrm{Pr}^{\pi'}_{\mathcal{M}}(\eventually E \mid \neg \eventually c)) \\
                    & = (1 - \mathrm{Pr}^{\pi'}_\mathcal{M}(\eventually c) ) (\mathrm{Pr}^\mathrm{min}_{\mathcal{M}, c}(\eventually E) - \mathrm{Pr}^{\mathrm{max}}_{\mathcal{M}^{[c]}}(\eventually E)) \\
                    & = 0.
                \end{align}
                % $\{c\}$ violates \textbf{(C1)} in Def. \ref{def:SPR_cause} under a policy $\pi{''}$ such that .
            % \end{enumerate}Hence, the condition \textbf{(C1)} in Def. \ref{def:SPR_cause} does not hold for $c$ under $\pi$.
            \end{enumerate}
        \end{enumerate}
    \end{enumerate}
\end{proof}

% Moreover, $\mathcal{M}^{[c]}$ contains at most one non-trivial and non-bottom MEC, and the state $c$ is always included in the MEC.

To prove Theorems \ref{thm:sample_efficiency}, \ref{thm:Algorithm1_is_solution}, and \ref{thm:Algorithm1_is_anytime}, we additionally show the following Lemmas. Let $\mathrm{Tr}(\mathcal{M})$ be the number of transitions of an MDP $\mathcal{M}$ whose probabilities are less than $1$ and more than $0$, that is, $\mathrm{Tr}(\mathcal{M}) = |\{ (s,a,s') \in S \times A \times S \mid P(s,a,s') \in (0,1) \}|$.

\begin{lemma}
\label{lemma:transition_lower_bound}
    For any MDP, any random samples $\mathcal{D} = (s_i, a_i, s_{i+1})_{i}$, and any $\delta \in (0,1)$, we have, for any $(s,a,s') \in S \times A \times S$,
    \begin{align}
        & \mathbb{P}_\mathcal{D}\left( \hat{P}_\delta(s,a,s') \leq P(s,a,s') \right) \geq 1 - \delta,
        \intertext{and}
        & \mathbb{P}_\mathcal{D}\left( \hat{P}_\delta \leq P \right) \geq 1 - \delta \mathrm{Tr}(\mathcal{M}),
    \end{align}
    where $\mathbb{P}_\mathcal{D}$ is the probability measure for $\mathcal{D}$.
\end{lemma}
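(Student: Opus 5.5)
I would prove the per-transition bound by a case analysis on the definition (\ref{lower_bound_transition_prob}), and then obtain the uniform statement by a union bound. Fix $(s,a,s')$. Three cases are trivial, because $\hat{P}_\delta(s,a,s')\le P(s,a,s')$ holds deterministically, so the event has probability $1\ge 1-\delta$:
\begin{itemize}
\item If $\mathrm{Post}_{s,a}=\{s'\}$, then $P(s,a,s')=1=\hat{P}_\delta(s,a,s')$.
\item In the ``otherwise'' case, $\hat{P}_\delta(s,a,s')=0\le P(s,a,s')$.
\item If $N(s,a)>0$ and $|\mathrm{Post}_{s,a}|>1$ but $s'\notin\mathrm{Post}_{s,a}$, then $N(s,a,s')=0$ almost surely. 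Hence the estimate is $\max(0,-\sqrt{\cdot})=0$, which is again harmless.
\end{itemize}

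The remaining case is $N(s,a)>0$, $|\mathrm{Post}_{s,a}|>1$ and $s'\in\mathrm{Post}_{s,a}$. Here I would condition on $N(s,a)=n$. Given the value of $n$, the successors drawn from $(s,a)$ are i.i.d. Bernoulli indicators $\mathbb{1}[\text{successor}=s']$ with mean $P(s,a,s')$. This uses the Markov property of the sampling: each draw of a transition from $(s,a)$ is an independent sample from $P(s,a,\cdot)$, regardless of when it was made. One-sided Hoeffding then gives
\[
\mathbb{P}\Bigl(\tfrac{N(s,a,s')}{n}-P(s,a,s')\ge \sqrt{\tfrac{-\ln\delta}{2n}}\Bigr)\le \exp\bigl(-2n\cdot\tfrac{-\ln\delta}{2n}\bigr)=\delta .
\]
On the complement we have $\frac{N(s,a,s')}{n}-\sqrt{\cdot}<P(s,a,s')$, and since $P\ge0$ the clipped $\max(0,\cdot)$ is also below $P$. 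Averaging over $n$ (the case $n=0$ falls into the trivial branch) yields $\mathbb{P}_\mathcal{D}(\hat{P}_\delta(s,a,s')\le P(s,a,s'))\ge 1-\delta$.

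The main obstacle is justifying the conditioning step. $N(s,a)$ is random and may depend on earlier outcomes if pairs are chosen adaptively, so I would not rely on a plain i.i.d. argument. Instead I would use a standard ``stack of samples'' coupling: pre-draw an infinite i.i.d. sequence of successors for each $(s,a)$, and let the $k$-th visit of $(s,a)$ read the $k$-th entry. The first $n$ entries are then i.i.d. regardless of the stopping rule. If sample sizes are chosen adaptively, Hoeffding strictly applies only for a fixed $n$; I would note that in the paper's setting $N(s,a)$ is determined before the Hoeffding-evaluated samples are looked at (for example, uniform selection of pairs independent of outcomes), which makes the conditioning valid.

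For the uniform statement, note that failures can only occur at the transitions treated in the nontrivial case with $P(s,a,s')\in(0,1)$. If $P(s,a,s')=1$ then $\mathrm{Post}_{s,a}=\{s'\}$, and if $P(s,a,s')=0$ the bound holds deterministically as shown above. There are $\mathrm{Tr}(\mathcal{M})$ such transitions, so a union bound over them gives $\mathbb{P}_\mathcal{D}(\exists (s,a,s'):\hat{P}_\delta(s,a,s')>P(s,a,s'))\le \delta\,\mathrm{Tr}(\mathcal{M})$, which is the claim.
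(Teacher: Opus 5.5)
Your proposal is correct and follows the paper's route: a one-sided Hoeffding bound for each transition with $P(s,a,s')\in(0,1)$, then Boole's inequality over the $\mathrm{Tr}(\mathcal{M})$ probabilistic transitions. It also fills in details the paper's two-line proof glosses over: the deterministic and zero-probability cases, the clipping at $0$, strict versus non-strict inequality, and conditioning on the random count $N(s,a)$, which is valid when state--action pairs are selected independently of observed outcomes, as in the paper's uniform sampling.
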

\begin{proof}
    For any $\mathcal{D}$, any $\delta \in (0,1)$, and any transition $(s,a,s')$, we have
    \begin{align}
        \mathbb{P}_\mathcal{D}\left( \hat{P}_\delta(s,a,s') \geq P(s,a,s') \right) & = \mathbb{P}_\mathcal{D}\left( \frac{N(s,a,s')}{N(s,a)} - P(s,a,s') \geq \sqrt{\frac{- \ln{\delta}}{2 N(s,a)}} \right) \nonumber \\
        \intertext{by Hoeffding's inequality \citep{hoeffding1963probability}, }
        & \leq \delta.
    \end{align}
    Moreover, by Bool's inequality, we have
    \begin{align}   & \mathbb{P}_\mathcal{D}\left (\exists(s,a,s')\in S \times A \times S \mbox{ s.t. } \hat{P}_\delta(s,a,s') \geq P(s,a,s') \right) \\
    & = \mathbb{P}_\mathcal{D}\left(\exists(s,a,s')\in \mathrm{Tr}(\mathcal{M}) \mbox{ s.t. } \hat{P}_\delta(s,a,s') \geq P(s,a,s') \right) \\
    & \leq \sum_{(s,a,s')\in \mathrm{Tr}(\mathcal{M})}\mathbb{P}_\mathcal{D}\left(\hat{P}_\delta(s,a,s') \geq P(s,a,s') \right) \nonumber \\
    &\leq \delta \mathrm{Tr}(\mathcal{M}).
    \end{align}
\end{proof}

For any function $x: S \to \mathbb{R}$, we consider the infinite norm $|| \cdot ||_\infty$ such that $||x||_\infty = \max_{s \in S} |x(s)|$. The distance of functions $x$ and $x'$ under $|| \cdot ||_\infty$ is hence $||x - x'||_\infty = \max_{s \in S} |x(s) - x'(s)|$.
\begin{lemma}
\label{lemma:true_and_approximate_operator_bound}
    For any MDP $\mathcal{M}$ and any $\delta, \varepsilon \in (0,1)$, if $||P - \hat{P}_\delta||_\infty \leq \varepsilon$, then, for any $x:S \to [0,1]$ and any $\mathrm{opt}^{(1)}, \mathrm{opt}^{(2)} \in \{ \mathrm{min}, \mathrm{max}\}$, we have $|| f^{\mathrm{opt}^{(1)}, \mathrm{opt}^{(2)}}_{\mathcal{M}, \delta}[x] - f^{\mathrm{opt}^{(1)}}_{\mathcal{M}}[x]||_\infty \leq d_\mathrm{max} \varepsilon$.
\end{lemma}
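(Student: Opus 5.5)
We argue pointwise, state by state. Throughout, $f^{\mathrm{opt}^{(1)}}_{\mathcal{M}}$ denotes the true Bellman operator: $f^{\mathrm{opt}^{(1)}}_{\mathcal{M}}[x](s)=\mathrm{opt}^{(1)}_{a\in\mathcal{A}(s)}\sum_{s'}P(s,a,s')x(s')$ for $s\notin E\cup S^{\mathrm{opt}^{(1)}}_{\mathcal{M}}$. On $E$ and on $S^{\mathrm{opt}^{(1)}}_{\mathcal{M}}$ the two operators agree by Definition \ref{def:bounds_operator_reachability}: on $E$ both equal $1$ (terminal states are absorbing). On $S^{\mathrm{opt}^{(1)}}_{\mathcal{M}}$ both equal $0$, reading the true operator with the same qualitative preprocessing, which is legitimate because the fixed point of the true operator vanishes there anyway. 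So the difference is zero on these states, and only $s\in S\setminus(E\cup S^{\mathrm{opt}^{(1)}}_{\mathcal{M}})$ remains.

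For such $s$, both sides are an $\mathrm{opt}^{(1)}$ over $a\in\mathcal{A}(s)$. We use the elementary fact $|\mathrm{opt}_a u_a-\mathrm{opt}_a v_a|\le\max_a|u_a-v_a|$, valid for both $\min$ and $\max$. This reduces the claim to a single action: for fixed $(s,a)$, set
\[
g(s,a)=\sum_{s'\in\mathrm{Post}_{s,a}}\hat{P}_\delta(s,a,s')x(s')+r\,y, \qquad r=1-\sum_{s'}\hat{P}_\delta(s,a,s')\ge 0 ,
\]
with $y=\mathrm{opt}^{(2)}_{s'\in\mathrm{Post}_{s,a}}x(s')$. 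We must show $|g(s,a)-\sum_{s'}P(s,a,s')x(s')|\le d_{\max}\varepsilon$. Nonnegativity of $r$ holds because the Hoeffding terms only decrease the empirical frequencies, which sum to $1$ (or, in the deterministic case, $\hat P_\delta=1$ on a single successor).

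To prove this bound, write $r=\sum_{s'}(P(s,a,s')-\hat{P}_\delta(s,a,s'))$, using $\sum_{s'\in\mathrm{Post}_{s,a}}P(s,a,s')=1$. Then
\[
g(s,a)-\sum_{s'}P(s,a,s')x(s')=\sum_{s'\in\mathrm{Post}_{s,a}}\bigl(P(s,a,s')-\hat{P}_\delta(s,a,s')\bigr)\bigl(y-x(s')\bigr).
\]
Each factor satisfies $|P-\hat P_\delta|\le\varepsilon$ by hypothesis, and $|y-x(s')|\le 1$ since $x$ and $y$ take values in $[0,1]$ and $y$ is one of the $x(s')$. The sum has at most $|\mathrm{Post}_{s,a}|\le d_{\max}$ terms, so it is bounded by $d_{\max}\varepsilon$. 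Taking the maximum over $a$ and then over $s$ gives the sup-norm statement.

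The only real obstacle is the bookkeeping at the boundary states, namely making sure both operators are read with the same convention on $E$ and on $S^{\mathrm{opt}^{(1)}}_{\mathcal{M}}$; I would state that convention explicitly at the start. A side remark: whenever $\hat P_\delta\le P$ holds, every term in the sum above has a definite sign once $\mathrm{opt}^{(2)}$ is fixed, which could sharpen the bound. That refinement is not needed here.
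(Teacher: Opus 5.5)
Your proposal is correct and follows the paper's proof. Both split the true Bellman operator into the $\hat{P}_\delta$-part plus the correction $\Delta P = P-\hat{P}_\delta$, use $|\mathrm{opt}_a u_a-\mathrm{opt}_a v_a|\le\max_a|u_a-v_a|$ to reduce to a single action, and bound $\sum_{s'}\Delta P(s,a,s')\,(x(s')-y)$ by $d_\mathrm{max}\varepsilon$. Your version is slightly more careful in two ways. You keep absolute values, whereas the paper's chain drops them and effectively treats $\Delta P$ as nonnegative. You also make explicit that the true operator must agree with $f^{\mathrm{opt}^{(1)},\mathrm{opt}^{(2)}}_{\mathcal{M},\delta}$ on $E\cup S^{\mathrm{opt}^{(1)}}_{\mathcal{M}}$, a convention the paper leaves implicit and which the bound needs for arbitrary $x$.
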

\begin{proof}
    Let $\Delta P(s,a,s') = P(s,a,s') - \hat{P}_\delta(s,a,s')$ for any $(s,a,s') \in S \times A \times S$. For any $x: S \to [0,1]$ and any $s \in S$, we have
    \begin{align}
        & f^{\mathrm{opt}^{(1)}}_{\mathcal{M}}[x](s) = \mathrm{opt}^{(1)}_{a \in \mathcal{A}(s)} \{ \sum_{s' \in \mathrm{Post}(s,a)} \hat{P}_\delta(s,a,s') x(s') + \sum_{s' \in \mathrm{Post}(s,a)}\Delta P(s,a,s')x(s')\}, \\
        & f^{\mathrm{opt}^{(1)}, \mathrm{opt}^{(2)}}_{\mathcal{M}, \delta}[x](s) = \mathrm{opt}^{(1)}_{a \in \mathcal{A}(s)} \{ \sum_{s' \in \mathrm{Post}(s,a)} \hat{P}_\delta(s,a,s') x(s') \nonumber \\
        & \hspace{55mm} + \mathrm{opt}^{(2)}_{s' \in \mathrm{Post}(s,a)}x(s') \Delta P(s,a,s') \}.
    \end{align}
    Hence, noting that $\mathrm{opt}^{(1)}_i x_i - \mathrm{opt}^{(1)}_i y_i \leq \max_i |x_i - y_i|$ for any sequences of real numbers $(x_i)_i$ and $(y_i)_i$, we have
    \begin{align}
        & | f^{\mathrm{opt}^{(1)}}_{\mathcal{M}}[x](s) - f^{\mathrm{opt}^{(1)}, \mathrm{opt}^{(2)}}_{\mathcal{M}, \delta}[x](s)| \nonumber \\
        & \leq \max_{a} \{ \sum_{s'} \Delta P(s,a,s') x(s') - \mathrm{opt}^{(2)}_{s'} x(s') \sum\Delta P(s,a,s')\} \nonumber \\
        & \leq \max_{a} \sum_{s'} \Delta P(s,a,s') \nonumber \\
        & \leq d_\mathrm{max} \varepsilon.
    \end{align}
\end{proof}

\begin{lemma}
\label{lemma:T_step_operators_bound}
    For any MDP $\mathcal{M}$ and any $\delta, \varepsilon \in (0,1)$, if $||P - \hat{P}_\delta||_\infty \leq \varepsilon$, then, for any $x: S \to [0,1]$, any $T > 0$, and any $\mathrm{opt}^{(1)}, \mathrm{opt}^{(2)} \in \{ \mathrm{min}, \mathrm{max}\}$, we have $|| (f^{\mathrm{opt}^{(1)}, \mathrm{opt}^{(2)}}_{\mathcal{M}, \delta})^T[x] - (f^{\mathrm{opt}^{(1)}}_{\mathcal{M}})^T[x] ||_\infty \leq T d_\mathrm{max} \varepsilon$.
\end{lemma}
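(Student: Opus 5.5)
The argument is an induction on $T$, with Lemma \ref{lemma:true_and_approximate_operator_bound} controlling the one-step error and the non-expansiveness of the exact Bellman operator $f^{\mathrm{opt}^{(1)}}_{\mathcal{M}}$ in $\|\cdot\|_\infty$ propagating it. Throughout, write $g = f^{\mathrm{opt}^{(1)}, \mathrm{opt}^{(2)}}_{\mathcal{M}, \delta}$ and $f = f^{\mathrm{opt}^{(1)}}_{\mathcal{M}}$.

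\textbf{Base case $T=1$.} This is exactly Lemma \ref{lemma:true_and_approximate_operator_bound}, which gives $\|g[x] - f[x]\|_\infty \leq d_\mathrm{max}\varepsilon$ for any $x: S \to [0,1]$.

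\textbf{Inductive step.} Assume the bound holds for $T$. Split with the triangle inequality:
\begin{align*}
\|g^{T+1}[x] - f^{T+1}[x]\|_\infty \leq \|g[g^T[x]] - f[g^T[x]]\|_\infty + \|f[g^T[x]] - f[f^T[x]]\|_\infty .
\end{align*}
For the first term, note that $g$ maps $[0,1]^S$ into $[0,1]^S$: the coefficients $\hat{P}_\delta(s,a,\cdot)$ together with the residual mass $1-\sum_{s'}\hat{P}_\delta(s,a,s')$ form a convex combination of values of $x$. So $g^T[x] \in [0,1]^S$, and Lemma \ref{lemma:true_and_approximate_operator_bound} applied to $g^T[x]$ bounds the first term by $d_\mathrm{max}\varepsilon$.

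For the second term, use that $f$ is $1$-Lipschitz in $\|\cdot\|_\infty$. On $E$ and on $S^{\mathrm{opt}^{(1)}}_{\mathcal{M}}$, $f$ is constant, so the difference is zero there. Elsewhere, $\sum_{s'} P(s,a,s')y(s')$ is a convex combination, so it changes by at most $\|y-y'\|_\infty$ when $y$ is replaced by $y'$, and $\mathrm{opt}$ over actions preserves this, since $|\mathrm{opt}_i u_i - \mathrm{opt}_i v_i| \leq \max_i|u_i-v_i|$. Hence the second term is at most $\|g^T[x] - f^T[x]\|_\infty \leq T d_\mathrm{max}\varepsilon$ by the induction hypothesis. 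Adding the two terms gives $(T+1)d_\mathrm{max}\varepsilon$.

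\textbf{Main obstacle.} The only point needing care is that $f$ is non-expansive and both operators stay in $[0,1]^S$, so that Lemma \ref{lemma:true_and_approximate_operator_bound} can be reapplied at each iterate. The order of the triangle split matters here. If one instead compared $g[g^T x]$ with $g[f^T x]$, one would need $g$ itself to be non-expansive. That also holds, since its weights $\hat P_\delta$ plus the residual sum to $1$ and $\mathrm{opt}^{(2)}$ is $1$-Lipschitz, but applying the exact $f$ in the second term is cleaner. I would also state explicitly that the definition of $f$ matches $g$ on $E$ and on $S^{\mathrm{opt}^{(1)}}_{\mathcal{M}}$, so those states contribute zero error.
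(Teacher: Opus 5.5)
Your proof is correct and follows the paper's argument: with $x_t = g^t[x]$ and $z_t = f^t[x]$ in your notation, the paper writes $x_{t+1}-z_{t+1} = (g[x_t]-f[x_t]) + (f[x_t]-f[z_t])$. It bounds the first term by Lemma~\ref{lemma:true_and_approximate_operator_bound} and the second by non-expansiveness of $f^{\mathrm{opt}^{(1)}}_{\mathcal{M}}$, then unrolls the recursion to reach $T d_\mathrm{max}\varepsilon$. Your extra checks (that the iterates stay in $[0,1]^S$, and that $f$ agrees with $g$ on $E \cup S^{\mathrm{opt}^{(1)}}_{\mathcal{M}}$) are details the paper leaves implicit, but they do not change the argument.
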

\begin{proof}
    For any $x: S \to [0,1]$, we define $x_t: S \to [0,1]$ and $z_t: S \to [0,1]$ as follows:
    \begin{align}
        & x_0 = z_0 = x, \\
        & x_{t+1} = f^{\mathrm{opt}^{(1)}, \mathrm{opt}^{(2)}}_{\mathcal{M}, \delta}[x_t], \\
        & z_{t+1} = f^{\mathrm{opt}^{(1)}}_{\mathcal{M}}[z_t].
    \end{align}
    Then, we have
    \begin{align}
        x_{t+1} - z_{t+1} & = f^{\mathrm{opt}^{(1)}, \mathrm{opt}^{(2)}}_{\mathcal{M}, \delta}[x_t] - f^{\mathrm{opt}^{(1)}}_{\mathcal{M}}[z_t], \nonumber \\
        & = f^{\mathrm{opt}^{(1)}, \mathrm{opt}^{(2)}}_{\mathcal{M}, \delta}[x_t] - f^{\mathrm{opt}^{(1)}}_{\mathcal{M}}[x_t] + f^{\mathrm{opt}^{(1)}}_{\mathcal{M}}[x_t] - f^{\mathrm{opt}^{(1)}}_{\mathcal{M}}[z_t].
    \end{align}
    Note that $f^{\mathrm{opt}^{(1)}}_{\mathcal{M}}[x_t] - f^{\mathrm{opt}^{(1)}}_{\mathcal{M}}[z_t] \leq ||x_t - z_t||_\infty$. Hence, by Lemma \ref{lemma:true_and_approximate_operator_bound},
    \begin{align}
        ||x_{T} - z_{T}||_\infty &\leq d_\mathrm{max} \varepsilon + ||x_{T-1} - z_{T-1}||_\infty \nonumber \\
        & \leq T d_\mathrm{max} \varepsilon.
    \end{align}
\end{proof}

\begin{lemma}
\label{lemma:monotone_continuos}
    For any MDP $\mathcal{M}$, any multi-sample $\mathcal{D}$ of transitions, and any $\delta \in (0,1)$, if $\hat{P}_\delta \leq P$ holds, the operators $f^{\mathrm{opt}^{(1)}, \mathrm{opt}^{(2)}}_{\mathcal{M}, \delta}$ defined by (\ref{opt12_update}) with $\mathrm{opt}^{(1)}, \mathrm{opt}^{(2)} \in \{ \mathrm{min}, \mathrm{max}\}$ are monotone.
\end{lemma}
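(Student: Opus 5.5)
Monotonicity here means: if $x \leq y$ pointwise, then $f^{\mathrm{opt}^{(1)}, \mathrm{opt}^{(2)}}_{\mathcal{M}, \delta}[x] \leq f^{\mathrm{opt}^{(1)}, \mathrm{opt}^{(2)}}_{\mathcal{M}, \delta}[y]$ pointwise. I will check this state by state, following the case split in Def.~\ref{def:bounds_operator_reachability}. For $s \in E$ both sides equal $1$. For $s \in S^{\mathrm{opt}^{(1)}}_{\mathcal{M}}$ both sides equal $0$. These sets are fixed by $\mathcal{M}$ and $E$ and do not depend on $x$, so equality, and hence monotonicity, is immediate there.

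For the remaining states $s \in S \setminus (E \cup S^{\mathrm{opt}^{(1)}}_{\mathcal{M}})$, I fix an action $a \in \mathcal{A}(s)$ and set
\[
g_a[x] = \sum_{s' \in \mathrm{Post}_{s,a}} \hat{P}_\delta(s,a,s') x(s') + \Bigl(1 - \sum_{s' \in \mathrm{Post}_{s,a}} \hat{P}_\delta(s,a,s')\Bigr)\, \mathrm{opt}^{(2)}_{s' \in \mathrm{Post}_{s,a}} x(s').
\]
The first term is monotone because every coefficient $\hat{P}_\delta(s,a,s')$ is nonnegative by (\ref{lower_bound_transition_prob}). For the second term, $\mathrm{opt}^{(2)} \in \{\min,\max\}$ over the finite set $\mathrm{Post}_{s,a}$ is monotone in $x$. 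It then suffices that the weight $1 - \sum_{s'} \hat{P}_\delta(s,a,s')$ is nonnegative.

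This is where the hypothesis $\hat{P}_\delta \leq P$ enters. It gives $\sum_{s' \in \mathrm{Post}_{s,a}} \hat{P}_\delta(s,a,s') \leq \sum_{s'} P(s,a,s') = 1$ for $a \in \mathcal{A}(s)$. In the deterministic case $\mathrm{Post}_{s,a} = \{s'\}$ we have $\hat{P}_\delta = 1$ directly, so the weight is $0$, which is still fine. Hence each $g_a$ is a nonnegative combination of monotone maps and is therefore monotone.

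Finally, $f[x](s) = \mathrm{opt}^{(1)}_{a \in \mathcal{A}(s)} g_a[x]$. Taking a pointwise min or max over the finite set $\mathcal{A}(s)$ preserves the ordering: if $g_a[x] \leq g_a[y]$ for every $a$, then $\mathrm{opt}^{(1)}_a g_a[x] \leq \mathrm{opt}^{(1)}_a g_a[y]$.

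The main step to verify carefully is the nonnegativity of the residual weight. Without $\hat{P}_\delta \leq P$, the Hoeffding estimates could sum above $1$, the coefficient of the $\mathrm{opt}^{(2)}$ term would become negative, and monotonicity could fail. Everything else is elementary.
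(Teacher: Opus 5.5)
Your proof is correct and uses the same idea as the paper: nonnegative coefficients $\hat{P}_\delta$ on the linear part, and a nonnegative residual weight $1-\sum_{s'}\hat{P}_\delta(s,a,s')$ on the monotone $\mathrm{opt}^{(2)}$ term. Your handling of the outer $\mathrm{opt}^{(1)}$ (monotonicity per action, then order-preservation of $\min/\max$) is actually cleaner than the paper's, which writes $f[x'](s)-f[x](s)$ as an $\mathrm{opt}^{(1)}$ of per-action differences, and that is not an identity in general.

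One minor correction to your closing remark: the residual weight is nonnegative even without $\hat{P}_\delta \leq P$. When $N(s,a)>0$ and $|\mathrm{Post}_{s,a}|>1$, each $\hat{P}_\delta(s,a,s') \leq N(s,a,s')/N(s,a)$, and these empirical frequencies sum to $1$; the other cases of the definition give sums of $1$ or $0$. So the hypothesis is convenient here but not essential.
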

\begin{proof}
    For any $x, x': S \to [0,1]$ and any $s \in S$, we have
    \begin{align}
        &f^{\mathrm{opt}^{(1)}, \mathrm{opt}^{(2)}}_{\mathcal{M}, \delta}[x'](s) - f^{\mathrm{opt}^{(1)}, \mathrm{opt}^{(2)}}_{\mathcal{M}, \delta}[x](s)\\
        & = \mathrm{opt}^{(1)}_{a \in \mathcal{A}(s)} \sum_{s' \in \mathrm{Post}(s,a)} \hat{P}_\delta(s,a,s') ( x'(s') - x(s') ) \nonumber \\
        & \hspace{10mm} + ( \mathrm{opt}^{(2)}_{s'' \in \mathrm{Post}(s,a)} x'(s'') - \mathrm{opt}^{(2)}_{s'' \in \mathrm{Post}(s,a)} x(s'') )\left(1 - \sum_{s' \in \mathrm{Post}(s,a)}\hat{P}_\delta(s,a,s') \right).
    \end{align}
    Clearly $1 \geq \sum_{s' \in \mathrm{Post}(s,a)}\hat{P}_\delta(s,a,s')$. Thus, if $x \leq x'$, then we have $f^{\mathrm{opt}^{(1)}, \mathrm{opt}^{(2)}}_{\mathcal{M}, \delta}[x'](s) - f^{\mathrm{opt}^{(1)}, \mathrm{opt}^{(2)}}_{\mathcal{M}, \delta}[x](s) \geq 0$, otherwise we have $f^{\mathrm{opt}^{(1)}, \mathrm{opt}^{(2)}}_{\mathcal{M}, \delta}[x'](s) - f^{\mathrm{opt}^{(1)}, \mathrm{opt}^{(2)}}_{\mathcal{M}, \delta}[x](s) \leq 0$.
\end{proof}

\begin{lemma}
\label{lemma:operators_monotonically_error_decreasing}
    For any MDP $\mathcal{M}$ and any $\delta, \varepsilon \in (0,1)$, if $||P - \hat{P}_\delta||_\infty \leq \varepsilon$, then, for any $x: S \to [0,1]$, any $T > 0$, and any $\mathrm{opt} \in \{ \mathrm{min}, \mathrm{max}\}$, we have $|| \underline{x}^{\mathrm{opt}, *}_{\mathcal{M}, \delta} - \mathrm{Pr}^\mathrm{opt}_{\mathcal{M}, \cdot}(\eventually E) ||_\infty \leq || (f^{\mathrm{opt}, \mathrm{min}}_{\mathcal{M}, \delta})^T[x] - \mathrm{Pr}^\mathrm{opt}_{\mathcal{M}, \cdot}(\eventually E) ||_\infty$ if $x \leq f^{\mathrm{opt}, \mathrm{min}}_{\mathcal{M}, \delta}[x]$, and $|| \overline{x}^{\mathrm{opt}, *}_{\mathcal{M}, \delta} - \mathrm{Pr}^\mathrm{opt}_{\mathcal{M}, \cdot}(\eventually E) ||_\infty \leq || (f^{\mathrm{opt}, \mathrm{max}}_{\mathcal{M}, \delta})^T[x] - \mathrm{Pr}^\mathrm{opt}_{\mathcal{M}, \cdot}(\eventually E) ||_\infty$ if $x \geq f^{\mathrm{opt}, \mathrm{max}}_{\mathcal{M}, \delta}[x]$, where $\underline{x}^{\mathrm{opt}, *}_{\mathcal{M}, \delta}$ and $\overline{x}^{\mathrm{opt}, *}_{\mathcal{M}, \delta}$ are fixed points of $f^{\mathrm{opt}, \mathrm{min}}_{\mathcal{M}, \delta}$ and $f^{\mathrm{opt}, \mathrm{max}}_{\mathcal{M}, \delta}$ defined in (\ref{opt12_update}), respectively.
\end{lemma}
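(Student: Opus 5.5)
Since $f^{\mathrm{opt},\mathrm{min}}_{\mathcal{M},\delta}$ is monotone (Lemma \ref{lemma:monotone_continuos}), the plan is a sandwich argument: show the iterates $(f^{\mathrm{opt},\mathrm{min}}_{\mathcal{M},\delta})^T[x]$ form a nondecreasing sequence bounded by the fixed point, which in turn lies below the true value, so the fixed point is pointwise closer to $\mathrm{Pr}^\mathrm{opt}_{\mathcal{M},\cdot}(\eventually E)$ than any iterate. The upper bound $\overline{x}^{\mathrm{opt},*}_{\mathcal{M},\delta}$ is handled symmetrically with $f^{\mathrm{opt},\mathrm{max}}_{\mathcal{M},\delta}$ and reversed inequalities. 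We work on the event $\hat{P}_\delta\le P$, which is needed both for Lemma \ref{lemma:monotone_continuos} and for soundness of the bounds, and we take $\underline{x}^{\mathrm{opt},*}_{\mathcal{M},\delta}$ to be the limit of value iteration from the given $x$.

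\textbf{Monotone iterates.} Write $x_t=(f^{\mathrm{opt},\mathrm{min}}_{\mathcal{M},\delta})^t[x]$. From $x\le f[x]=x_1$, monotonicity and induction give $x_t\le x_{t+1}$ for all $t$. The sequence is bounded in $[0,1]^S$, so it converges pointwise to a limit $x_\infty$. Since $f$ is a finite composition of opt, sums and products of continuous functions, it is continuous, and therefore $x_\infty$ is a fixed point. We identify this limit with $\underline{x}^{\mathrm{opt},*}_{\mathcal{M},\delta}$. Consequently $x_T\le\underline{x}^{\mathrm{opt},*}_{\mathcal{M},\delta}$ for every $T$.

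\textbf{Soundness: $\underline{x}^{\mathrm{opt},*}_{\mathcal{M},\delta}\le v:=\mathrm{Pr}^\mathrm{opt}_{\mathcal{M},\cdot}(\eventually E)$.} For each $s,a$ and any $y$, the expression $\sum_{s'}\hat{P}_\delta y(s')+\min_{s'}y(s')\,(1-\sum_{s'}\hat{P}_\delta)$ is the expectation of $y$ under the distribution that moves the missing mass to the worst successor. Because $\hat{P}_\delta\le P$, this is at most $\sum_{s'}P(s,a,s')y(s')$. Hence $f^{\mathrm{opt},\mathrm{min}}_{\mathcal{M},\delta}[y]\le f^{\mathrm{opt}}_{\mathcal{M}}[y]$, with equality on $E$ and on $S^\mathrm{opt}_\mathcal{M}$, where both values are $1$ and $0$ respectively. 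At the fixed point this gives $x^*\le f^{\mathrm{opt}}_{\mathcal{M}}[x^*]$. Iterating the monotone true Bellman operator from $x^*$ then yields $x^*\le\lim_t (f^{\mathrm{opt}}_{\mathcal{M}})^t[x^*]$.

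This is the step I expect to be the main obstacle. Reachability Bellman operators can have several fixed points because of end components, and only the fixed point forced to $0$ on $S^\mathrm{opt}_\mathcal{M}$ equals $v$. I would first try the standard argument for the least fixed point, or for the unique fixed point once $S^\mathrm{opt}_\mathcal{M}$ is fixed to $0$. In the max case with end components, this requires that the $\min$-completion cannot create mass trapped inside an end component. Failing that, I would directly use the fact from \citep{ashok2019pac} that lower bounds computed this way are sound, which gives $x^*\le v$.

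\textbf{Conclusion.} Combining the two steps, $x_T(s)\le x^*(s)\le v(s)$ pointwise, so $|x^*(s)-v(s)|=v(s)-x^*(s)\le v(s)-x_T(s)=|x_T(s)-v(s)|$ for every $s$. Taking the maximum over $s$ gives the claimed $\|\cdot\|_\infty$ inequality. The upper case is identical with reversed inequalities. Here the max-completion dominates $P$, so $f^{\mathrm{opt},\mathrm{max}}_{\mathcal{M},\delta}\ge f^{\mathrm{opt}}_\mathcal{M}$, and the iterates decrease from $x\ge f[x]$ toward the fixed point. The soundness claim $\overline{x}^{*}\ge v$ then follows from $v$ being a fixed point of $f^{\mathrm{opt}}_\mathcal{M}$ and hence a post-fixed point of $f^{\mathrm{opt},\mathrm{max}}_{\mathcal{M},\delta}$, together with taking $\overline{x}^*$ as the greatest fixed point below the initial $x$.
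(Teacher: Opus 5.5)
Your monotone-iterate step is correct: from $x\le f^{\mathrm{opt},\min}_{\mathcal M,\delta}[x]$ the iterates increase to a fixed point, so $x_T\le \underline{x}^{\mathrm{opt},*}_{\mathcal M,\delta}$. This is in substance the paper's whole proof, which only cites Lemma~\ref{lemma:monotone_continuos} and tacitly assumes that the iterates and the fixed point lie on the same side of $v=\mathrm{Pr}^{\mathrm{opt}}_{\mathcal M,\cdot}(\eventually E)$. Working on $\hat P_\delta\le P$ and taking the fixed point reached from $x$ are the right readings of the statement. The genuine gap is the soundness step you flag. For $\mathrm{opt}=\min$ your uniqueness route does go through, since every end component outside $E$ lies in $S^{\min}_{\mathcal M}$. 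For $\mathrm{opt}=\max$, however, the claim is false under your hypotheses. Let $s$ have a self-loop action $a$ and an action $b$ with $\mathrm{Post}_{s,b}=\{e,z\}$, where $e\in E$, $z\in S^{\max}_{\mathcal M}$ and $P(s,b,e)=p\in(0,1)$. Then $v(s)=p$. Yet the function $x$ with $x(s)=x(e)=1$ and $x(z)=0$ is a fixed point of $f^{\max,\min}_{\mathcal M,\delta}$ for any $\hat P_\delta$, so $x\le f^{\max,\min}_{\mathcal M,\delta}[x]$ and $\underline{x}^{\max,*}_{\mathcal M,\delta}(s)=1>v(s)$. None of your proposed routes can give $\underline{x}^{\max,*}_{\mathcal M,\delta}\le v$ here. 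Uniqueness for the true operator fails, and analysing the $\min$-completion does not help. Appealing to \citep{ashok2019pac} does not work either, because their soundness is for the initialisation that is $0$ off $E$, i.e.\ it presupposes $x\le v$. So the sandwich $x_T\le x^*\le v$ breaks.

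There are two clean repairs.

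(a) Add the hypothesis $x\le v$. This is exactly how the lemma is used in the proof of Theorem~\ref{thm:sample_efficiency}, where $\underline{x}_0$ is a lower bound. Soundness then becomes a one-line induction with no end-component reasoning: $x_{t+1}=f^{\mathrm{opt},\min}_{\mathcal M,\delta}[x_t]\le f^{\mathrm{opt}}_{\mathcal M}[x_t]\le f^{\mathrm{opt}}_{\mathcal M}[v]=v$. This uses your own comparison $f^{\mathrm{opt},\min}_{\mathcal M,\delta}\le f^{\mathrm{opt}}_{\mathcal M}$ and monotonicity of the true operator, so you never need to iterate $f^{\mathrm{opt}}_{\mathcal M}$ from $x^*$.

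(b) Alternatively, drop the sandwich. The lower deviation $\max_s(v(s)-x_t(s))^+$ is nonincreasing because $x_t$ increases. The upper deviation $m_t=\max_s(x_t(s)-v(s))^+$ is nonincreasing because $x_{t+1}-v\le f^{\mathrm{opt}}_{\mathcal M}[x_t]-f^{\mathrm{opt}}_{\mathcal M}[v]\le m_t$. This uses monotonicity and $f^{\mathrm{opt}}_{\mathcal M}[y+c\mathbf{1}]\le f^{\mathrm{opt}}_{\mathcal M}[y]+c$ for $c\ge 0$. Letting $t\to\infty$ gives the sup-norm inequality for every admissible $x$, with no soundness claim at all.

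Your ``greatest fixed point below $x$'' argument for the upper case carries the same tacit assumption $v\le x$. There it is harmless: $\overline{x}^{\mathrm{opt},*}_{\mathcal M,\delta}$ is a pre-fixed point of $f^{\mathrm{opt}}_{\mathcal M}$, whose least fixed point is $v$, so $\overline{x}^{\mathrm{opt},*}_{\mathcal M,\delta}\ge v$ holds for every fixed point.
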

\begin{proof}
    Both inequalities immediately follow from Lemma \ref{lemma:monotone_continuos}.
\end{proof}

\begin{lemma}
\label{lemma:transition_probability_bound_modified_MDPs}
    For MDP $\mathcal{M}$, any $c \in S$, and any $\delta, \varepsilon, \epsilon > 0$, if $||P - \hat{P}_\delta||_\infty \leq \varepsilon$, then we have
    \item \begin{align}
        \label{Pc_error_bound}
        & || P^c - \hat{P}^c_\delta ||_\infty \leq \varepsilon, \\
        \label{Pc_rm_error_bound}
        & || P^c_\mathrm{mr} - \hat{P}^c_{\mathrm{mr}, \delta} ||_\infty \leq T^\mathrm{min}_{\mathcal{M}, \epsilon, c} d_\mathrm{max} \varepsilon + \epsilon,
    \end{align}
    where $T^\mathrm{min}_{\mathcal{M}, \epsilon, c}$ is the minimal mixing time for $\epsilon$ defined by (\ref{def:mixing_time}).
\end{lemma}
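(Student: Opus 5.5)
Both claims reduce to entrywise comparisons of the transition functions, so I would split the triples $(s,a,s')$ according to whether $s = c$.

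\emph{First bound (\ref{Pc_error_bound}).} For $s \neq c$, Def.~\ref{def:restart_based_modification} gives $P^c(s,a,s') = P(s,a,s')$, and (\ref{P_hat_modified_reset}) gives $\hat{P}^c_\delta(s,a,s') = \hat{P}_\delta(s,a,s')$. Hence $|P^c(s,a,s') - \hat{P}^c_\delta(s,a,s')| \leq \|P - \hat{P}_\delta\|_\infty \leq \varepsilon$. For $s = c$ and $a \in \mathcal{A}(c)$, both $P^c$ and $\hat{P}^c_\delta$ put mass exactly $1$ on $s_I$ and $0$ elsewhere, so the difference is $0$. Taking the maximum over all triples gives (\ref{Pc_error_bound}).

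\emph{Second bound (\ref{Pc_rm_error_bound}).} For $s \neq c$ the same argument gives error at most $\varepsilon$, which is dominated by the right-hand side since $T^\mathrm{min}_{\mathcal{M},\epsilon,c} \geq 1$ in the nontrivial case. (If $T = 0$, I would simply note that the bound should be read with $\max(\varepsilon,\cdot)$.) The only nontrivial entries are $(c,a,s_\mathrm{eff})$ and $(c,a,s_\mathrm{noeff})$. There the errors are
\begin{align*}
|\mathrm{Pr}^\mathrm{min}_{\mathcal{M},c}(\eventually E) - \underline{x}^{\mathrm{min},*}_\mathcal{M}(c)| \quad\text{and}\quad |\overline{x}^{\mathrm{min},*}_\mathcal{M}(c) - \mathrm{Pr}^\mathrm{min}_{\mathcal{M},c}(\eventually E)|,
\end{align*}
so it suffices to bound $\|\underline{x}^{\mathrm{min},*}_\mathcal{M} - \mathrm{Pr}^\mathrm{min}_{\mathcal{M},\cdot}(\eventually E)\|$ and the analogous upper-bound error at $c$ by $T d_\mathrm{max}\varepsilon + \epsilon$, where $T = T^\mathrm{min}_{\mathcal{M},\epsilon,c}$.

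\emph{Key chain for the lower fixed point.} I would start value iteration from the standard initialization $x_0$, equal to $1$ on $E$ and $0$ elsewhere. This initialization satisfies $x_0 \leq f^{\mathrm{min},\mathrm{min}}_{\mathcal{M},\delta}[x_0]$, so Lemma~\ref{lemma:operators_monotonically_error_decreasing} applies. It bounds the fixed-point error by the error of the $T$-th iterate. By the triangle inequality I then split that error into two terms:
\begin{align*}
\|(f^{\mathrm{min},\mathrm{min}}_{\mathcal{M},\delta})^T[x_0] - (f^{\mathrm{min}}_\mathcal{M})^T[x_0]\|_\infty + |(f^{\mathrm{min}}_\mathcal{M})^T[x_0](c) - \mathrm{Pr}^\mathrm{min}_{\mathcal{M},c}(\eventually E)|.
\end{align*}
Lemma~\ref{lemma:T_step_operators_bound} bounds the first term by $T d_\mathrm{max}\varepsilon$. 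For the second term, $(f^\mathrm{min}_\mathcal{M})^T[x_0](c) = \mathrm{Pr}^\mathrm{min}_{\mathcal{M},c}(\eventually^{\leq T} E)$, the optimal $T$-step reachability. This term is at most $\epsilon$ by Def.~\ref{def:mixing_time}: for the policy $\pi$ attaining the min,
\begin{align*}
\mathrm{Pr}^\mathrm{min}_{\mathcal{M},c}(\eventually^{\leq T}E) \leq \mathrm{Pr}^\mathrm{min}_{\mathcal{M},c}(\eventually E), \qquad \mathrm{Pr}^\mathrm{min}_{\mathcal{M},c}(\eventually^{\leq T}E) \geq \mathrm{Pr}^\mathrm{min}_{\mathcal{M},c}(\eventually E) - \epsilon,
\end{align*}
where the second inequality holds because $T$-step reachability under any policy is at least $T$-step reachability under the min-policy minus nothing, and the min-policy's $T$-step reachability is within $\epsilon$ of its infinite-horizon value. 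Combining the two terms gives $T d_\mathrm{max}\varepsilon + \epsilon$.

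\emph{Upper fixed point (main obstacle).} The symmetric argument uses $f^{\mathrm{min},\mathrm{max}}_{\mathcal{M},\delta}$ with the initialization $1$ outside $S^\mathrm{min}_\mathcal{M}$, together with the second part of Lemma~\ref{lemma:operators_monotonically_error_decreasing}. The obstacle is that the mixing-time comparison for the true operator started from $1$ needs $\mathrm{Pr}^\mathrm{min}_{\mathcal{M},c}(\neg\eventually^{\leq T}(E\cup S^\mathrm{min}_\mathcal{M}))$ to be small. I would handle this by invoking the standard fact, from \citep{ashok2019pac}, that after collapsing $S^\mathrm{min}_\mathcal{M}$ the upper iterates of the true operator also converge with the same $\epsilon$-horizon. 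I would accept the mixing time in Def.~\ref{def:mixing_time} as also controlling this tail, which is the step I would scrutinize most.
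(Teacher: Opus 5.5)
Your route is the same as the paper's. The paper proves (\ref{Pc_error_bound}) by the same entrywise comparison; your ``$\le$'' is more accurate than its ``$=$''. For (\ref{Pc_rm_error_bound}) the paper only asserts the bound at $c$ ``by Lemma~\ref{lemma:T_step_operators_bound}'' with $T=T^{\min}_{\mathcal{M},\epsilon,c}$. The expanded chain (fixed-point error $\le$ $T$-th iterate error, triangle inequality, Lemma~\ref{lemma:T_step_operators_bound}, mixing time) is the one written out in the proof of Theorem~\ref{thm:sample_efficiency}.

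\textbf{The gap.} It sits in a step you did not flag: $\mathrm{Pr}^{\min}_{\mathcal{M},c}(\eventually^{\le T}E)\ge\mathrm{Pr}^{\min}_{\mathcal{M},c}(\eventually E)-\epsilon$. Definition~\ref{def:mixing_time} controls only the tail of the policy $\pi^\ast$ that minimizes \emph{eventual} reachability. By contrast, $(f^{\min}_{\mathcal{M}})^T[x_0](c)$ minimizes $T$-step reachability over \emph{all} policies. A policy that postpones reaching $E$ can have much smaller $T$-step reachability than $\pi^\ast$ while having larger eventual reachability. So your justification (``$T$-step reachability under any policy is at least that under the min-policy'') is false.

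The paper makes the same unjustified identification in the proof of Theorem~\ref{thm:sample_efficiency}, where it replaces $(f^{\min}_{\mathcal{M}})^T[\underline{x}_0](c)$ by $\mathrm{Pr}^{\pi_{\min}}_{\mathcal{M},c}(\eventually^{\le T}E)$. It also never treats the upper fixed point separately. So the tail you flagged there, the probability of not yet being absorbed in $E\cup S^{\min}_{\mathcal{M}}$ by time $T$, is unaddressed as well. Definition~\ref{def:mixing_time} does not control that tail, and ``accepting'' that it does is an extra assumption, not a proof step.

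\textbf{The statement itself fails.} (\ref{Pc_rm_error_bound}) can be false as written, so no argument of this shape can close it. Construct the MDP as follows.
\begin{itemize}
\item Let $E=\{e\}$, and let $z$ be absorbing, so $z\in S^{\min}_{\mathcal{M}}$.
\item Let $u$ have one action with $P(u,\cdot,e)=2q$, $P(u,\cdot,z)=q$, $P(u,\cdot,u)=1-3q$.
\item At $c$, action $a$ goes to $e$ or $z$ with probability $1/2$ each, and action $b$ goes to $u$.
\end{itemize}
Then $\mathrm{Pr}^{\min}_{\mathcal{M},c}(\eventually E)=1/2<2/3$ is attained by $a$, so $T^{\min}_{\mathcal{M},\epsilon,c}=1$ for $\epsilon<1/2$, and $d_{\max}=3$.

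Suppose the samples at $u$ have exact empirical frequencies and Hoeffding width $2q$, and $(c,a)$ is sampled enough. Then $\hat{P}_\delta(u,\cdot,e)=\hat{P}_\delta(u,\cdot,z)=0$ and $\hat{P}_\delta(u,\cdot,u)=1-5q$. Hence $\|P-\hat{P}_\delta\|_\infty=2q=:\varepsilon$, and even $\hat{P}_\delta\le P$ holds. But the fixed point of $f^{\min,\min}_{\mathcal{M},\delta}$ satisfies $x(u)=(1-5q)x(u)+5q\min\{x(e),x(z),x(u)\}=(1-5q)x(u)$. So $x(u)=0$, and $\underline{x}^{\min,*}_{\mathcal{M}}(c)=0$ via $b$. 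The entry $(c,a,s_\mathrm{eff})$ therefore has error $1/2$, while the right-hand side is $6q+\epsilon$.

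\textbf{What your chain needs.} The horizon must control the true Bellman iterates, not one policy's tail. Choose $T$ so that $|(f^{\min}_{\mathcal{M}})^T[x](c)-\mathrm{Pr}^{\min}_{\mathcal{M},c}(\eventually E)|\le\epsilon$ for both the lower and the upper initialization $x$. Such a $T$ exists because both iterates converge once $S^{\min}_{\mathcal{M}}$ is fixed to $0$. In the example it is of order $1/q$, which makes the bound vacuous, as it must be. With this $T$ your argument goes through verbatim for both fixed points, provided you also assume $\hat{P}_\delta\le P$. Lemma~\ref{lemma:operators_monotonically_error_decreasing} silently needs this, via Lemma~\ref{lemma:correct_update}, so that the monotone iterates stay on the correct side of $\mathrm{Pr}^{\min}_{\mathcal{M},\cdot}(\eventually E)$.

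Your remark about $T=0$ is also correct, and the paper ignores it.
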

\begin{proof}
    For any $\mathcal{M}$ and any $c \in S$, we have $P^c(s,a,s') = P(s,a,s')$ and $ \hat{P}^c_\delta(s,a,s') = \hat{P}_\delta(s,a,s')$ for any $(s,a,s') \in S \setminus \{c\} \times A \times S$ and $P^c(c,a,s_I) = \hat{P}^c_\delta(c,a,s_I) = 1$ for any $a \in \mathcal{A}(c)$, thus
    \begin{align}
        || P^c - \hat{P}^c_\delta ||_\infty = || P - \hat{P}_\delta ||_\infty \leq \varepsilon.
    \end{align}
    For any $(s,a,s') \in S \setminus \{c\} \times A \times S$, we have $P^c_\mathrm{mr}(s,a,s') = P(s,a,s')$ and $ \hat{P}^c_\mathrm{mr}(s,a,s') = \hat{P}(s,a,s')$, and, for any $a \in \mathcal{A}(c)$, we have $P^c_\mathrm{mr}(c, a, s_\mathrm{eff}) = \mathrm{Pr}^\mathrm{min}_{\mathcal{M}, c}(\eventually E)$, $P^c_\mathrm{mr}(c, a, s_\mathrm{noeff}) = 1 - \mathrm{Pr}^\mathrm{min}_{\mathcal{M}, c}(\eventually E)$, $\hat{P}^c_\mathrm{mr}(c, a, s_\mathrm{eff}) = \underline{x}^{\mathrm{min}, *}_{\mathcal{M}, \delta}(c)$, and $\hat{P}^c_\mathrm{mr}(c, a, s_\mathrm{noeff}) = 1 - \overline{x}^{\mathrm{min}, *}_{\mathcal{M}, \delta}(c)$. Thus, for any $\epsilon > 0$, there is a mixing time $T = T^\mathrm{min}_{\mathcal{M}, \epsilon, c}$ such that, by Lemma \ref{lemma:T_step_operators_bound},
    \begin{align}
        || P^c_\mathrm{mr} - \hat{P}^c_\mathrm{mr} ||_\infty & \leq \max \{ |\mathrm{Pr}^\mathrm{min}_{\mathcal{M}, c}(\eventually E) - \underline{x}^{\mathrm{min}, *}_{\mathcal{M}, \delta}(c)|, |\mathrm{Pr}^\mathrm{min}_{\mathcal{M}, c}(\eventually E) - (1 - \overline{x}^{\mathrm{min}, *}_{\mathcal{M}, \delta}(c))| \} \nonumber \\
        & \leq T d_\mathrm{max} \varepsilon + \epsilon.
    \end{align}
\end{proof}

\begin{proof}[Theorem \ref{thm:sample_efficiency}]
    We first show (\ref{required_samples_our_modification}). By Hoeffding's inequality \citep{hoeffding1963probability} and Bool's inequality, for any $\delta_\mathcal{M} \in (0,1)$ and any $(s,a) \in S \setminus E \times A$, if 
    \begin{align}
        N(s,a) \geq \frac{2}{\varepsilon^2} \ln \frac{2|\mathrm{Tr(\mathcal{M}})|}{\delta_\mathcal{M}},
    \end{align}
    then we have $\mathbb{P}_\mathcal{D}(||P - \hat{P}_\delta||_\infty \leq \varepsilon) \geq 1 - \delta_\mathcal{M}$, where $\delta = \delta_\mathcal{M} / |\mathrm{Tr}(\mathcal{M})|$.
    We consider the two cases $\Delta_c > 0$ and $\Delta_c < 0$:
    \begin{enumerate}
        \item Suppose $\Delta_c > 0$. Then, let $\hat{\Delta}_c = \underline{x}^{\mathrm{min}, *}_\mathcal{M}(c) - \overline{x}^{\mathrm{max}, *}_{\mathcal{M}^{[c]}}(s_I)$. We have
        \begin{align}
            |\hat{\Delta}_c - \Delta_c| & \leq |\underline{x}^{\mathrm{min}, *}_\mathcal{M}(c) - \mathrm{Pr}^\mathrm{min}_{\mathcal{M}, c}(\eventually E)| + |\overline{x}^{\mathrm{max}, *}_{\mathcal{M}^{[c]}}(s_I) - \mathrm{Pr}^\mathrm{max}_{\mathcal{M}^{[c]}}(\eventually E)|, \nonumber \\
            \intertext{by Lemma \ref{lemma:operators_monotonically_error_decreasing}, for any $\epsilon > 0$, there is $T = \mathrm{max} \{T^\mathrm{min}_{\mathcal{M}, \epsilon, c}, T^\mathrm{max}_{\mathcal{M}^{[c]}, \epsilon, s_I} \}$ such that}
            & \leq |(f^{\mathrm{min}, \mathrm{min}}_{\mathcal{M}, \delta})^T[\underline{x}_0](c) - \mathrm{Pr}^\mathrm{min}_{\mathcal{M}, c}(\eventually E)| + |(f^{\mathrm{max}, \mathrm{max}}_{\mathcal{M}^{[c]}, \delta})^T[\overline{x}_0](s_I) - \mathrm{Pr}^\mathrm{max}_{\mathcal{M}^{[c]}}(\eventually E)| \nonumber \\
            & \leq |(f^{\mathrm{min}, \mathrm{min}}_{\mathcal{M}, \delta})^T[\underline{x}_0](c) - \mathrm{Pr}^{\pi_\mathrm{min}}_{\mathcal{M}, c}(\eventually^{\leq T} E) + \mathrm{Pr}^{\pi_\mathrm{min}}_{\mathcal{M}, c}(\eventually^{\leq T} E) - \mathrm{Pr}^\mathrm{min}_{\mathcal{M}, c}(\eventually E)| \nonumber \\
            & \hspace{5mm} + |(f^{\mathrm{max}, \mathrm{max}}_{\mathcal{M}^{[c]}, \delta})^T[\overline{x}_0](s_I) - \mathrm{Pr}^{\pi_\mathrm{max}}_{\mathcal{M}^{[c]}}(\eventually^{\leq T} E) + \mathrm{Pr}^{\pi_\mathrm{max}}_{\mathcal{M}^{[c]}}(\eventually^{\leq T} E) - \mathrm{Pr}^{\mathrm{max}}_{\mathcal{M}^{[c]}}(\eventually E)|, \nonumber \\
            \intertext{by Lemma \ref{lemma:T_step_operators_bound} and (\ref{Pc_error_bound}) in Lemma \ref{lemma:transition_probability_bound_modified_MDPs}, we have}
            & \leq 2 T d_\mathrm{max} \varepsilon + 2 \epsilon,
        \end{align}
        where $\underline{x}_0:S \to [0,1]$ and $\overline{x}_0:S \to [0,1]$ are lower and upper bounds for $\mathrm{Pr}^\mathrm{min}_{\mathcal{M}, c}(\eventually E)$ and $\mathrm{Pr}^\mathrm{max}_{\mathcal{M}^{[c]}}(\eventually E)$, respectively, such that $\underline{x}_0 \leq f^{\mathrm{min}, \mathrm{min}}_{\mathcal{M}, \delta}[\underline{x}_0]$ and $\overline{x}_0 \geq f^{\mathrm{max}, \mathrm{max}}_{\mathcal{M}, \delta}[\overline{x}_0]$, and $\pi_\mathrm{min}$ and $\pi_\mathrm{max}$ are the policies such that $\mathrm{Pr}^{\pi_\mathrm{min}}_{\mathcal{M}, \cdot}(\eventually E) = \mathrm{Pr}^\mathrm{min}_{\mathcal{M}, \cdot}(\eventually E)$ and $\mathrm{Pr}^{\pi_\mathrm{max}}_{\mathcal{M}^{[c]}, \cdot}(\eventually E) = \mathrm{Pr}^\mathrm{max}_{\mathcal{M}^{[c]}, \cdot}(\eventually E)$, respectively.
        Since $|\hat{\Delta}_c - \Delta_c| < |\Delta_c|$ implies that the sign of $\hat{\Delta}_c$ is equal to the sign of $\Delta_c$, we choose $\epsilon = \frac{\Delta_c}{8}$ and $\varepsilon = \frac{\Delta_c}{8 T d_\mathrm{max}}$, resulting $N(s,a)$ that satisfies (\ref{required_sample_number_ours_exact_coefficients}) implies $\hat{\Delta}_c > 0$:
        \begin{align}
            \label{required_sample_number_ours_exact_coefficients}
            N(s,a) \geq 128 \frac{d_\mathrm{max}^2 T^2}{\Delta_c^2} \ln \frac{2|\mathrm{Tr}(\mathcal{M})|}{\delta_\mathcal{M}}.
        \end{align}
        \item We can prove that (\ref{required_sample_number_ours_exact_coefficients}) is a sufficient condition for $\overline{x}^{\mathrm{min}, *}_\mathcal{M}(c) < \underline{x}^{\mathrm{max}, *}_{\mathcal{M}^{[c]}}(s_I)$ when $\Delta_c < 0$ in the same way as in case 1.
    \end{enumerate}
    Next, we show (\ref{required_samples_existing_modification}). We consider the  two cases $\Delta_{\mathrm{mr}, c} > 0$ and $\Delta_{\mathrm{mr}, c} < 0$:
    \begin{enumerate}
        \item Suppose $\Delta_{\mathrm{mr}, c} > 0$. Then, let $\hat{\Delta}_{\mathrm{mr}, c} = \underline{x}^{\mathrm{min}, *}_\mathcal{M}(c) - \overline{x}^{\mathrm{max}, *}_{\mathcal{M}^{[c]}}(s_I)$. By Lemmas \ref{lemma:T_step_operators_bound} and \ref{lemma:operators_monotonically_error_decreasing} and (\ref{Pc_rm_error_bound}) in Lemma \ref{lemma:transition_probability_bound_modified_MDPs}, for any $\epsilon > 0$, there is $T_\mathrm{mr} = \min \{ T > 0 \mid T \geq \mathrm{max} \{T^\mathrm{min}_{\mathcal{M}, \epsilon, c}, T^\mathrm{max}_{\mathcal{M}^{[c]}_\mathrm{mr}, \epsilon, s_I} \} \}$ with $ \epsilon = \frac{|\Delta_{\mathrm{mr},c}|}{8(T d_\mathrm{max} + 1)}$ such that
        \begin{align}
            |\Delta_{\mathrm{mr}, c} - \hat{\Delta}_{\mathrm{mr}, c}| & \leq 2T_\mathrm{mr} d_\mathrm{max} (T^\mathrm{min}_{\mathcal{M}, \epsilon, c} d_\mathrm{max} \varepsilon + \epsilon) + 2 \epsilon \nonumber \\
            & \leq 2T_\mathrm{rm}^2 d^2_\mathrm{max} \varepsilon + 2(T_\mathrm{mr} d_\mathrm{max} + 1)\epsilon.
        \end{align}
        Since $|\hat{\Delta}_{\mathrm{mr},c} - \Delta_{\mathrm{rm},c}| < |\Delta_{\mathrm{mr},c}|$ implies that the sign of $\hat{\Delta}_{\mathrm{mr},c}$ is equal to the sign of $\Delta_{\mathrm{mr},c}$, we choose $\varepsilon = \frac{\Delta_{\mathrm{mr}, c}}{8 T_\mathrm{rm}^2 d^2_\mathrm{max}}$, resulting $N(s,a)$ that satisfies (\ref{required_sample_number_existing_exact_coefficients}) implies $\hat{\Delta}_c > 0$:
        \begin{align}
            \label{required_sample_number_existing_exact_coefficients}
            N(s,a) \geq 128 \frac{d_\mathrm{max}^4 T_\mathrm{rm}^4}{\Delta_{\mathrm{rm}, c}^2} \ln \frac{2|\mathrm{Tr}(\mathcal{M})|}{\delta_\mathcal{M}}.
        \end{align}
        \item We can prove that (\ref{required_sample_number_existing_exact_coefficients}) is a sufficient condition for $\overline{x}^{\mathrm{min}, *}_\mathcal{M}(c) < \underline{x}^{\mathrm{max}, *}_{\mathcal{M}^{[c]}_\mathrm{rm}}(s_I)$ when $\Delta_{\mathrm{rm},c} < 0$ in the same way as in case 1.
    \end{enumerate}
\end{proof}

\begin{lemma}
\label{lemma:correct_update}
    For any MDP $\mathcal{M}$, any $\delta > 0$, any multi-sample $\mathcal{D}$ of transitions, and any lower and upper bounds $\underline{x}, \overline{x}: S \to [0,1]$ for $\mathrm{Pr}^{\mathrm{opt}}_{\mathcal{M}, s}(\eventually E)$ with $\mathrm{opt} \in \{ \mathrm{min}, \mathrm{max} \}$, if $P \geq \hat{P}_{\delta}$ holds, then the updated functions $f^{\mathrm{opt}, \mathrm{min}}_{\mathcal{M}, \delta}[\underline{x}]$ and $f^{\mathrm{opt}, \mathrm{max}}_{\mathcal{M}, \delta}[\overline{x}]$ obtained from (\ref{opt12_update}), respectively, are also lower and upper bounds for $\mathrm{Pr}^\mathrm{opt}_{\mathcal{M}, s}(\eventually E)$. 
\end{lemma}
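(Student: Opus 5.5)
We argue pointwise at each state $s$ and treat the lower and upper bounds symmetrically. Let $v(s) = \mathrm{Pr}^{\mathrm{opt}}_{\mathcal{M}, s}(\eventually E)$. The true Bellman characterization gives $v = f^{\mathrm{opt}}_{\mathcal{M}}[v]$ with $v(s) = 1$ on $E$ and $v(s) = 0$ on $S^{\mathrm{opt}}_{\mathcal{M}}$. These boundary states are handled directly: the operator in Def.~\ref{def:bounds_operator_reachability} assigns exactly these values, so the bounds hold there trivially. For the remaining states we prove the stronger chain
\begin{align}
f^{\mathrm{opt},\mathrm{min}}_{\mathcal{M},\delta}[\underline{x}](s) \le f^{\mathrm{opt},\mathrm{min}}_{\mathcal{M},\delta}[v](s) \le f^{\mathrm{opt}}_{\mathcal{M}}[v](s) = v(s), \nonumber
\end{align}
together with the mirror chain for the upper bound using $\overline{x}$ and $\mathrm{opt}^{(2)} = \max$.

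\textbf{Step 1 (monotonicity).} The first inequality follows from Lemma~\ref{lemma:monotone_continuos}. Since $\underline{x} \le v$ and $P \ge \hat{P}_\delta$ ensures that each coefficient $1 - \sum_{s'} \hat{P}_\delta(s,a,s')$ is nonnegative, the operator is monotone.

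\textbf{Step 2 (key comparison).} The second inequality compares the modified operator with the true one on the same argument $v$. Fix an action $a$ and set $\Delta P(s,a,s') = P(s,a,s') - \hat{P}_\delta(s,a,s') \ge 0$. Because $\sum_{s'} P(s,a,s') = 1$ for enabled actions, we have $\sum_{s'} \Delta P(s,a,s') = 1 - \sum_{s'} \hat{P}_\delta(s,a,s')$. The true one-step value is therefore
\begin{align}
\sum_{s'} \hat{P}_\delta v(s') + \sum_{s'} \Delta P\, v(s'), \nonumber
\end{align}
that is, the residual mass placed on $\mathrm{Post}_{s,a}$ with weights $\Delta P \ge 0$. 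Any such convex combination of $v(s')$ lies between $\min_{s' \in \mathrm{Post}_{s,a}} v(s')$ and $\max_{s' \in \mathrm{Post}_{s,a}} v(s')$, each times the residual mass. This is exactly the $\mathrm{opt}^{(2)}$ term. For each action, the $\min$-variant thus lower-bounds the true action value and the $\max$-variant upper-bounds it. Applying the same outer $\mathrm{opt}^{(1)}$ over $a \in \mathcal{A}(s)$ preserves actionwise inequalities, whether it is a min or a max, so the inequality passes through.

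We also need $\mathrm{Post}_{s,a}$ to be the true support, which is guaranteed by the unknown-MDP assumption. The singleton case, where $\hat{P}_\delta = 1 = P$, is consistent with this.

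\textbf{Step 3 (main obstacle).} The delicate point is the Bellman identity $v = f^{\mathrm{opt}}_{\mathcal{M}}[v]$ together with the boundary handling. We must check that states in $S^{\mathrm{opt}}_{\mathcal{M}}$ are set to $0$, which matches $v$, and that $E$ is absorbing with value $1$. Both are standard facts from \citep{baier2008principles}, so we invoke them. Once they are in place, Step 2 is only the convexity observation, and Step 1 requires $P \ge \hat{P}_\delta$ solely to keep the residual weights nonnegative.
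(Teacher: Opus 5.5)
Your proof is correct and is essentially the paper's argument. Both use the same split $P = \hat{P}_\delta + (P - \hat{P}_\delta)$, bound the nonnegative residual mass by the $\mathrm{opt}^{(2)}$ term, and combine this with monotonicity and the Bellman equation for $\mathrm{Pr}^{\mathrm{opt}}_{\mathcal{M},\cdot}(\eventually E)$. The only difference is the order of the two steps: the paper writes $\mathrm{Pr}^{\mathrm{opt}}_{\mathcal{M},s}(\eventually E) \geq \mathrm{opt}_{a}\sum_{s'} P(s,a,s')\underline{x}(s')$ (monotonicity of the true operator) and then compares the operators at $\underline{x}$, whereas you use monotonicity of the modified operator (Lemma~\ref{lemma:monotone_continuos}) and compare the operators at $v$; the two routes are interchangeable.
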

\begin{proof}
    For any $s \in S$, we have
    \begin{align}
        \mathrm{Pr}^\mathrm{opt}_{\mathcal{M}, s}(\eventually E) & \geq \mathrm{opt}_{a \in \mathcal{A}(s)} \sum_{s' \in \mathrm{Post}(s,a)} P(s,a,s') \underline{x}(s') \nonumber \\
        & = \mathrm{opt}_{a \in \mathcal{A}(s)} \sum_{s' \in \mathrm{Post}(s,a)} \hat{P}_\delta(s,a,s') \underline{x}(s') + \sum_{s' \in \mathrm{Post}(s,a)} \left( P(s,a,s') - \hat{P}_\delta(s,a,s') \right) \underline{x}(s'),
        \intertext{since $P(s,a,s') > \hat{P}_\delta(s,a,s')$, we have}
        & \geq \mathrm{opt}_{a \in \mathcal{A}(s)} \sum_{s' \in \mathrm{Post}(s,a)} \hat{P}_\delta(s,a,s') \underline{x}(s') + \min_{s' \in \mathrm{Post}(s,a)} \underline{x}(s') \left( 1 - \sum_{s' \in \mathrm{Post}(s,a)} \hat{P}_\delta(s,a,s') \right).
    \end{align}
    Likewise, for any $s \in S$, we have $\mathrm{Pr}^\mathrm{opt}_{\mathcal{M}, s}(\eventually E) \leq f^{\mathrm{opt}, \mathrm{max}}_{\mathcal{M}, \delta}[\overline{x}](s)$.
    % \begin{align}
        % \mathrm{Pr}^\mathrm{min}_\mathcal{M}(s \models \eventually E) & \leq \min_{a \in \mathcal{A}(s)} \sum_{s' \in \mathrm{Post}(s,a)} P(s,a,s') x^u(s') \nonumber \\
        % & \leq \min_{a \in \mathcal{A}(s)} \sum_{s' \in \mathrm{Post}(s,a)} \hat{P}_\delta(s,a,s') x^u(s') + \max_{s' \in \mathrm{Post}(s,a)} x^u(s') \left( 1 - \sum_{s' \in \mathrm{Post}(s,a)} \hat{P}_\delta(s,a,s') \right).
    % \end{align}
\end{proof}

% \begin{lemma}
    % For any MDP $\mathcal{M}$, any $c \in S$, and any fixed points $\underline{L}_\mathcal{M}$, $\underline{U}_\mathcal{M}$, $\overline{L}_{\mathcal{M}_L^{[c]}}$ and $ \overline{U}_{\mathcal{M}_U^{[c]}}$ of (\ref{under_lower_update}), (\ref{under_upper_update}) (\ref{over_lower_update}), and (\ref{over_upper_update}), respectively, and any $p_c \in [\underline{L}_\mathcal{M}(c), \underline{U}_\mathcal{M}(c)]$, we have that $\overline{L}_{\mathcal{M}_L^{[c]}}(s_I) \leq \overline{L}_{\mathcal{M}^{[c]}(p_c)}(s_I)$ and $\overline{U}_{\mathcal{M}_U^{[c]}}(s_I) \geq \overline{U}_{\mathcal{M}^{[c]}(p_c)}(s_I)$.
% \end{lemma}

In the following, we provide a key lemma to guarantee the correctness of Algorithm \ref{alg:PACIdentification}.

\begin{lemma}
\label{lemma:Algorithm1_is_anytime_under_probability_bound}
    For any MDP $\mathcal{M}$, any $\delta_\mathcal{M} > 0$, and any multi-sample $\mathcal{D}$ of transitions, if $\hat{P}_\delta \leq P$, then $C_\top$, $C_\bot$, $C_?$ obtained from each iteration in Algorithm \ref{alg:PACIdentification}, respectively, satisfy the following properties:
    \begin{enumerate}
        \item $\forall c \in C_\top$, $\{c\} \models \mathbf{(C1)}$,
        \item $\forall c \in C_\bot$, $\{c\} \not \models \mathbf{(C1)}$.
        \item $\forall c \in C_?,  |\mathrm{Pr}^\mathrm{min}_{\mathcal{M}}(\eventually E \mid \eventually c) - \mathrm{Pr}^\mathrm{max}_\mathcal{M}(\eventually E \mid \neg \eventually c)| \leq \tau $.
        % \item for any SPR cause $C \subseteq C_\top$, $\min_\pi \mathrm{Pr}^\pi_\mathcal{M}(\eventually E \;|\; \eventually C^*) - \mathrm{Pr}^\pi_\mathcal{M}(\eventually E \;|\; \eventually C) \geq 0$.
    \end{enumerate}
    Moreover, if $\hat{P}_\delta \leq P$, $C^* = \{ c \in C_\top \mid \exists \rho \text{ s.t. } \rho \models \neg C_\top \mathrm{U} c \}$ obtained from Line 15 in Algorithm \ref{alg:PACIdentification} satisfies that, for any SPR cause $C \subseteq S \setminus C_?$, 
    \begin{align}
    \label{recall_optimality}
        \min_\pi \mathrm{Pr}^\pi_\mathcal{M}(\eventually C^* \mid \eventually E) - \mathrm{Pr}^\pi_\mathcal{M}(\eventually C \mid \eventually E) \geq 0.
    \end{align}
\end{lemma}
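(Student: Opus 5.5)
Under $\hat{P}_\delta \le P$, I would first establish that every bound pair computed in Algorithm~\ref{alg:PACIdentification} is sound. The initial bounds are valid lower and upper bounds (trivially $0$/$1$, with $1$ on $E$ and $0$ on $S^{\mathrm{opt}}$). By Lemma~\ref{lemma:correct_update}, each application of $f^{\mathrm{opt},\min}_{\mathcal{M},\delta}$ or $f^{\mathrm{opt},\max}_{\mathcal{M},\delta}$ preserves the bound property. Lemma~\ref{lemma:monotone_continuos} gives monotonicity, so iterating from these initial values converges to fixed points that still bracket the true values. This yields
\[
\underline{x}^{\min,*}_\mathcal{M}(c)\le \mathrm{Pr}^{\min}_{\mathcal{M},c}(\eventually E)\le \overline{x}^{\min,*}_\mathcal{M}(c).
\]
For $\mathcal{M}^{[c]}$, $\hat P^c_\delta\le P^c$ holds, since the restart row is exact (value $1$) and the other rows coincide with $\hat P_\delta$. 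The same argument therefore gives
\[
\underline{x}^{\max,*}_{\mathcal{M}^{[c]}}(s_I)\le \mathrm{Pr}^{\max}_{\mathcal{M}^{[c]}}(\eventually E)\le \overline{x}^{\max,*}_{\mathcal{M}^{[c]}}(s_I).
\]

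Items 1 and 2 then follow from Proposition~\ref{prop:soundness_completeness_restart_modification}. Each examined $c$ is outside $C^{\mathsf{pre}}_\bot$, so $\Pi_{\mathcal{M},c}\neq\emptyset$. If $c$ is added to $C_\top$ via Line~8, the bracketing gives $\mathrm{Pr}^{\min}_{\mathcal{M},c}(\eventually E)>\mathrm{Pr}^{\max}_{\mathcal{M}^{[c]}}(\eventually E)$, and case~1 applies. The $C_\bot$ case via Line~9 uses case~2. In the $\tau=0$ branch the interval lies inside $[0,0]$, which forces equality, so case~3(a)/(b) decides exactly as the algorithm does. For item 3, the true gap lies in the computed interval, which is contained in $[-\tau,\tau]$. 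I identify $\mathrm{Pr}^{\min}_{\mathcal{M},c}(\eventually E)$ with $\min_\pi \mathrm{Pr}^\pi_\mathcal{M}(\eventually E\mid\eventually c)$ (the prefix to $c$ does not matter by the Markov property). I identify $\mathrm{Pr}^{\max}_{\mathcal{M}^{[c]}}(\eventually E)$ with $\sup_\pi \mathrm{Pr}^\pi_\mathcal{M}(\eventually E\mid\neg\eventually c)$: one direction is Lemma~\ref{lemma:conditioned_reachability_to_reachability_modified_MDP}, and for the reverse direction, an $\mathcal{M}^{[c]}$-policy restricted to a single episode gives a policy on $\mathcal{M}$ with the same conditional value. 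Once I use $\mathrm{Pr}^\pi_\mathcal{M}(\neg\eventually c)>0$, this reverse direction is essentially the decomposition computation in the proof of that lemma.

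For the optimality claim \eqref{recall_optimality}, let $C\subseteq S\setminus C_?$ be an SPR cause. Every $c\in C$ is causal as a singleton, using the known fact from \citep{baier2024foundations} that (C1)/(C2) for a set implies them for its members, since $\neg C\,\mathrm{U}\,c$ with the other elements avoided yields a valid policy class. Hence $c\notin C^{\mathsf{pre}}_\bot\cup C_\bot$ by items 1 and 2, so $c\in C_\top$ and $C\subseteq C_\top$. By construction of $C^*$, every path reaching $C_\top$ first hits $C_\top$ at a state of $C^*$. Therefore $\eventually C_\top=\eventually C^*$ pathwise, and
\[
\mathrm{Pr}^\pi(\eventually C^*\mid\eventually E)=\mathrm{Pr}^\pi(\eventually C_\top\mid\eventually E)\ge \mathrm{Pr}^\pi(\eventually C\mid\eventually E)
\]
for every $\pi$ with $\mathrm{Pr}^\pi(\eventually E)>0$. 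Taking the minimum over $\pi$ gives the claim.

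The main obstacle is this last step, together with checking that $C^*$ itself is an SPR cause. One must verify that the union $C_\top$ reduced to its front satisfies (C1) for sets and not just singletons, and that the singleton-to-set monotonicity used above is available. I would prove both directly. For $c\in C^*$ and policy $\pi$, the event $\neg C^*\,\mathrm{U}\,c$ coincides with first hitting $C_\top$ at $c$. Conditioning on that event gives a reachability value of at least $\mathrm{Pr}^{\min}_{\mathcal{M},c}(\eventually E)$. The complement event is a mixture of bypassing $c$ and visiting other front states first, and it should be bounded by comparison with the singleton certificates. For (C2), a path witnessing $\neg C_\top\,\mathrm{U}\,c$ exists by definition of $C^*$, and mixing it with a policy from $\Pi_{\mathcal{M},c}$ via Lemma~\ref{lemma:policy_convex_combination_any_measurable} gives a probability strictly in $(0,1)$.
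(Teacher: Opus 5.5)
You follow the paper's route: the fixed-point bounds are sound under $\hat{P}_\delta \le P$ (Lemma~\ref{lemma:correct_update}), items 1--3 come from Proposition~\ref{prop:soundness_completeness_restart_modification}, and the recall claim comes from $C \subseteq C_\top$ plus the front property $\eventually C_\top \equiv \eventually C^*$. Those pieces are fine. Proving that $C^*$ is itself an SPR cause is not needed for this lemma.

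\textbf{The gap: members of an SPR cause need not be causal.} The failing step is ``every $c \in C$ is causal as a singleton'', and your justification runs the wrong way. Singleton \textbf{(C1)} quantifies over \emph{all} $\pi$ with $\mathrm{Pr}^\pi_{\mathcal{M}}(\eventually c) \in (0,1)$. That class includes policies under which $c$ is reached only after another member of $C$, and on those the set condition for $c$ says nothing. A counterexample:
\begin{itemize}
\item $s_I$ has deterministic actions $a_1 \to c'$ and $a_2 \to w$.
\item From $c'$ the process moves to $c$ or $e$, and from $w$ to $c$ or a non-$E$ sink, each with probability $1/2$.
\item $c \to e$ surely, and $E = \{e\}$.
\end{itemize}
Write $p = \pi(s_I,a_1)$ and $C = \{c',c\}$. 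Then $\mathrm{Pr}^p(\eventually E) = (1+p)/2$, $\mathrm{Pr}^p(\neg C\,\mathrm{U}\,c') = p$ and $\mathrm{Pr}^p(\neg C\,\mathrm{U}\,c) = (1-p)/2$. Both conditional probabilities equal $1$, which exceeds $(1+p)/2$ whenever $p<1$, i.e.\ on both relevant policy ranges. Taking $p = 1/2$ gives \textbf{(C2)}, so $C$ is an SPR cause. Yet $\mathrm{Pr}^p(\eventually c) = 1/2$ for every $p$, and at $p=1$ the conditional and unconditional probabilities are both $1$. So $\{c\}$ is not an SPR cause. The paper's proof makes the same leap (``$C \subseteq C_\top$ by property 2 and Proposition 1''), so it does not fill this gap either.

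\textbf{What does hold, and the case $\tau>0$.} If $c$ belongs to some SPR cause, then $\Pi_{\mathcal{M},c} \neq \emptyset$ and $\mathrm{Pr}^{\min}_{\mathcal{M},c}(\eventually E) \ge \mathrm{Pr}^{\max}_{\mathcal{M}^{[c]}}(\eventually E)$. Both follow from one construction. Take a \textbf{(C2)}-witness and switch, at the first visit to $c$, to a minimizing continuation if no other element of $C$ was visited before and to a maximizing one otherwise. For the inequality, also mix it with a memoryless maximizer of $\mathcal{M}^{[c]}$ as in case 2 of Proposition~\ref{prop:soundness_completeness_restart_modification}. A failure of either claim then violates the set version of \textbf{(C1)}. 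Consequently, under $\hat{P}_\delta \le P$:
\begin{itemize}
\item such a $c$ never enters $C^{\mathsf{pre}}_\bot$;
\item if $c$ is causal, item 2 keeps it out of $C_\bot$;
\item if $c$ is not causal, it is an exact tie of type 3(b), so its sound interval always contains $0$ and for $\tau>0$ it can only end in $C_?$.
\end{itemize}
Hence for $\tau>0$ every SPR cause $C \subseteq S\setminus C_?$ satisfies $C \subseteq C_\top$, and your front argument finishes the proof.

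\textbf{The case $\tau=0$ fails.} In the example, the bounds for $c$ are exact. The transition $c \to e$ is deterministic, and the lower fixed point for $\mathcal{M}^{[c]}$ at $s_I$ equals $1$ as soon as $\hat{P}_\delta(c',\cdot,e)>0$. So the interval is $[0,0]$, case 3(b) fires, and $c \in C_\bot$. With the MEC compression described in the appendix, $c'$ is certified causal and every other state non-causal, so $C^* = \{c'\}$. At $p=0$, however, $\mathrm{Pr}(\eventually C^* \mid \eventually E) = 0 < 1 = \mathrm{Pr}(\eventually C \mid \eventually E)$. The recall part is therefore false as stated for $\tau=0$. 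Your argument, like the paper's, is valid only once this tie case is excluded.
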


\begin{proof}
First, we show properties 1 and 2.
    Under $\hat{P}_\delta \leq P$, by Lemma \ref{lemma:correct_update}, if $\underline{x}^\mathrm{min}_\mathcal{M}(c) > \overline{x}^\mathrm{max}_{\mathcal{M}^{[c]}}(s_I)$ (resp, $\overline{x}^\mathrm{min}_\mathcal{M}(c) < \underline{x}^\mathrm{max}_{\mathcal{M}^{[c]}}(s_I)$) in Line 7, then we have $\mathrm{Pr}^\mathrm{min}_{\mathcal{M}, c}(\eventually E) > \mathrm{Pr}^\mathrm{max}_{\mathcal{M}^{[c]}}(\eventually E)$ (resp., $\mathrm{Pr}^\mathrm{min}_{\mathcal{M}, c}(\eventually E) < \mathrm{Pr}^\mathrm{max}_{\mathcal{M}^{[c]}}(\eventually E)$).
    Thus, by Proposition \ref{prop:soundness_completeness_restart_modification}, $c$ is a causal state (resp., a non-causal state).
    Hence, we have $\{c\} \models \mathbf{(C1)}$ for any $c \in C_\top$ and $\{c\} \not\models \mathbf{(C1)}$ for any $c \in C_\bot$. 
    Next, we show property 3.
    For any $\tau > 0$ and any $c \in C_?$, under under $\hat{P}_\delta \leq P$, if we have 
    \begin{align}
        [\underline{x}^{\mathrm{min},*}_\mathcal{M}(c) - \overline{x}^{\mathrm{max},*}_{\mathcal{M}^{[c]}}(s_I) , \overline{x}^{\mathrm{min},*}_\mathcal{M}(c) - \underline{x}^{\mathrm{max},*}_{\mathcal{M}^{[c]}}(s_I)] \subseteq [-\tau, \tau],
    \end{align}
    then, by Lemma \ref{lemma:correct_update}, we have 
    \begin{align}
        |\mathrm{Pr}^\mathrm{min}_{\mathcal{M}}(\eventually E | \eventually c) - \mathrm{Pr}^\mathrm{max}_\mathcal{M}(\eventually E | \neg \eventually c)| \leq \tau.
    \end{align}
    Then we show (\ref{recall_optimality}), under $\hat{P}_\delta \leq P$. In Line 14 in Algorithm \ref{alg:PACIdentification}, we have $S\setminus C_? = C_\top \cup (C_\bot \cup C_\bot^{\mathrm{pre}})$. Consider any SPR cause $C \subseteq S \setminus C_?$. We then have $C \subseteq C_\top$ by property 2 and Proposition \ref{prop:soundness_completeness_restart_modification}. Thus, by the construction of $C^*$, we have 
    \begin{align}
        \forall \rho \in \mathrm{InfPath},\; \rho \models \eventually C \implies \eventually C^*,
    \end{align}
    and hence, for any policy $\pi$,
    \begin{align}
        \mathrm{Pr}^\pi_\mathcal{M}(\eventually C \land \eventually E) \leq \mathrm{Pr}^\pi_\mathcal{M}(\eventually C^* \land \eventually E).
    \end{align}
    Therefore, $C^*$ satisfies (\ref{recall_optimality}).
\end{proof}

\begin{proof}[Proof of Theorem \ref{thm:Algorithm1_is_anytime}] For any $c \in C_\top$, there exists an iteration number $i$ such that $c$ is assigned to $C_\top$ at $i$-th iteration and transition data $\mathcal{D}_i$ sampled up to $i$-th iteration. Then, by Lemma \ref{lemma:Algorithm1_is_anytime_under_probability_bound}, we have 
\begin{align}
    \mathbb{P}_{\mathcal{D}_i}(\{c\} \models \mathbf{(C1)} ) & \geq \mathbb{P}_{\mathcal{D}_i}(\{c\} \models \mathbf{(C1)} \;|\; \hat{P}_\delta \leq P) \mathbb{P}_\mathcal{D}( \hat{P}_\delta \leq P ) \nonumber \\
    & = \mathbb{P}_{\mathcal{D}_i}( \hat{P}_\delta \leq P ),
    \intertext{where $\mathbb{P}_{\mathcal{D}_i}$ is the probability measure for $\mathcal{D}_i$. By Lemma \ref{lemma:transition_lower_bound}, we have $\mathbb{P}_{\mathcal{D}_i}(\hat{P}_\delta \leq P) \geq 1 - \delta_\mathcal{M}$ and hence, }
    & \geq 1 - \delta_\mathcal{M}.
\end{align}
We analogously derive properties 2 and 3 in Theorem \ref{thm:Algorithm1_is_anytime} by Lemmas \ref{lemma:transition_lower_bound} and \ref{lemma:Algorithm1_is_anytime_under_probability_bound}.
% --- Note: (i) $x^l_\mathcal{M}(c) > y^u_{\mathcal{M}_u^{[c]}}(s_I)$ and (ii) $x^u_\mathcal{M}(c) < y^l_{\mathcal{M}_l^{[c]}}(s_I)$ implies $\mathrm{Pr}^\mathrm{min}_\mathcal{M}(c \models \eventually E) > \mathrm{Pr}^\mathrm{max}_{\mathcal{M}^{[c]}(p_c)}(s_I \models \eventually E)$ and $\mathrm{Pr}^\mathrm{min}_\mathcal{M}(c \models \eventually E) < \mathrm{Pr}^\mathrm{max}_{\mathcal{M}^{[c]}(p_c)}(s_I \models \eventually E)$, respectively. Hence, $\{c\}$ is SPR cause and not if (i) and (ii) hold, respectively, by Lemma 4 in \citep{}.
\end{proof}

\begin{proof}[Proof of Theorem \ref{thm:Algorithm1_is_solution}]
By Lemma \ref{lemma:Algorithm1_is_anytime_under_probability_bound} and Def. \ref{def:SPR_cause}, each SPR cause included by $S \setminus C_?$ is a subset of $C_\top$ under $\hat{P}_\delta \leq P$. Hence, for any multi-sample $\mathcal{D}$ of transitions, we have
\begin{align}
    \mathbb{P}_{\mathcal{D}}(C^* \mbox{ satisfies } (\ref{PAC-PR-recall_optimal}) ) & \geq \mathbb{P}_\mathcal{D}( C^* \mbox{ satisfies } (\ref{PAC-PR-recall_optimal}) \mid \hat{P}_\delta \leq P) \mathbb{P}_\mathcal{D}( \hat{P}_\delta \leq P ) \nonumber \\
    & = \mathbb{P}_\mathcal{D}( \hat{P}_\delta \leq P ),
    \intertext{where $\mathbb{P}_\mathcal{D}$ is the probability measure for $\mathcal{D}$. By Lemma \ref{lemma:transition_lower_bound}, we have $\mathbb{P}_\mathcal{D}(\hat{P}_\delta \leq P) \geq 1 - \delta_\mathcal{M}$ and hence, }
    & \geq 1 - \delta_\mathcal{M}.
\end{align}
\end{proof}

Moreover, the following Propositions \ref{prop:fixed_point} and \ref{prop:convergence_property} guarantee that fixed point operations in Lines 10 to 16 in Algorithm \ref{alg:PACIdentification} converge if $\hat{P}_\delta \leq P$, and Algorithm \ref{alg:PACIdentification} converges as the number of transition samples goes to infinity under mild conditions.

For any MDP $\mathcal{M}$, a pair $(S', A')$ of a subset of states $S' \subseteq S$ and a subset of actions $A' \subseteq \cup_{s\in S'}\mathcal{A}(s)$ such that $A' \neq \emptyset$ is called an end component (EC) if and only if, (i) for any $s \in S'$ and any $a \in A' \cap \mathcal{A}(s)$, we have $\{ s' \in S \mid P(s,a, s') > 0 \} \subseteq S'$, and (ii) for any $s, s' \in S'$, there exists a path from $s$ to $s'$ under $A'$.
An EC $\mathcal{E} = (S', A')$ is called a maximal EC (MEC) if there is no other EC $\mathcal{E}' = (S'', A'')$ such that $S' \subset S''$ or $A' \subset A''$. We say that an MEC $(S', A')$ is a singleton MEC if and only if $|S'| = 1$. Moreover, an MEC $(S', A')$ is called a bottom MEC (BMEC) if and only if $A' = \cup_{s \in S'}\mathcal{A}(s)$. We assume that all MECs of $\mathcal{M}$ are composed of singleton MECs and BMECs\footnote{Any non-bottom MEC can be replaced with a single state \citep{de1998formal, haddad2018interval}}.

Our restart-based modification for MDPs, as defined in Def. \ref{def:restart_based_modification}, may introduce a non-singleton and non-bottom MEC. So, we compress the MEC into a single state in a standard way \citep{de1998formal} if such an MEC exists in $\mathcal{M}^{[c]}$.
The following proposition characterizes the MECs in $\mathcal{M}^{[c]}$.
\begin{proposition}
\label{prop:MEC_characterization_modifiedMDP}
    For any MDP $\mathcal{M}$ and any $c \in S$, if there exists a non-bottom and non-singleton MEC $(S', A')$ in the modified MDP $\mathcal{M}^{[c]}$, then we have $s_I, c \in S'$.
\end{proposition}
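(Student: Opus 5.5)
We argue by contradiction, relying on the standing assumption that every MEC of $\mathcal{M}$ is either a singleton MEC or a BMEC, together with the observation that $\mathcal{M}^{[c]}$ differs from $\mathcal{M}$ only in the outgoing transitions of $c$.

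Let $(S',A')$ be a non-bottom, non-singleton MEC of $\mathcal{M}^{[c]}$. First we show $c \in S'$. Suppose not. Then every state $s \in S'$ has the same enabled actions and the same successor distributions in $\mathcal{M}^{[c]}$ as in $\mathcal{M}$, since $P^c(s,\cdot,\cdot)=P(s,\cdot,\cdot)$ for $s \neq c$. Hence the closure condition (i) and the strong-connectivity condition (ii) transfer verbatim, and $(S',A')$ is an EC of $\mathcal{M}$. It is therefore contained in some MEC $(S'',A'')$ of $\mathcal{M}$. Because $|S'|\ge 2$, that MEC is not a singleton, so by assumption it is a BMEC. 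We then have two sub-cases.
\begin{itemize}
\item If $c \notin S''$, then $(S'',A'')$ is also an EC of $\mathcal{M}^{[c]}$ and it is bottom there. Maximality of $(S',A')$ in $\mathcal{M}^{[c]}$ then forces $(S',A')=(S'',A'')$, contradicting that $(S',A')$ is non-bottom.
\item If $c \in S''$, the argument must be finer, and we would need to argue non-bottomness of $(S',A')$ directly. Since $(S',A')$ is non-bottom, some state of $S'$ has an enabled action outside $A'$. That action leads outside $S'$ with positive probability; otherwise adding it would enlarge the EC, contradicting maximality. All such moves happen inside the original BMEC $S''$, from which $c$ is reachable. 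Hence $c$ is reachable from $S'$, and following the restart edge of $c$ leads back to $s_I$.
\end{itemize}
This second sub-case is where I expect the main difficulty. The cleanest route is probably to first prove $s_I \in S'$ and then derive $c \in S'$ from it.

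Next we show $s_I \in S'$. If $c \in S'$, then every action $a \in A' \cap \mathcal{A}(c)$ (and $A'\cap\mathcal{A}(c)\neq\emptyset$ because $S'$ is non-singleton and strongly connected under $A'$) has $\mathrm{Post}^{c}_{c,a}=\{s_I\}$. Closure condition (i) then gives $s_I \in S'$. Conversely, suppose $c \notin S'$. Then $S'$ is an EC of $\mathcal{M}$ lying inside a BMEC $B$ of $\mathcal{M}$ (the case $c\notin B$ is already excluded above). We argue that in $\mathcal{M}^{[c]}$ the set $B' = \{s \in B \mid s \text{ can reach } c \text{ within } B\}$ together with $s_I$ and everything reachable from $s_I$ yields a strictly larger EC containing $S'$ whenever $S'$ can exit. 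This contradicts maximality, so $c \in S'$ after all, and hence $s_I \in S'$.

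The key steps, in order, are:
\begin{enumerate}
\item Transfer ECs avoiding $c$ from $\mathcal{M}^{[c]}$ to $\mathcal{M}$.
\item Use the singleton/BMEC assumption on $\mathcal{M}$ to show that such an EC is either bottom or can reach $c$.
\item Use the restart edge to show that reaching $c$ drags $s_I$ into the MEC and closes a cycle, so both $s_I$ and $c$ belong to $S'$.
\end{enumerate}
The main obstacle is step 2's sub-case in which $S'$ lies inside a BMEC of $\mathcal{M}$ containing $c$. My first attempt there will be a reachability argument: every exit from $S'$ stays in $B$, eventually reaches $c$, then jumps to $s_I$, and only returns to $S'$ if the enlarged set forms an EC. Whether it returns at all is the delicate point, and it may require the degenerate case $s_I \notin$ any cycle to be ruled out using that $(S',A')$ was assumed maximal.
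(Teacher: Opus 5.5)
\textbf{Verdict: there is a genuine gap, and it cannot be closed, because the statement is false in the very sub-case you flagged. The paper's proof skips that same sub-case.}

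\textbf{What is correct.} Two parts of your argument hold. First, the sub-case where the BMEC $(S'',A'')$ of $\mathcal{M}$ enclosing $S'$ avoids $c$: then $(S'',A'')$ is an EC of $\mathcal{M}^{[c]}$, maximality forces equality, and bottomness transfers. Second, the closure argument giving $c \in S' \Rightarrow s_I \in S'$, which is exactly the paper's case 2.

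\textbf{Where it fails.} The gap is the remaining sub-case, $c \notin S'$ but $c \in S''$. You claim that $B'$ together with $s_I$ and everything reachable from $s_I$ forms a strictly larger EC. This is unjustified. Closing under an action at $s_I$ can force in states from which neither $s_I$ nor $S'$ is ever reached again, and maximality of $(S',A')$ gives you no leverage against that.

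\textbf{Counterexample.} Take $S=\{s_I,u,v,c,e\}$ and $E=\{e\}$, with $P(s_I,a,u)=P(s_I,a,e)=\tfrac12$ and $P(u,a,v)=P(u,b,c)=P(v,a,u)=P(c,a,u)=1$.
\begin{itemize}
\item In $\mathcal{M}$, the set $\{u,v,c\}$ with all four of its actions is a BMEC. No EC contains $s_I$, since its only action may lead to the absorbing $e$. So the standing singleton/BMEC assumption holds.
\item Every state is reachable from $s_I$, and choosing $b$ at $u$ gives $\mathrm{Pr}(\eventually c)=\tfrac12$. Hence $\Pi_{\mathcal{M},c}\neq\emptyset$, so even restricting to non-trivial candidates does not help.
\item In $\mathcal{M}^{[c]}$ the action at $c$ goes to $s_I$. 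Any non-singleton EC containing $c$ must contain $s_I$. It then contains the action $a$ at $s_I$, hence $e$, which cannot return. So no EC contains $c$ or $s_I$.
\item Consequently $(\{u,v\},\{(u,a),(v,a)\})$ is a MEC of $\mathcal{M}^{[c]}$. It has two states and omits the enabled action $b$ at $u$, so it is non-singleton and non-bottom. Yet $s_I,c\notin\{u,v\}$.
\end{itemize}

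\textbf{The paper's proof has the same hole.} Its case 1 asserts that every non-singleton MEC of $\mathcal{M}^{[c]}$ avoiding $c$ is a BMEC, because transitions outside $c$ are unchanged. But such a MEC is only an EC of $\mathcal{M}$, not necessarily a MEC of $\mathcal{M}$, so the assumption on the MECs of $\mathcal{M}$ does not apply to it. The restart at $c$ can split a BMEC of $\mathcal{M}$ containing $c$, exactly as in the example above.

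\textbf{What is actually provable.} Your argument, and the paper's, establishes only a weaker dichotomy. A non-bottom, non-singleton MEC of $\mathcal{M}^{[c]}$ either contains both $s_I$ and $c$, or avoids $c$ and lies inside a BMEC of $\mathcal{M}$ that contains $c$. Reaching the stated conclusion needs an extra hypothesis that rules out the second alternative, not a sharper proof.
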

\begin{proof}
    We prove by contradiction. Suppose that there exists a non-bottom MEC $(S', A')$ in $\mathcal{M}^{[c]}$ and $A' \neq \bigcup_{s \in S'} \mathcal{A}(s)$ such that $|S'| > 1$ and (1) $c \not\in S'$ or (2) $s_I \not\in S'$ but $c \in S'$.
    \begin{enumerate}
        \item Since every MEC in $\mathcal{M}$ is either a singleton MEC or a BMEC, and all transitions from states other than $c$ on $\mathcal{M}^{[c]}$ are identical to those on $\mathcal{M}$ by Def. \ref{def:restart_based_modification}, for any $S'' \subseteq S\setminus \{c\}$ and any $A' \subseteq \bigcup_{s \in S''} \mathcal{A}(s)$ such that $|S''| > 1$, if $(S'', A'')$ is an MEC, then it is a BMEC. This contradicts the assumption. 
        \item By Def. \ref{def:restart_based_modification}, if $c \in S'$, then $s_I \in S'$. However, this contradicts the assumption.
    \end{enumerate}
\end{proof}

% By Proposition \ref{prop:MEC_characterization_modifiedMDP}, $\mathcal{M}^{[c]}$ contains at most one nontrivial and non-bottom MEC, and it always contains $c$. 
% We denote the compressed single state from the appeared MEC by $\hat{c}$ if such an MEC exists, and otherwise, $\hat{c} = c$. 
With a slight abuse of notation, we also denote the modified MDP after the state compression by $\mathcal{M}^{[c]} = (S, A, s_I, P^c)$. 
In Propositions \ref{prop:fixed_point} and \ref{prop:convergence_property}, we treat the reduced version as the modified MDP.

\begin{proposition}
\label{prop:fixed_point}
    For any MDP $\mathcal{M}$, any multi-sample $\mathcal{D}$ of transitions, and any $c \in S$, $\underline{x}^\mathrm{min}_\mathcal{M}$, $\overline{x}^\mathrm{min}_\mathcal{M}$, $\underline{x}^\mathrm{max}_{\mathcal{M}^{[c]}}$, and $\overline{x}^\mathrm{max}_{\mathcal{M}^{[c]}}$ converge to fixed points with probability at least $ 1- \delta_\mathcal{M}$, respectively.
\end{proposition}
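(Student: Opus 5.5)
We argue on the event $\hat{P}_\delta \leq P$, which by Lemma \ref{lemma:transition_lower_bound} (with $\delta = \delta_\mathcal{M}/\mathrm{Tr}(\mathcal{M})$) has probability at least $1-\delta_\mathcal{M}$. It then suffices to show deterministically that each of the four iterations converges to a fixed point.

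\textbf{Monotonicity and boundedness.} We use the standard Kleene-type argument.
\begin{itemize}
\item By Lemma \ref{lemma:monotone_continuos}, on this event each operator $f^{\mathrm{opt}^{(1)},\mathrm{opt}^{(2)}}_{\mathcal{M},\delta}$ is monotone on $[0,1]^S$. The same holds for the operators built from $\hat{P}^c_\delta$ on the (MEC-compressed) $\mathcal{M}^{[c]}$, since $\hat{P}^c_\delta \leq P^c$ follows from $\hat{P}_\delta \leq P$ by (\ref{P_hat_modified_reset}).
\item For the lower iterations we start from $\underline{x}_0$ with $\underline{x}_0 = 1$ on $E$, $0$ elsewhere (and $0$ on $S^{\mathrm{opt}}$). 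Then $\underline{x}_0 \leq f[\underline{x}_0]$, and induction with monotonicity gives a pointwise nondecreasing sequence bounded by $1$.
\item For the upper iterations we start from $\overline{x}_0 = 1$ off $S^{\mathrm{opt}}$ (and $0$ on it). Then $f[\overline{x}_0] \leq \overline{x}_0$, so the sequence is nonincreasing and bounded below by $0$.
\item Both sequences therefore converge pointwise, and the state space is finite.
\end{itemize}

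\textbf{Fixed-point property.} Each operator is a finite composition of $\min$, $\max$, sums and products of coordinates with fixed coefficients, hence continuous on $[0,1]^S$. Passing to the limit in $x_{t+1} = f[x_t]$ shows that the limit is a fixed point.

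\textbf{Limits are valid bounds.} By Lemma \ref{lemma:correct_update}, every iterate is a valid lower (resp.\ upper) bound on $\mathrm{Pr}^{\mathrm{opt}}(\eventually E)$, so the limits are as well. This fact is not needed for convergence, but it is what later uses of the proposition require.

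\textbf{Main obstacle.} The delicate point is whether the initialization is consistent, i.e.\ whether $\overline{x}_0 \geq f[\overline{x}_0]$ holds for the upper bound in $\mathcal{M}^{[c]}$. Proposition \ref{prop:MEC_characterization_modifiedMDP} says the restart can create a non-bottom MEC containing $s_I$ and $c$. Inside such an MEC, the all-ones vector can be a spurious fixed point: convergence still holds, but possibly to the wrong value. Collapsing that MEC into a single state, as stipulated before the proposition, removes the issue, since every remaining MEC is a singleton or bottom. For the statement as posed (convergence only), monotonicity already suffices, provided the initial vector is a post-/pre-fixed point. I would check this explicitly for the chosen initialization by verifying each case of (\ref{opt12_update}): states in $E$, states in $S^{\mathrm{opt}}$, and the remaining states, where the value is a convex combination of entries of $x_0$ with weights summing to at most one.
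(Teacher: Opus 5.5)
Your proposal is correct and follows essentially the same route as the paper: restrict to the event $\hat{P}_\delta \leq P$, which has probability at least $1-\delta_\mathcal{M}$ by Lemma~\ref{lemma:transition_lower_bound}; use Lemma~\ref{lemma:monotone_continuos} to obtain monotone bounded iterates; and conclude that the limits are fixed points. You are in fact more careful than the paper, which only asserts eventual monotonicity and cites closedness of $[0,1]^{|S|}$, whereas you verify that the initial vectors are pre-/post-fixed points and use continuity of the operators to show the limit is a fixed point.
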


\begin{assumption}
    \label{assume:transition_sampling}
    For any MDP $\mathcal{M}$, any transition $(s,a,s') \in S \times A \times S$, if $s$ is reachable from $s_I$, $a \in \mathcal{A}(s)$, and $P(s,a,s') < 1$, then, in Algorithm \ref{alg:PACIdentification}, $|\mathcal{D}_{s,a,s'}| \to \infty$ with probability $1$ as the number of times transitions are sampled goes to $\infty$, where $\mathcal{D}_{s,a,s'}$ denotes the number of samples of $(s,a,s')$.
\end{assumption}

\begin{proposition}
\label{prop:convergence_property}
    For any MDP $\mathcal{M}$ and any $\delta > 0$, under Assumption \ref{assume:transition_sampling}, as the number of multi-sample $\mathcal{D}$ goes to infinity, with probability $1$, Algorithm \ref{alg:PACIdentification} converges and $C^*$ that satisfies (\ref{PAC-PR-recall_optimal}) when $\tau=0$.
    % \begin{enumerate}
        % \item Algorithm \ref{alg:PACIdentification} converges with probability $1$.
        % \item $C^*$ obtained from Algorithm \ref{alg:PACIdentification} satisfies (\ref{PAC-PR-recall_optimal}) with probability $1$.
    % \end{enumerate}
\end{proposition}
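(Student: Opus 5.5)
The plan is to prove Proposition \ref{prop:convergence_property} in three stages: convergence of the two-sided bounds, termination of the classification loop, and correctness of the output.

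\textbf{Bounds converge.} Fix the sampling scheme and work on the probability-one event of Assumption \ref{assume:transition_sampling}. On this event, every probabilistic transition $(s,a,s')$ with $s$ reachable is sampled infinitely often, so $N(s,a)\to\infty$. The Hoeffding width $\sqrt{-\ln\delta/(2N(s,a))}$ then tends to $0$. By the strong law of large numbers, $N(s,a,s')/N(s,a)\to P(s,a,s')$ almost surely, so $\hat{P}_\delta\to P$ entrywise on reachable pairs. Unreachable states never affect values at $s_I$ or at reachable candidates $c$. Since $\hat P^c_\delta$ coincides with $\hat P_\delta$ off $c$, the analogous convergence holds for the modified MDPs via (\ref{Pc_error_bound}).

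Next I would show that the fixed points of $f^{\mathrm{opt},\min}_{\mathcal{M},\delta}$ and $f^{\mathrm{opt},\max}_{\mathcal{M},\delta}$ converge to $\mathrm{Pr}^{\mathrm{opt}}_{\mathcal{M},\cdot}(\eventually E)$. The tool is Lemma \ref{lemma:T_step_operators_bound} together with the mixing-time argument already used in the proof of Theorem \ref{thm:sample_efficiency}. For any $\epsilon>0$, once $\|P-\hat P_\delta\|_\infty\le \epsilon/(Td_{\max})$ with $T$ the $\epsilon$-mixing time, the fixed-point error is at most $2\epsilon$. This step needs the fixed points to be unique. That is why the MEC assumption and the compression of Proposition \ref{prop:MEC_characterization_modifiedMDP} are invoked: after collapsing the only possible non-bottom MEC (the one containing $s_I$ and $c$), the operators restricted to states outside $E\cup S^{\mathrm{opt}}$ are contracting in the limit. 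Proposition \ref{prop:fixed_point} supplies the existence of the fixed points. Hence both interval endpoints converge to the true values almost surely, and the interval widths for $\mathrm{Pr}^{\min}_{\mathcal{M},c}(\eventually E)$ and $\mathrm{Pr}^{\max}_{\mathcal{M}^{[c]}}(\eventually E)$ tend to $0$.

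\textbf{The loop terminates.} For each candidate $c$ with $\Delta_c\neq 0$, the convergence above implies that after finitely many iterations the intervals separate, so Line 8 or Line 9 fires. This is exactly the qualitative content of Theorem \ref{thm:sample_efficiency}.

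For $\Delta_c=0$ with $\tau=0$, this is the main obstacle, because the interval $[\underline{x}-\overline{x},\overline{x}-\underline{x}]$ generally never collapses to exactly $\{0\}$ in finite time. I would first argue that the Hoeffding bound makes the interval width $0$ exactly when $c$'s relevant transitions are deterministic. Otherwise, I would interpret the convergence for this case in the limit sense: the classification stabilizes to the rule 3-(a)/(b) of Proposition \ref{prop:soundness_completeness_restart_modification}, which is decidable from graph structure once the maximizing actions $\mathcal{A}^{\max}$ are identified. Identifying $\mathcal{A}^{\max}$ is possible eventually, since the value gaps of non-optimal actions are bounded away from $0$ and the bounds converge.

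Once every state stays classified, the re-check in Line 14 eventually leaves $(C_\top,C_\bot,C_?)$ unchanged, and the algorithm returns.

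\textbf{Correctness of the output.} Since $\delta$ is fixed, I cannot use $\hat P_\delta\le P$ with probability one directly. Instead, I would use the final classification. By the above, in the limit each state's label agrees with the true label from Proposition \ref{prop:soundness_completeness_restart_modification}: for $\Delta_c\neq0$ because the true values lie strictly on one side, and for $\Delta_c=0$ by the graph rule. With $\tau=0$ we have $C_?=\emptyset$, so $C_\top$ is exactly the set of causal states. The last part of Lemma \ref{lemma:Algorithm1_is_anytime_under_probability_bound} then applies verbatim: its argument only uses that $C_\top$ is the set of causal states. Taking $C^*$ as the front of $C_\top$ therefore covers every SPR cause, which gives (\ref{PAC-PR-recall_optimal}).
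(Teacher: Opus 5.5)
Your three-stage skeleton matches the paper's: almost-sure convergence $\hat{P}_\delta \to P$ from the law of large numbers and Assumption~\ref{assume:transition_sampling}, convergence of the two-sided bounds to $\mathrm{Pr}^{\mathrm{min}}_{\mathcal{M},c}(\eventually E)$ and $\mathrm{Pr}^{\mathrm{max}}_{\mathcal{M}^{[c]}}(\eventually E)$, exact labels via Proposition~\ref{prop:soundness_completeness_restart_modification}, and the covering argument at the end of Lemma~\ref{lemma:Algorithm1_is_anytime_under_probability_bound}. Two points differ, both in your favour.

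\begin{itemize}
\item The paper goes from convergence of the operators to convergence of the bounds by citing the interval-iteration results of Haddad and Monmege. You instead argue by perturbation plus uniqueness of the fixed point after MEC compression, which makes explicit a continuity step the paper leaves implicit.
\item You correctly note that for fixed $\delta$ the event $\hat{P}_\delta \le P$ is not almost sure. The paper's last line invokes Lemma~\ref{lemma:Algorithm1_is_anytime_under_probability_bound}, whose hypothesis is exactly that event, so relying only on exactness of $C_\top$ repairs this.
\end{itemize}

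For the same reason, drop the mixing-time argument borrowed from Theorem~\ref{thm:sample_efficiency}. It passes through Lemma~\ref{lemma:operators_monotonically_error_decreasing}, which needs the iterates to approach the true value from the correct side, i.e.\ valid bounds. The contraction alone suffices. Suppose the $k$-th power of the exact Bellman operator is a sup-norm contraction with modulus $\gamma<1$ on the states outside $E$ and the relevant zero set $S^{\mathrm{min}}_{\mathcal{M}}$ or $S^{\mathrm{max}}_{\mathcal{M}^{[c]}}$. This holds because there are no end components there: automatically for the minimum on $\mathcal{M}$, and by the compression for the maximum on $\mathcal{M}^{[c]}$. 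Then Lemma~\ref{lemma:T_step_operators_bound} bounds the distance between any fixed point of the perturbed operator and the true value by a constant times $k\, d_{\mathrm{max}} \varepsilon/(1-\gamma)$, with no validity assumption.

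The genuine gap, which the paper's proof shares, is the tie case $\Delta_c = 0$. Your proposed resolution does not describe what Algorithm~\ref{alg:PACIdentification} does.

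\begin{itemize}
\item With $\tau = 0$, the rule 3-(a)/(b) is applied only in Line~12. That requires the interval of Line~10 to be exactly $\{0\}$, which with positive Hoeffding widths happens only in degenerate cases.
\item Because $\delta$ is fixed and $N(s,a)\to\infty$, the law of the iterated logarithm gives $N(s,a,s')/N(s,a) - P(s,a,s') > \sqrt{-\ln\delta/(2N(s,a))}$ infinitely often almost surely whenever $P(s,a,s') \in (0,1)$. So the bounds are invalid infinitely often.
\item For a tie state whose two values depend on different uncertain transitions, this can make Line~8 or Line~9 fire, with either sign.
\end{itemize}

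Hence the labels of tie states do not stabilize to the 3-(a)/(b) rule. A causal tie state of type 3-(a) can land in $C_\bot$; if this happens once the other states are classified, the algorithm returns with it there and (\ref{PAC-PR-recall_optimal}) can fail. The same invalidity undercuts your claim that $\mathcal{A}^{\mathrm{max}}$ is eventually identified: discarding non-optimal actions works, but keeping every tied optimal action needs valid bounds. Note also that the algorithm returns at the first pass whose re-check agrees, which can precede correct labels even without ties.

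The paper sidesteps all of this by substituting the exact limiting values into Proposition~\ref{prop:soundness_completeness_restart_modification}. So your argument, like the paper's, proves one of two things: that idealized statement, or, if $\Delta_c \neq 0$ for every candidate, almost-sure eventual correctness of the labels computed from $\mathcal{D}$. The tie case with $\tau = 0$ and fixed $\delta$ is not obtained from the algorithm as written.
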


\begin{proof}[Proof of Proposition \ref{prop:fixed_point}]
Let $\{ \underline{x}^{\mathrm{min}, (n)}_\mathcal{M} \}^\infty_{n=0}$ and $\{\overline{x}^{\mathrm{min}, (n)}_\mathcal{M} \}^\infty_{n=0}$ be the sequences derived by repeatedly applying $f^{\mathrm{min}, \mathrm{min}}_{\mathcal{M}, \delta}$ and $f^{\mathrm{min}, \mathrm{max}}_{\mathcal{M}, \delta}$ defined by (\ref{opt12_update}), respectively, to the functions initialized in Lines 6 and 7, that is, $\underline{x}^{\mathrm{min}, (i+1)}_\mathcal{M} = f^{\mathrm{min}, \mathrm{min}}_{\mathcal{M}, \delta}[\underline{x}^{\mathrm{min}, (i)}_\mathcal{M}]$ and $\overline{x}^{\mathrm{min}, (i+1)}_\mathcal{M} = f^{\mathrm{min}, \mathrm{max}}_{\mathcal{M}, \delta}[\overline{x}^{\mathrm{min}, (i)}_\mathcal{M}]$ for any $i = 0, \ldots$.
By Lemma \ref{lemma:monotone_continuos}, under $\hat{P}_\delta \leq P$, there exists $j > 0$ such that the sequences $\{ \underline{x}^{\mathrm{min}, (n)}_\mathcal{M} \}^\infty_{n=j}$ and $\{ \overline{x}^{\mathrm{min}, (n)}_\mathcal{M} \}^\infty_{n=j}$ are monotonically increasing and decreasing, that is, $\underline{x}^{\mathrm{min}, (i)}_\mathcal{M} \leq \overline{x}^{\mathrm{min}, (i+1)}_\mathcal{M}$ and $\overline{x}^{\mathrm{min}, (i)}_\mathcal{M} \geq \overline{x}^{\mathrm{min}, (i+1)}_\mathcal{M}$ for any $i \geq j$. Thus, $\underline{x}^{\mathrm{min}, (\infty)}_\mathcal{M}$ and $\overline{x}^{\mathrm{min}, (\infty)}_\mathcal{M}$ are fixed points $\underline{x}^{\mathrm{min}, *}_\mathcal{M}$ and $\overline{x}^{\mathrm{min}, *}_\mathcal{M}$ of $f^{\mathrm{min}, \mathrm{min}}_{\mathcal{M}, \delta}$ and $f^{\mathrm{min}, \mathrm{max}}_{\mathcal{M}, \delta}$, respectively, since $S$ is finite and $[0,1]^{|S|}$ are closed. Thus, by Lemma \ref{lemma:transition_lower_bound}, for each sequence $\{x^{\mathrm{min}, (n)}_\mathcal{M}\}_n = \{\underline{x}^{\mathrm{min}, (n)}_\mathcal{M}\}_n, \{\overline{x}^{\mathrm{min}, (n)}_\mathcal{M}\}_n$, we have
\begin{align}
    \mathbb{P}_\mathcal{D}(\lim_{n \to \infty} x^{\mathrm{min}, (\infty)}_\mathcal{M} = x^{\mathrm{min}, *}_\mathcal{M}) & \geq \mathbb{P}_\mathcal{D}(\lim_{n \to \infty} x^{\mathrm{min}, (\infty)}_\mathcal{M} = x^{\mathrm{min}, *}_\mathcal{M} \mid \hat{P}_\delta \leq P) \mathbb{P}_\mathcal{D}(\hat{P}_\delta \leq P) \nonumber \\
    & \geq 1 - \delta_\mathcal{M},
\end{align}
where $\mathbb{P}_\mathcal{D}$ is the probability measure for $\mathcal{D}$.

In the same way, we can show that, for each sequence $\{x^{\mathrm{max}, (n)}_{\mathcal{M}^{[c]}}\}_n  = \{\underline{x}^{\mathrm{max}, (n)}_{\mathcal{M}^{[c]}}\}_n, \{\overline{x}^{\mathrm{max}, (n)}_{\mathcal{M}^{[c]}}\}_n $, we have
\begin{align}
    \mathbb{P}_\mathcal{D}(\lim_{n \to \infty} x^{\mathrm{max}, (n)}_{\mathcal{M}^{[c]}} = x^{\mathrm{max}, *}_{\mathcal{M}^{[c]}} & \geq \mathbb{P}_\mathcal{D}(\lim_{n \to \infty} x^{\mathrm{max}, (n)}_{\mathcal{M}^{[c]}} = x^{\mathrm{max}, *}_{\mathcal{M}^{[c]}} \mid \hat{P}_\delta \leq P) \mathbb{P}_\mathcal{D}(\hat{P}_\delta \leq P) \\
    & \geq 1 -\delta_\mathcal{M}.
\end{align}
    % - By Lemma \ref{lemma:monotone_continuos}, the sequences obtained from applying the operators to the initial functions converge.\\
    % --- Point memo: Since $[0,1]^{S}$ is closed and the operators are monotonic, the sequences converge limit functions $x^{l,(\infty)}, y^{l,(\infty)}, x^{u,(\infty)}, y^{u,(\infty)}$. By the continuity of the operators, the limit functions are fixed points, because $f(x^{(\infty)}) = \lim_{n \to \infty} f(x^{(n)}) = \lim_{n \to \infty} x^{(n+1)} = x^{(\infty)}$.\\
    % - By Tarski's fixed point theorem, the limit function $x^{(\infty)}$ is a fixed point in $[0,1]^S$: \\
    % --- Monotonicity and continuity of update operators.\\
    % --- MDP is finite and hence $[0,1]^{S}$ is a complete lattice.\\
    % --- If an operator over a complete lattice is monotone increasing and continuous, then the operator has a fixed point in $[0,1]^S$. \\
\end{proof}

\begin{proof}[Proof of Proposition \ref{prop:convergence_property}]
    By law of large numbers and Assumption \ref{assume:transition_sampling}, $\hat{P}_\delta$ converges pointwise to $P$ with probability $1$ as $|\mathcal{D}|$ goes to $\infty$.
    Hence, as $|\mathcal{D}|$ goes to $\infty$, $f^{\mathrm{min}, \mathrm{min}}_{\mathcal{M}, \delta}$ and $f^{\mathrm{min}, \mathrm{max}}_{\mathcal{M}, \delta}$ defined by (\ref{opt12_update}), respectively, converge to the following operator $f^\mathrm{min}_{\mathcal{M}} : [0,1]^S \to [0,1]^S$ with probability $1$: for any $x: S \to [0,1]$ and any $s \in S$,
    \begin{align}
        f^\mathrm{min}_{\mathcal{M}}[x](s) = \min_{a \in \mathcal{A}(s)} \sum_{s' \in \mathrm{Post}(s,a)} P(s,a,s')x(s').
    \end{align}
    Likewise, when $|\mathcal{D}|$ goes to $\infty$, for any $c \in S$,  $f^{\mathrm{max}, \mathrm{min}}_{\mathcal{M}^{[c]}, \delta}$ and $f^{\mathrm{max}, \mathrm{max}}_{\mathcal{M}^{[c]}, \delta}$ defined by (\ref{opt12_update}), respectively, are identical to the following operator $f^\mathrm{max}_{\mathcal{M}^{[c]}} : [0,1]^S \to [0,1]^S$ with probability $1$: for any $x: S \to [0,1]$ and any $s \in S$,
    \begin{align}
        f^\mathrm{max}_{\mathcal{M}^{[c]}}[x](s) = \max_{a \in \mathcal{A}(s)} \sum_{s' \in \mathrm{Post}(s,a)} P^c(s,a,s')x(s'),
    \end{align}
    where $P^c$ is the transition probability function of $\mathcal{M}^{[c]}$. 
    Since the sets of states of $\mathcal{M}$ are composed of non-ECs and BMECs, $\underline{x}^\mathrm{min}_\mathcal{M}(c)$ and $\overline{x}^\mathrm{min}_\mathcal{M}(c)$ converge to $\mathrm{Pr}^\mathrm{min}_{\mathcal{M}, c}(\eventually E)$ by Propositions 2 and 3 in \citep{haddad2018interval}. Likewise, $\underline{x}^\mathrm{max}_{\mathcal{M}^{[c]}}(s_I)$ and $\overline{x}^\mathrm{max}_{\mathcal{M}^{[c]}}(s_I)$ converge to $\mathrm{Pr}^\mathrm{max}_{\mathcal{M}^{[c]}}(\eventually E)$ by Propositions 4 and 5 in \citep{haddad2018interval}. Thus, when $\tau = 0$ and $|\mathcal{D}|$ goes to $\infty$, by Proposition \ref{prop:soundness_completeness_restart_modification}, we have,
    \begin{align}
        & C_\top = \{ c \in S \setminus C^\mathsf{pre}_\bot \mid \{c\} \models \mathbf{(C1)} \}, \\
        & C_\bot = \{ c \in S \setminus C^\mathsf{pre}_\bot \mid \{c\} \not\models \mathbf{(C1)} \} \cup C^\mathsf{pre}_\bot,
    \end{align}
    with probability $1$, where $C^\mathsf{pre}_\bot = \{ c \in S \mid \Pi_{\mathcal{M}, c} = \emptyset \} \cup E$.
    Therefore, with probability $1$, Algorithm \ref{alg:PACIdentification} converges and, by Lemma \ref{lemma:Algorithm1_is_anytime_under_probability_bound}, outputs $C^*$ that satisfies (\ref{PAC-PR-recall_optimal}).
\end{proof}

We show that the soundness of our restart-based MDP modification remains intact even when considering the existential-quantifier variant of SPR causes.
\begin{definition}[Existential SPR Cause]
    \label{def:eSPR_cause}
    For any MDP $\mathcal{M}$, the set of terminal states $E \subseteq S$, and a nonempty subset $C \subseteq S \setminus E$, we say that $C$ is an existential strict probability-raising (eSPR) cause for $E$ on $\mathcal{M}$ if and only if the following condition hold:
    \begin{align}
    \label{eSPR:probability_rasiing}
    \forall c \in C, \exists \pi \mbox{ s. t. } \mathrm{Pr}_{\mathcal{M}}^\pi(\neg C \mathrm{U} c) \in (0, 1) \mbox{ and }
    \mathrm{Pr}_{\mathcal{M}}^\pi(\eventually E \;|\; \neg C \mathrm{U} c) > \mathrm{Pr}_{\mathcal{M}}^\pi(\eventually E ).
    \end{align}
    % \item[(M)] $\forall c \in C, \exists \pi$ s. t.
\end{definition}
The difference of Def. \ref{def:eSPR_cause} from Def. \ref{def:SPR_cause} comes from only the existential quantifier for the probability-raising condition (\ref{eSPR:probability_rasiing}). Intuitively, the condition requires that, for each state $c \in C$, there exists a policy under which traversal of $c$ increases the likelihood of reaching $E$.
% SPR cause is defined as a nonredundant subset $C$ of states that act as waypoints increasing the probability of reaching terminal states $E$ under any policy.

In the following, we denote the minimal and maximal probability on $\mathcal{M}$ over a set of policies $\Pi$ by $\mathrm{Pr}^{\mathrm{min}_{\Pi}}_{\mathcal{M}, s}$ and $\mathrm{Pr}^{\mathrm{max}_{\Pi}}_{\mathcal{M}, s}$, respectively. 
% We denote the set of finite paths on $\mathcal{M}^{[c]}$ for each $s \in S \setminus E$ by $\mathrm{FinPath}^{[c]}$

\begin{proposition}[Soundness of Restart-based Modification for Def. \ref{def:eSPR_cause}]
   \label{prop:soundness_completeness_restart_modification_for_existential}
    For any MDP $\mathcal{M}$ and any $c \in S$, if $\Pi_{\mathcal{M}, c} = \emptyset$, then $c$ is not causal, otherwise, for any $\Pi \subseteq \{ \pi: S \times A \to [0,1] \mid \mathrm{Pr}^\pi_{\mathcal{M}}(\neg \eventually c) > 0 \}$, we have the following properties:
    \begin{enumerate}
        \item If $\mathrm{Pr}^\mathrm{max}_{\mathcal{M}, c}(\eventually E) > \mathrm{Pr}^{\mathrm{min}_{\Pi}}_{\mathcal{M}^{[c]}}(\eventually E)$, then $\{c\}$ is an eSPR cause.
        \item If $\mathrm{Pr}^\mathrm{max}_{\mathcal{M}, c}(\eventually E) < \mathrm{Pr}^{\mathrm{min}}_{\mathcal{M}^{[c]}}(\eventually E)$, then $\{c\}$ is not an eSPR cause.
        % \item If $\mathrm{Pr}^\mathrm{max}_{\mathcal{M}, c}(\eventually E) = \mathrm{Pr}^{\mathrm{min}_{\overline{A}_c}}_{\mathcal{M}^{[c]}}(\eventually E)$, then we have the following properties:
        % \begin{enumerate}
            % \item if $c$ is not reachable from $s_I$ in $\mathcal{M}_\mathrm{min}^{[c]}$, $c$ is causal,
            % \item otherwise, $c$ is not causal,
        % \end{enumerate}
    \end{enumerate}
    where $\mathcal{M}^{[c]}$ is defined by Def. \ref{def:restart_based_modification} and $\Pi_{\mathcal{M}, c}$ is defined by (\ref{policy_set_M_c}).
    % $\mathcal{M}_\mathrm{max}^{[c]}$ is the sub-MDP of $\mathcal{M}^{[c]}$ induced by $\mathcal{A}^\mathrm{min}(s) = \{ a \in A \mid \mathrm{Pr}^{\mathrm{min}_{\overline{A}_c}}_{\mathcal{M}^{[c]}, s}(\eventually E) = \sum_{s' \in S} P^c(s,a,s') \mathrm{Pr}^{\mathrm{min}_{\overline{A}_c}}_{\mathcal{M}^{[c]}, s'}(\eventually E) \}$,.
\end{proposition}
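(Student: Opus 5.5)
The plan is to mirror the proof of Proposition~\ref{prop:soundness_completeness_restart_modification}, with the roles of $\min$ and $\max$ swapped and the universal quantifier replaced by an existential one. The basic tool is the identity
\begin{align*}
\mathrm{Pr}^\pi_{\mathcal{M}}(\eventually E \mid \eventually c) - \mathrm{Pr}^\pi_\mathcal{M}(\eventually E) = (1 - \mathrm{Pr}^\pi_\mathcal{M}(\eventually c))\bigl(\mathrm{Pr}^\pi_{\mathcal{M}}(\eventually E \mid \eventually c) - \mathrm{Pr}^\pi_{\mathcal{M}}(\eventually E \mid \neg\eventually c)\bigr),
\end{align*}
valid for every $\pi \in \Pi_{\mathcal{M},c}$. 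It is combined with Lemma~\ref{lemma:conditioned_reachability_to_reachability_modified_MDP}, which for memoryless $\pi$ with $\mathrm{Pr}^\pi_{\mathcal{M}}(\neg \eventually c)>0$ gives $\mathrm{Pr}^\pi_{\mathcal{M}}(\eventually E \mid \neg\eventually c) = \mathrm{Pr}^{\pi}_{\mathcal{M}^{[c]}}(\eventually E)$ with the same policy. For the singleton $C=\{c\}$ we have $\neg C \mathrm{U} c \equiv \eventually c$. The case $\Pi_{\mathcal{M},c}=\emptyset$ is immediate, since then no policy makes the probability of $\eventually c$ lie in $(0,1)$, so (\ref{eSPR:probability_rasiing}) fails.

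For item~1, take a memoryless $\pi_1 \in \Pi$ with $\mathrm{Pr}^{\pi_1}_{\mathcal{M}^{[c]}}(\eventually E)$ close enough to $\mathrm{Pr}^{\min_\Pi}_{\mathcal{M}^{[c]}}(\eventually E)$ that it is still below $\mathrm{Pr}^{\max}_{\mathcal{M},c}(\eventually E)$. This is possible by strict inequality, even if the infimum over $\Pi$ is not attained. Next take a memoryless $\pi_{\max}$ attaining $\mathrm{Pr}^{\max}_{\mathcal{M},c}(\eventually E)$. Form the two-phase policy $\pi_2$ that follows $\pi_1$ before hitting $c$ and $\pi_{\max}$ afterwards. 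Paths that avoid $c$ are unaffected by the switch, so
\begin{align*}
\mathrm{Pr}^{\pi_2}_{\mathcal{M}}(\eventually E\mid\neg\eventually c) = \mathrm{Pr}^{\pi_1}_{\mathcal{M}}(\eventually E\mid\neg\eventually c) = \mathrm{Pr}^{\pi_1}_{\mathcal{M}^{[c]}}(\eventually E).
\end{align*}
By the strong Markov property, $\mathrm{Pr}^{\pi_2}_{\mathcal{M}}(\eventually E\mid\eventually c) = \mathrm{Pr}^{\max}_{\mathcal{M},c}(\eventually E)$, provided $\mathrm{Pr}^{\pi_1}_{\mathcal{M}}(\eventually c)>0$.

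The main obstacle is ensuring that $\mathrm{Pr}^{\pi_2}_{\mathcal{M}}(\eventually c) \in (0,1)$. Membership in $\Pi$ only gives $\mathrm{Pr}(\neg\eventually c)>0$, and $\pi_1$ might never reach $c$. To handle this, I will use Lemma~\ref{lemma:policy_convex_combination_any_measurable}, as in case~2 of the proof of Proposition~\ref{prop:soundness_completeness_restart_modification}. Mix $\pi_2$ with a policy that follows some $\pi' \in \Pi_{\mathcal{M},c}$ (which exists because $\Pi_{\mathcal{M},c}\neq\emptyset$) before $c$ and $\pi_{\max}$ after $c$. The mixture $\pi_\lambda$ reaches $c$ with probability in $(0,1)$ for every $\lambda\in(0,1)$, and its conditional probability given $\eventually c$ remains $\mathrm{Pr}^{\max}_{\mathcal{M},c}(\eventually E)$, because both components behave as $\pi_{\max}$ after $c$. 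Its conditional probability given $\neg\eventually c$ is a ratio that is continuous in $\lambda$ and tends to $\mathrm{Pr}^{\pi_1}_{\mathcal{M}^{[c]}}(\eventually E)$ as $\lambda\to1$. Choosing $\lambda$ close to $1$ keeps that quantity below $\mathrm{Pr}^{\max}_{\mathcal{M},c}(\eventually E)$, so the identity above gives a strict increase. This witnesses (\ref{eSPR:probability_rasiing}).

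For item~2, fix an arbitrary $\pi\in\Pi_{\mathcal{M},c}$. By Lemma~\ref{lemma:conditioned_reachability_to_reachability_modified_MDP} in its general, history-dependent form, there is some $\tilde\pi$ on $\mathcal{M}^{[c]}$ with
\begin{align*}
\mathrm{Pr}^\pi_{\mathcal{M}}(\eventually E\mid\neg\eventually c)=\mathrm{Pr}^{\tilde\pi}_{\mathcal{M}^{[c]}}(\eventually E)\ge \mathrm{Pr}^{\min}_{\mathcal{M}^{[c]}}(\eventually E).
\end{align*}
On the other side, $\mathrm{Pr}^\pi_{\mathcal{M}}(\eventually E\mid\eventually c)\le \mathrm{Pr}^{\max}_{\mathcal{M},c}(\eventually E)$, because after the first visit to $c$ the residual behaviour of $\pi$ is some policy started at $c$; this follows by averaging over the first-hitting cylinders, as in the proof of Lemma~\ref{lemma:conditioned_reachability_to_reachability_modified_MDP}. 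Substituting into the identity gives a strictly negative difference for every $\pi\in\Pi_{\mathcal{M},c}$. Hence no witness exists, and $\{c\}$ is not an eSPR cause.
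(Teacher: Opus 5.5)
Your proposal is correct and follows essentially the paper's argument. For item 1, that means a two-phase policy switching to a memoryless maximizer after $c$, mixed via Lemma~\ref{lemma:policy_convex_combination_any_measurable} with a policy built from $\Pi_{\mathcal{M},c}$, plus continuity as $\lambda\to 1$; for item 2, Lemma~\ref{lemma:conditioned_reachability_to_reachability_modified_MDP} together with the factorization identity. Your use of a near-optimal $\pi_1\in\Pi$ instead of an exact minimizer is a small improvement: the paper tacitly assumes the infimum over $\Pi$ is attained by a memoryless policy and then splits into two subcases, both of which your argument covers at once.
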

\begin{proof}
% Suppose $\Pi_{\mathcal{M}, c} = \emptyset$. Then, we have the two cases: (i) for any $\pi$, $\mathrm{Pr}^\pi_\mathcal{M}(\eventually c) = 0$, and (ii) for any $\pi$, $\mathrm{Pr}^\pi_\mathcal{M}(\eventually c) = 1$. In case (i), $\{c\}$ violates the condition \textbf{(C2)} in Def. \ref{def:SPR_cause}. In case (ii), $\{c\}$ violates the condition \textbf{(eS)} in Def. \ref{def:SPR_cause}. Next, we consider $\Pi_{\mathcal{M}, c} \neq \emptyset$.
Suppose $\Pi_{\mathcal{M}, c} = \emptyset$. Then, $\{c\}$ does not satisfy (\ref{eSPR:probability_rasiing}) in Def.~\ref{def:eSPR_cause}. Next, we consider $\Pi_{\mathcal{M}, c} \neq \emptyset$.
    \begin{enumerate}
        \item Suppose $\mathrm{Pr}^\mathrm{max}_{\mathcal{M}, c}(\eventually E) > \mathrm{Pr}^{\mathrm{min}_{\Pi}}_{\mathcal{M}^{[c]}}(\eventually E)$. Then, there exists a two-memory policy $\pi_\mathrm{2mem} : \mathrm{FinPath} \to A$ such that $\pi_\mathrm{2mem}$ is identical to a memoryless policy $\pi_\mathrm{max}$ with $\mathrm{Pr}^{\mathrm{max}}_{\mathcal{M}, c}(\eventually E) = \mathrm{Pr}^\mathrm{max}_{\mathcal{M}, c}(\eventually E)$ after hitting $c$, and identical to a memoryless policy $\pi_\mathrm{min}$ with $\mathrm{Pr}^{\pi_\mathrm{min}}_{\mathcal{M}^{[c]}}(\eventually E) = \mathrm{Pr}^{\mathrm{min}_{\Pi}}_{\mathcal{M}^{[c]}}(\eventually E)$ before hitting $c$. Moreover, there exists a policy $\pi$ such that $\pi$ is identical to a memoryless policy $\pi' \in \Pi_{\mathcal{M}, c}$ before hitting $c$, and identical $\pi_\mathrm{max}$ after hitting $c$.
        % and hence we have $\mathrm{Pr}^\mathrm{max}_{\mathcal{M}, c}(\eventually E) > \mathrm{Pr}^{\pi_\mathrm{2mem}}_{\mathcal{M}^{[c]}}(\eventually E) \geq \mathrm{Pr}^{\mathrm{min}_{\Pi}}_{\mathcal{M}^{[c]}}(\eventually E)$.
        By Lemma \ref{lemma:policy_convex_combination_any_measurable}, for any $\lambda \in (0,1)$, there exists a policy $\pi_\lambda$ such that, for any measurable set $\mathcal{T} \subseteq \mathrm{InfPath}$, we have
        \begin{align}
            & \mathrm{Pr}^{\pi_\lambda}_{\mathcal{M}}(\mathcal{T}) = \lambda \mathrm{Pr}^{\pi_\mathrm{2mem}}_{\mathcal{M}}(\mathcal{T}) + (1 - \lambda) \mathrm{Pr}^\pi_{\mathcal{M}}(\mathcal{T}),
        \end{align}
         % where $\mathrm{InfPath}$ is the set of infinite path from $s_I$.
        Hence,
        \begin{align}
            & \mathrm{Pr}^{\pi_\lambda}_{\mathcal{M}}(\eventually c) \geq (1-\lambda) \mathrm{Pr}^{\pi}_{\mathcal{M}}(\eventually c) > 0, \\
            & \mathrm{Pr}^{\pi_\lambda}_{\mathcal{M}}(\neg \eventually c) \geq (1-\lambda) \mathrm{Pr}^{\pi}_{\mathcal{M}}(\neg \eventually c) > 0.
        \end{align}
        % Moreover, we have
        % \begin{align}
            % \mathrm{Pr}^{\pi_\lambda}_{\mathcal{M}^{[c]}}(\eventually E) & \geq \lambda \mathrm{Pr}^{\mathrm{min}_{\Pi}}_{\mathcal{M}^{[c]}}(\eventually E) + (1 - \lambda) \mathrm{Pr}^\pi_{\mathcal{M}^{[c]}}(\eventually E), \nonumber \\
            % \intertext{by Lemmas \ref{lemma:conditioned_reachability_to_reachability_modified_MDP} and \ref{lemma:policy_convex_combination_any_measurable},}
            % & = \lambda \mathrm{Pr}^{\mathrm{min}_{\Pi}}_{\mathcal{M}}(\eventually E \mid \neg \eventually c) + (1 - \lambda) \mathrm{Pr}^\pi_{\mathcal{M}}(\eventually E \mid \neg \eventually c) \nonumber \\
            % & = \mathrm{Pr}^{\pi_\lambda}_{\mathcal{M}}(\eventually E \mid \neg \eventually c).
        % \end{align}
        We have 
        \begin{align}
            \mathrm{Pr}^{\pi_\lambda}_{\mathcal{M}}(\eventually E \mid \eventually c) - \mathrm{Pr}^{\pi_\lambda}_\mathcal{M}(\eventually E) & = (1 - \mathrm{Pr}^{\pi_\lambda}_\mathcal{M}(\eventually c)) (\mathrm{Pr}^{\pi_\lambda}_{\mathcal{M},c}(\eventually E) - \mathrm{Pr}^{\pi_\lambda}_{\mathcal{M}}(\eventually E \mid \neg \eventually c)) \nonumber \\
            & = (1 - \mathrm{Pr}^{\pi_\lambda}_\mathcal{M}(\eventually c)) (\mathrm{Pr}^{\mathrm{max}}_{\mathcal{M},c}(\eventually E) - \mathrm{Pr}^{\pi_\lambda}_{\mathcal{M}}(\eventually E \mid \neg \eventually c))
        \end{align}
        We consider the two cases.
        \begin{enumerate}
            \item $\mathrm{Pr}^{\pi}_{\mathcal{M}}(\eventually E \mid \neg \eventually c) = \mathrm{Pr}^{\pi_\mathrm{min}}_{\mathcal{M}}(\eventually E \mid \neg \eventually c)$: note that $\mathrm{Pr}^\pi_{\mathcal{M}}$ we have 
            \begin{align}
                \mathrm{Pr}^{\pi_\lambda}_{\mathcal{M}}(\eventually E \mid \neg \eventually c) & = \mathrm{Pr}^{\pi_\mathrm{min}}_{\mathcal{M}}(\eventually E \mid \neg \eventually c),
                \intertext{by Lemma \ref{lemma:conditioned_reachability_to_reachability_modified_MDP},}
                & = \mathrm{Pr}^{\mathrm{min}_\Pi}_{\mathcal{M}^{[c]}}(\eventually E).
            \end{align}
            Thus, by the assumption, $\mathrm{Pr}^{\pi_\lambda}_{\mathcal{M}}(\eventually E \mid \eventually c) > \mathrm{Pr}^{\pi_\lambda}_\mathcal{M}(\eventually E)$.
            \item $\mathrm{Pr}^{\pi}_{\mathcal{M}}(\eventually E \mid \neg \eventually c) > \mathrm{Pr}^{\pi_\mathrm{min}}_{\mathcal{M}}(\eventually E \mid \neg \eventually c)$: then, 
            \begin{align}
                \mathrm{Pr}^{\pi_\lambda}_{\mathcal{M}}(\eventually E \mid \neg \eventually c) & = \frac{\lambda \mathrm{Pr}^{\pi_\mathrm{2mem}}_{\mathcal{M}}(\eventually E \land \neg \eventually c) + (1 - \lambda) \mathrm{Pr}^\pi_{\mathcal{M}}(\eventually E \land \neg \eventually c)}{\lambda \mathrm{Pr}^{\pi_\mathrm{2mem}}_{\mathcal{M}}(\neg \eventually c) + (1 - \lambda) \mathrm{Pr}^\pi_{\mathcal{M}}(\neg \eventually c)} \nonumber \\
                & = \frac{\lambda \mathrm{Pr}^{\pi_\mathrm{min}}_{\mathcal{M}}(\eventually E \land \neg \eventually c) + (1 - \lambda) \mathrm{Pr}^\pi_{\mathcal{M}}(\eventually E \land \neg \eventually c)}{\lambda \mathrm{Pr}^{\pi_\mathrm{min}}_{\mathcal{M}}(\neg \eventually c) + (1 - \lambda) \mathrm{Pr}^\pi_{\mathcal{M}}(\neg \eventually c)}
            \end{align}
            is continuous with respect to $\lambda$. Moreover, as $\lambda \to 1$, we have $\mathrm{Pr}^{\pi_\lambda}_{\mathcal{M}}(\eventually E \mid \neg \eventually c) = \mathrm{Pr}^{\pi_\mathrm{min}}_{\mathcal{M}}(\eventually E \mid \neg \eventually c)$. So, by Lemma \ref{lemma:conditioned_reachability_to_reachability_modified_MDP}, $\mathrm{Pr}^{\pi_\lambda}_{\mathcal{M}}(\eventually E \mid \neg \eventually c) = \mathrm{Pr}^{\mathrm{min}_\Pi}_{\mathcal{M}^{[c]}}(\eventually E)$ as $\lambda \to 1$. Hence, combined with the assumption, there exists $\lambda < 1$ such that $\mathrm{Pr}^{\mathrm{max}}_{\mathcal{M},c}(\eventually E) > \mathrm{Pr}^{\pi_\lambda}_{\mathcal{M}}(\eventually E \mid \neg \eventually c)$. Thus, $\mathrm{Pr}^{\pi_\lambda}_{\mathcal{M}}(\eventually E \mid \eventually c) > \mathrm{Pr}^{\pi_\lambda}_\mathcal{M}(\eventually E)$.

            % by the assumption, we can choose $\lambda$ such that $\lambda > \frac{\mathrm{Pr}^{\pi}_{\mathcal{M}^{[c]}}(\eventually E) - \mathrm{Pr}^{\mathrm{max}}_{\mathcal{M},c}(\eventually E)}{\mathrm{Pr}^{\pi}_{\mathcal{M}^{[c]}}(\eventually E) - \mathrm{Pr}^{\mathrm{min}_\Pi}_{\mathcal{M}^{[c]}}(\eventually E)}$ and $\lambda \in (0, 1)$. Thus, $\mathrm{Pr}^{\pi_\lambda}_{\mathcal{M}}(\eventually E \mid \eventually c) > \mathrm{Pr}^{\pi_\lambda}_\mathcal{M}(\eventually E)$ .
        \end{enumerate}
        % By construction of $\pi_\lambda$, we have $\mathrm{Pr}^{\pi_\lambda}_{\mathcal{M}^{[c]}}(\eventually E) \leq \mathrm{Pr}^\mathrm{max}_{\mathcal{M}, c}(\eventually E)$, hence,
        % \begin{align}
            % \mathrm{Pr}^{\pi_\lambda}_{\mathcal{M}}(\eventually E \mid \eventually c) - \mathrm{Pr}^{\pi_\lambda}_\mathcal{M}(\eventually E) > 0.
        % \end{align}
        \item Suppose that $\mathrm{Pr}^\mathrm{max}_{\mathcal{M}, c}(\eventually E) < \mathrm{Pr}^{\mathrm{min}}_{\mathcal{M}^{[c]}}(\eventually E)$. Then, by Lemma $\ref{lemma:conditioned_reachability_to_reachability_modified_MDP}$, for any $\pi \in \Pi_{\mathcal{M}, c}$, there exists a policy $\tilde{\pi}$ such that $\mathrm{Pr}^{\tilde{\pi}}_{\mathcal{M}^{[c]}}(\eventually E) = \mathrm{Pr}^\pi_{\mathcal{M}}(\eventually E \mid \neg \eventually c)$
        \begin{align}
            \mathrm{Pr}^\pi_{\mathcal{M}}(\eventually E \mid \eventually c) - \mathrm{Pr}^\pi_\mathcal{M}(\eventually E)
            & = (1 - \mathrm{Pr}^\pi_\mathcal{M}(\eventually c)) (\mathrm{Pr}^\pi_{\mathcal{M}, c}(\eventually E) - \mathrm{Pr}^{\tilde{\pi}}_{\mathcal{M}^{[c]}}(\eventually E)), \nonumber \\
            \intertext{hence,}
            & \leq (1 - \mathrm{Pr}^\pi_\mathcal{M}(\eventually c)) (\mathrm{Pr}^\mathrm{max}_{\mathcal{M}, c}(\eventually E) - \mathrm{Pr}^\mathrm{min}_{\mathcal{M}^{[c]}}(\eventually E)) \nonumber \\
            & < 0.
        \end{align}
    \end{enumerate}
\end{proof}
\paragraph{Example of $\Pi$ in Proposition \ref{prop:soundness_completeness_restart_modification_for_existential} based on MEC decomposition}
We can provide an example of $\Pi$ for case 2 in Proposition \ref{prop:soundness_completeness_restart_modification_for_existential} as follows. We assume that every MEC in $\mathcal{M}^{[c]}$ is either a singleton MEC or a BMEC. The goal is to exclude policies $\pi$ such that $\mathrm{Pr}^\pi_{\mathcal{M}}(\eventually c) = 1$ by forbidding actions that keep the process inside the MEC containing $c$ in $\mathcal{M}^{[c]}$. By Proposition \ref{prop:MEC_characterization_modifiedMDP}, any non-singleton, non-bottom MEC in $\mathcal{M}^{[c]}$ must contains both $s_I$ and $c$. Hence, preventing the policy from choosing the ''staying" actions in the MEC rules out the almost-sure reachability to $c$ from $s_I$.
For each $c \in S$, we denote by $(S_c, A_c)$ the MEC in $\mathcal{M}^{[c]}$ such that $c \in S_c$. Then, we choose $\Pi$ as the set of policies that never choose actions in $A_c$ when the current state is in $S_c$:
\begin{align}
    \Pi = \{ \pi : \mathrm{FinPath} \times A \to [0,1] \mid \forall \rho \in \mathrm{FinPath}, \mathrm{last}(\rho) \in S_c \implies \sum_{a \in A_c \cap \mathcal{A}(s)} \pi(\rho, a) = 0\}.
\end{align}
% Intuitively, $\Pi$ denotes the set of policies that prohibit the actions in $A_c$ when staying in $S_c$. 
% We can compute or learn $\mathrm{Pr}^{\mathrm{min}_\Pi}_{\mathcal{M}^{[c]}}(\eventually E)$ by disabling all actions in $A_c$ at any state in $S_c$. 
% Note that, if $\Pi = \emptyset$, then $\Pi_{\mathcal{M}, c} = \emptyset$. Hence, $c$ is not causal in this case by Proposition \ref{prop:soundness_completeness_restart_modification_for_existential}.

\section{Initialization and Early Stopping for Algorithm \ref{alg:PACIdentification}}
\label{appendix:initializtion_early_stopping}
Our restart-based modification defined by Def. \ref{def:restart_based_modification} sometimes induces a slow computation for the upper bounds of $\mathrm{Pr}^\mathrm{max}_{\mathcal{M}^[[c]}(\eventually E)$ if the initialized function $x$ for $f^{\mathrm{max}, \mathrm{max}}_{\mathcal{M}^{[c]}, \delta}$ defined by (\ref{opt12_update}) is loose, e.g., $x(s) = 0$ for any $s \in S^\mathrm{max}_{\mathcal{M}}$ and $x(s) = 1$ for any other states $s \not\in S^\mathrm{max}_\mathcal{M}$. This is because the optimized policy under a loosely initialized function tends to make a behavior: repeatedly reaching $c$ and resetting to $s_I$ with a high probability. To mitigate this issue, we introduce a two-step initialization using a loosely initialized function.
Given an MDP $\mathcal{M}$ and $\delta \in (0, 1)$, we consider the following two operators $g_{\mathcal{M}^{[c]}, \delta}, F_{\mathcal{M}^{[c]}, \delta}: [0, 1]^S \to [0, 1]^S$. For any $x \in [0,1]^S$ and any $c \in S \setminus E$, $g_{\mathcal{M}^{[c]}, \delta}[x](s) = 0$ for any $s \in S^\mathrm{max}_{\mathcal{M}} \cup \{c \}$, $g_{\mathcal{M}^{[c]}, \delta}[x](s) = 1$ for any $s \in E$, and $g_{\mathcal{M}^{[c]}, \delta}[x](s) = f^{\mathrm{max}, \mathrm{max}}_{\mathcal{M}^{[c]}, \delta}[x](s)$ defined by (\ref{opt12_update}) for any $s \in S \setminus (E \cup S^\mathrm{max}_\mathcal{M} \cup \{ c \})$. For any $x \in [0, 1]^S$ and any $s \in S$, $F_{\mathcal{M}^{[c]}, \delta}[x](s) = \max \{x(s), f^{\mathrm{max}, \mathrm{max}}_{\mathcal{M}^{c]}, \delta}[x](s) \}$.
By definition, for any $x \in [0, 1]^S$, we have
\begin{align}
    g_{\mathcal{M}^{[c]}, \delta}[x](s) &\leq f^{\mathrm{max}, \mathrm{max}}_{\mathcal{M}^{[c]}, \delta}[x](s), \\
    F_{\mathcal{M}^{[c]}, \delta}[x](s) &\geq x(s).
\end{align}
Intuitively, the fixed point of $g$ for an initial function $x$ acts as an upper bound of $\mathrm{Pr}^\mathrm{max}_{\mathcal{M}^{[c]}, \cdot}(\eventually E \land \neg \eventually c)$. This alleviates the aforementioned issue. 
However, the fixed point of $g$ is smaller than the fixed point of $ f^{\mathrm{max}, \mathrm{max}}_{\mathcal{M}, \delta}$. So, we increase the fixed point function $x'$ of $g_{\mathcal{M}^{[c]}, \delta}$ so that we have $x' \geq  f^{\mathrm{max}, \mathrm{max}}_{\mathcal{M}, \delta}[x']$, i.e., a pre-fixpoint of $f^{\mathrm{max}, \mathrm{max}}_{\mathcal{M}, \delta}$ by applying $F_{\mathcal{M}^{[c]}, \delta}$.
We summarize the initialization with an early stopping in Algorithm \ref{alg:Initialization_EarlyStopping}. Algorithm \ref{alg:Initialization_EarlyStopping} can be used to implement Line 5 in Algorithm \ref{alg:PACIdentification}. After this computation, we resume Line 6 in Algorithm \ref{alg:PACIdentification}.

\begin{algorithm}[H]
\caption{Initialization and Early Stopping for Algorithm \ref{alg:PACIdentification}}
\label{alg:Initialization_EarlyStopping}
\KwIn{Unknown MDP $\mathcal{M}$, $\delta_\mathcal{M} > 0$, $k > 0$, $\tau \geq 0$, $N > 0$.}
    \tcp{Initialize the lower and upper bounds}
    $\forall s,c \in S,\ \underline{x}^\mathrm{min}_\mathcal{M}(s) \gets 0,\ \overline{x}^\mathrm{min}_\mathcal{M}(s) \gets 1, \underline{x}^\mathrm{max}_{\mathcal{M}^{[c]}}(s) \gets 0,\ \overline{x}^\mathrm{max}_{\mathcal{M}^{[c]}}(s) \gets 1$.\\
    $\forall s \in E, \forall s' \in S^\mathrm{min}_{\mathcal{M}},\ \underline{x}^\mathrm{min}_\mathcal{M}(s) \gets 1,\ \overline{x}^\mathrm{min}_\mathcal{M}(s') \gets 0$.\\
    $\forall c \in S, \forall s \in E, \forall s' \in S^\mathrm{max}_{\mathcal{M}^{[c]}}, \ \underline{x}^\mathrm{max}_{\mathcal{M}^{[c]}}(s) \gets 1,\ \overline{x}^\mathrm{max}_{\mathcal{M}^{[c]}}(s') \gets 0$.\\
    \tcp{Bound updates and early stopping for non-causal states}
    \For{$i=0,\ldots,N$}{
        $\underline{x}^\mathrm{min}_\mathcal{M} \gets f^{\mathrm{min}, \mathrm{min}}_{\mathcal{M}, \delta}[\underline{x}^\mathrm{min}_\mathcal{M}]$\\
        $\overline{x}^\mathrm{min}_\mathcal{M} \gets f^{\mathrm{min}, \mathrm{max}}_{\mathcal{M}, \delta}[\overline{x}^\mathrm{min}_\mathcal{M}]$\\
        \For{$c \in S \setminus (E \cup C_\bot \cup C^\mathrm{pre}_\bot)$}{
            $\underline{x}^\mathrm{max}_{\mathcal{M}^{[c]}} \gets f^{\mathrm{max}, \mathrm{min}}_{\mathcal{M}^{[c]}, \delta}[\underline{x}^\mathrm{max}_{\mathcal{M}^{[c]}}]$\\
            $\overline{x}^\mathrm{max}_{\mathcal{M}^{[c]}} \gets g_{\mathcal{M}^{[c]}, \delta}[\overline{x}^\mathrm{max}_{\mathcal{M}^{[c]}}]$\\
            \If{$\overline{x}^\mathrm{min}_\mathcal{M}(c) < \underline{x}^\mathrm{max}_{\mathcal{M}^{[c]}}(s_I)$}{
                Add $c \to C_\bot$.
            }
        }
    }
    \tcp{Re-compute initial upper bound for maximal reachability in $\mathcal{M}^{[c]}$}
    \Repeat{$\forall s \in S, \overline{x}^\mathrm{max}_{\mathcal{M}^{[c]}}(s) \geq f^{\mathrm{max}, \mathrm{max}}_{\mathcal{M}, \delta}[\overline{x}^\mathrm{max}_{\mathcal{M}^{[c]}}](s) $}{
        $\overline{x}^\mathrm{max}_{\mathcal{M}^{[c]}} \gets F_{\mathcal{M}^{[c]}, \delta}[\overline{x}^\mathrm{max}_{\mathcal{M}^{[c]}}]$.
    }
    \tcp{Bound updates and early stopping for causal and non-causal states}
    \For{$i=0,\ldots,N$}{
        $\underline{x}^\mathrm{min}_\mathcal{M} \gets f^{\mathrm{min}, \mathrm{min}}_{\mathcal{M}, \delta}[\underline{x}^\mathrm{min}_\mathcal{M}]$\\
        $\overline{x}^\mathrm{min}_\mathcal{M} \gets f^{\mathrm{min}, \mathrm{max}}_{\mathcal{M}, \delta}[\overline{x}^\mathrm{min}_\mathcal{M}]$\\
        \For{$c \in S \setminus (E \cup C_\bot \cup C^\mathrm{pre}_\bot)$}{
            $\underline{x}^\mathrm{max}_{\mathcal{M}^{[c]}} \gets f^{\mathrm{max}, \mathrm{min}}_{\mathcal{M}^{[c]}, \delta}[\underline{x}^\mathrm{max}_{\mathcal{M}^{[c]}}]$\\
            $\overline{x}^\mathrm{max}_{\mathcal{M}^{[c]}} \gets g_{\mathcal{M}^{[c]}, \delta}[\overline{x}^\mathrm{max}_{\mathcal{M}^{[c]}}]$\\
            \If{$\overline{x}^\mathrm{min}_\mathcal{M}(c) < \underline{x}^\mathrm{max}_{\mathcal{M}^{[c]}}(s_I)$}{
                Add $c \to C_\bot$.
            }
            \ElseIf{$\underline{x}^\mathrm{min}_\mathcal{M}(c) > \overline{x}^\mathrm{max}_{\mathcal{M}^{[c]}}(s_I)$}{
                Add $c \to C_\top$.
            }
        }
    }
\end{algorithm}

\section{Details of Baseline 1}
\label{appendix:baseline1}
We provide an algorithmic summary of $\mathrm{Baseline1}$, which adapts the model-checking procedure of \citep{baier2022probability} to the unknown-MDP setting by combining the transformed MDP of Def.~\ref{def:existing_MDP_modification} with the interval (two-sided) value iteration of \citep{ashok2019pac}. The procedure shares the same interval value iteration as our method (Algorithm~\ref{alg:PACIdentification}) and differs only in the transformation used to check the probability-raising condition \textbf{(C1)}. $\mathcal{M}^{[c]}_\mathrm{mr}$ requires the computation of $\mathrm{Pr}^\mathrm{min}_{\mathcal{M}, c}(\eventually E)$ for the transition from $c$ to $E$. So, we estimate the lower bound $\hat{P}^c_{\mathrm{mr}, \delta}: S \times A \times S \to [0,1]$ of the transition probability function of $\mathcal{M}^{[c]}_\mathrm{mr}$ as
\begin{align}
\label{P_hat_transformed}
    \hat{P}^c_{\mathrm{mr}, \delta}(s,a,s') = 
    \begin{cases}
        \hat{P}_\delta(s,a,s')  & \mbox{if } s \neq c,\\
        \underline{x}^{\mathrm{min}, *}_\mathcal{M}(c) & \mbox{if } s= c, s' = s_\mathrm{eff}, \\
        1 - \underline{x}^{\mathrm{min}, *}_\mathcal{M}(c) & \mbox{if } s= c, s' = s_\mathrm{noeff}, \\
        0 & \mbox{otherwise, }
    \end{cases}
\end{align}

For each candidate state $c \in S \setminus E$, $\mathrm{Baseline1}$ proceeds as Algorithm \ref{alg:baseline1}.

\begin{algorithm}[h]
\caption{Baseline 1}
\label{alg:baseline1}
\KwIn{Unknown MDP $\mathcal{M}$, $\delta_\mathcal{M} > 0$, $k > 0$, $\tau \geq 0$.}
\KwOut{$C^* \subset S$}

$ \mathcal{D}, C_\top, C_\bot, C_? \gets \emptyset $, and $ \delta \gets \frac{\delta_\mathcal{M}}{\mathrm{Tr}(\mathcal{M})} $.\\
% $ C^\mathsf{pre}_\bot \gets \{ c \in S \mid \Pi_{\mathcal{M}, c} = \emptyset \} \cup E$.\\ 
% Compute MEC decomposition $S = \biguplus_{k=1}^K S_k \uplus \{t_l\}_{l=1}^L \uplus \biguplus_{m=1}^M B_m \uplus E$.
\Repeat{$S \setminus (C_\top \cup C_\bot \cup C_? \cup E) = \emptyset$}{
    % $ \mathcal{D}' \gets \mathrm{TransSample}(\mathcal{M}, k) $ and $ \mathcal{D} \gets \mathcal{D} \cup \mathcal{D}' $.\\
    Sample $k$ transitions in $\mathcal{M}$, add them to $\mathcal{D}$, and compute $\hat{P}_\delta$.\\
    % \tcp{Initialize bounds}
    Initialize lower and upper bounds of $\mathrm{Pr}^\mathrm{min}_{\mathcal{M}}(\eventually E)$: $[\underline{x}^\mathrm{min}_\mathcal{M},  \overline{x}^\mathrm{min}_\mathcal{M}]$.\\
    Compute the fixed points of $\underline{x}^\mathrm{min}_\mathcal{M}$ and $ \overline{x}^\mathrm{min}_\mathcal{M}$ for (\ref{opt12_update}).\\
    For all $c \in S \setminus E$, construct $\mathcal{M}^{[c]}_\mathrm{mr}$ by \eqref{P_hat_transformed} and initialize lower and upper bounds of $\mathrm{Pr}^\mathrm{max}_{\mathcal{M}^{[c]}_\mathrm{mr}}(\eventually E)$: $[\underline{x}^\mathrm{max}_{\mathcal{M}^{[c]}_\mathrm{mr}},  \overline{x}^\mathrm{max}_{\mathcal{M}^{[c]}_\mathrm{mr}}]$. \\
    Compute the fixed points of $[\underline{x}^\mathrm{max}_{\mathcal{M}^{[c]}_\mathrm{mr}}$ and $  \overline{x}^\mathrm{max}_{\mathcal{M}^{[c]}_\mathrm{mr}}]$ for (\ref{opt12_update}).\\
    % $\underline{x}^{\mathrm{min}, *}_\mathcal{M}, \overline{x}^{\mathrm{max}, *}_{\mathcal{M}^{[c]}}, \overline{x}^{\mathrm{min},*}_\mathcal{M}, \underline{x}^{\mathrm{max},*}_{\mathcal{M}^{[c]}} $ for (\ref{opt12_update}) with the initialized bounds.\\
    % $\forall s,c \in S,\ x^l(s) \gets 0,\ x^u(s) \gets 1, y^l_c(s) \gets 0,\ y^u_c(s) \gets 1$.\\
    % $\forall s \in E, \forall s' \in S^\mathrm{min}_{\mathcal{M}},\ x^l(s) \gets 1,\ x^u(s') \gets 0$.\\
    % $\forall c \in S, \forall s \in E, \forall s' \in S^\mathrm{max}_{\mathcal{M}^{[c]}}, \ y^l_c(s) \gets 1,\ y^u_c(s') \gets 0$.\\
    % $\forall s \in S \setminus ( E \cup S^\mathrm{min}_{\mathcal{M},0} ),\ x^l(s) \gets 0,\ x^u(s) \gets 1$.\\
    \For{$c \in S \setminus (C_\top \cup C_\bot \cup C_? \cup E)$}{
        Add $c$ to $C_\top$ if $\underline{x}^{\mathrm{min},*}_\mathcal{M}(c) > \overline{x}^{\mathrm{max},*}_{\mathcal{M}^{[c]}}(s_I) $. \\
        Add $c$ to $C_\bot$ if $\overline{x}^{\mathrm{min},*}_\mathcal{M}(c) < \underline{x}^{\mathrm{max},*}_{\mathcal{M}^{[c]}}(s_I) $. \\
        % \If{$\exists c_\top \in C_\top$ s.t. $\forall \rho, \rho \models \eventually c_\top \Leftrightarrow \eventually c$}{
            % Add $c$ to $C_\top$.
        % }
        \If{$[\underline{x}^{\mathrm{min},*}_\mathcal{M}(c) - \overline{x}^{\mathrm{max},*}_{\mathcal{M}^{[c]}}(s_I) , \overline{x}^{\mathrm{min},*}_\mathcal{M}(c) - \underline{x}^{\mathrm{max},*}_{\mathcal{M}^{[c]}}(s_I)] \subseteq [-\tau, \tau]$ and $c \not\in C_\top \cup C_\bot$}{
            Add $c$ to $C_\top$ if 3-(a) in Lemma 2 in \citep{baier2022probability} is satisfied, otherwise add $c$ to $C_\bot$.
        }
    }
}
\If{ $(C_\top, C_\bot, C_?)$ remains unchanged under $\mathcal{D}$}{
    \textbf{Return} $C^* = \{ c \in C_\top \mid \exists \rho \text{ s.t. } \rho \models \neg C_\top \mathrm{U} c \}$.   
}
Go back to Line 4 with $C_\top, C_\bot, C_? \gets \emptyset$.

\end{algorithm}

\section{Detailed Experimental Results}
\label{appendix:further_experimental_results}
Table \ref{table:correctness_path_planning_all_params} reports the accuracies of states identified as causal/non-causal during learning, that is, $C_\top$ and $C_\bot$, and the final outputs $C^*$ for the nondeterministic path-planning examples under $\delta_\mathcal{M} \in \{0.05, 0.1\}$ and $(p_1, p_2) \in \{(0.4, 0.5), (0.3, 0.6), (0.2, 0.7)\}$. The results are obtained for our method and the baselines. Our method and \textrm{Baseline1} produced correct results; however, \textrm{Baseline1} left $(3,5)$ and $(4,5)$ unclassified. The number of iterations required by our method increases as the gap between $p_1$ and $p_2$ decreases, because the gap between reachability probabilities conditioned on visiting a candidate state and bypassing it also decreases. \textrm{Baseline2} converged fastest in all experiments, but its accuracy dropped below $1-\delta_\mathcal{M}$ when $(p_1, p_2) \in \{(0.4, 0.5), (0.3, 0.6)\}$.

Figs. \ref{fig:causal_noncausal_comparison_54_delta_05}-\ref{fig:causal_noncausal_comparison_72_delta_1} show, for the same parameter settings, the numbers of causal and non-causal states correctly identified at each iteration by our method and \textrm{Baseline1}. In all cases, our method identified both causal and non-causal states more quickly. \textrm{Baseline1} left $(5,3)$ and $(5,4)$ unclassified in all experiments, and it also left $(6,7)$, $(7,7)$, and $(7,8)$ unclassified when $(p_1, p_2) = (0.4, 0.5)$.

Figs. \ref{fig:effect_of_tau_54_delta_05}-\ref{fig:effect_of_tau_72_delta_05} show the number of states labeled as undecided at termination and the number of iterations required by our method. As expected, the number of undecided states increases with $\tau$. Nevertheless, the required number of iterations, and thus the number of transition samples, is substantially reduced at the expense of more undecided states.

\begin{table*}[htbp]
  \caption{Means and standard deviations (stds.) of the accuracies of the identified causal states, non-causal states, and final SPR causes for the nondeterministic path-planning example, as well as of the number of iterations each method required to obtain the final outputs. In all nondeterministic planning experiments, \textrm{Baseline1} timed out.}
  \centering
  \scalebox{0.7}{
  \label{table:correctness_path_planning_all_params}
    \begin{tabular}{ c||c c c c|c c c c|c c c c} \hline
       & \multicolumn{4}{c|}{$(p_1, p_2)=(0.4, 0.5), \delta_\mathcal{M}=0.05$} & \multicolumn{4}{c|}{$(p_1, p_2) = (0.3, 0.6), \delta_\mathcal{M}=0.05$} & \multicolumn{4}{c}{$(p_1, p_2) = (0.2, 0.7), \delta_\mathcal{M}=0.05$} \\ \hline 
       & $C_\top$ & $C_\bot$ & $C^{*}$ & Iter & $ C_\top$ & $C_\bot$ & $C^{*}$ & Iter & $ C_\top$ & $C_\bot$ & $C^{*}$ & Iter \\ \hline \hline
       Ours & $1.0 \pm 0.0$ & $1.0 \pm 0.0$ & $1.0 \pm 0.0$ & $1449 \pm 188$ & $1.0 \pm 0.0$ & $1.0 \pm 0.0$ & $1.0 \pm 0.0$ & $284 \pm 33$ & $1.0 \pm 0.0$ & $1.0 \pm 0.0$ & $1.0 \pm 0.0$ & $168 \pm 5$ \\ 
       \hline
       $\mathrm{Baseline1}$ & $1.0 \pm 0.0$ & $1.0 \pm 0.0$ & -- & TO & $1.0 \pm 0.0$ & $1.0 \pm 0.0$ & -- & TO & $1.0 \pm 0.0$ & $1.0 \pm 0.0$ & -- & TO \\ 
       \hline
       $\mathrm{Baseline2}$ & -- & -- & $0.25 \pm 0.09$ & $55 \pm 69$ & -- & -- & $0.75 \pm 0.10$ & $23 \pm 14$ & -- & -- & $0.95 \pm 0.05$ & $18 \pm 2$ \\ 
       \hline \hline
       & \multicolumn{4}{c|}{$(p_1, p_2)=(0.4, 0.5), \delta_\mathcal{M}=0.1$} & \multicolumn{4}{c|}{$(p_1, p_2) = (0.3, 0.6), \delta_\mathcal{M}=0.1$} & \multicolumn{4}{c}{$(p_1, p_2) = (0.2, 0.7), \delta_\mathcal{M}=0.1$} \\ \hline
       & $C_\top$ & $C_\bot$ & $C^{*}$ & Iter & $ C_\top$ & $C_\bot$ & $C^{*}$ & Iter & $ C_\top$ & $C_\bot$ & $C^{*}$ & Iter \\ \hline \hline
       Ours & $1.0 \pm 0.0$ & $1.0 \pm 0.0$ & $1.0 \pm 0.0$ & $1338 \pm 148$ & $1.0 \pm 0.0$ & $1.0 \pm 0.0$ & $1.0 \pm 0.0$ & $267 \pm 334$ & $1.0 \pm 0.0$ & $1.0 \pm 0.0$ & $1.0 \pm 0.0$ & $156 \pm 5$ \\ 
       \hline
       $\mathrm{Baseline1}$ & $1.0 \pm 0.0$ & $1.0 \pm 0.0$ & -- & TO & $1.0 \pm 0.0$ & $1.0 \pm 0.0$ & -- & TO & $1.0 \pm 0.0$ & $1.0 \pm 0.0$ & -- & TO \\ 
       \hline
       $\mathrm{Baseline2}$ & -- & -- & $0.20 \pm 0.09$ & $44 \pm 61$ & -- & -- & $0.85 \pm 0.08$ & $31 \pm 18$ & -- & -- & $1.0 \pm 0.0$ & $18 \pm 6$ \\ 
       \hline
    \end{tabular}
    }
\end{table*}
\begin{figure}[htbp]
    \centering
    \begin{subfigure}{0.33\linewidth}
        \includegraphics[width=\linewidth]{For_Paper/Figures/MDP2d_2hour_TO_comparison_p_5_and_4_delta_05.png}
        \caption{$(p_1, p_2) = (0.4, 0.5), \delta_\mathcal{M}=0.05$}
        \label{fig:causal_noncausal_comparison_54_delta_05}
    \end{subfigure}
    \begin{subfigure}{0.33\linewidth}
        \includegraphics[width=\linewidth]{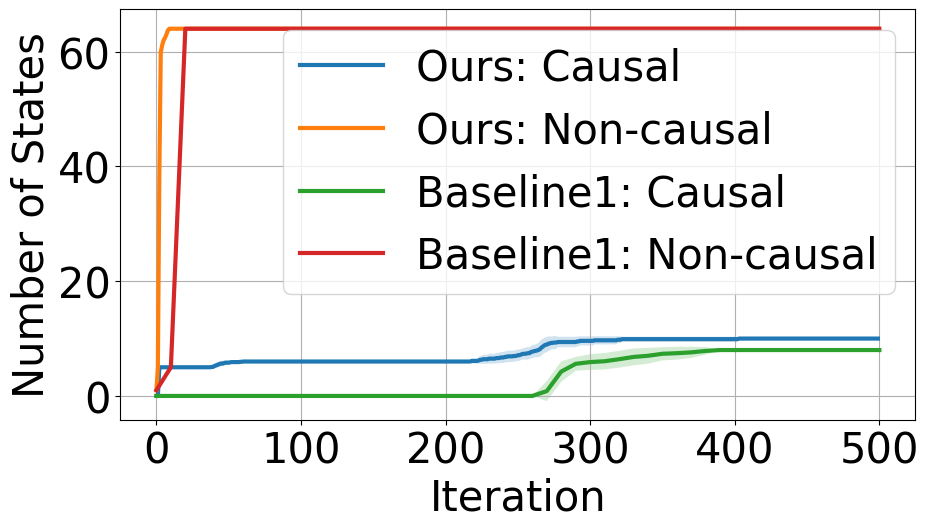}
        \caption{$(p_1, p_2) = (0.3, 0.6), \delta_\mathcal{M}=0.05$}
        \label{fig:causal_noncausal_comparison_63_delta_05}
    \end{subfigure}
    \begin{subfigure}{0.33\linewidth}
        \includegraphics[width=\linewidth]{For_Paper/Figures/MDP2d_2hour_TO_comparison_p_7_and_2_delta_05.png}
        \caption{$(p_1, p_2) = (0.2, 0.7), \delta_\mathcal{M}=0.05$}
        \label{fig:causal_noncausal_comparison_72}
    \end{subfigure}
    \begin{subfigure}{0.33\linewidth}
        \includegraphics[width=\linewidth]{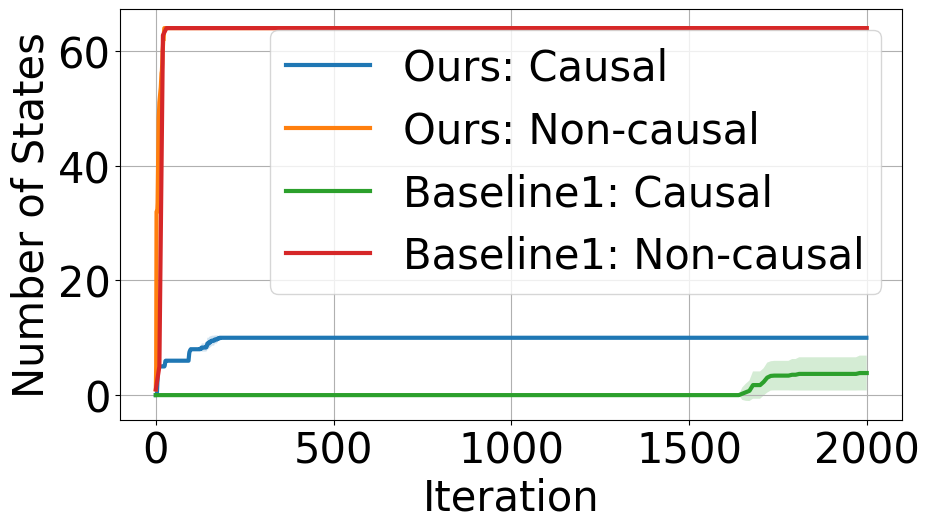}
        \caption{$(p_1, p_2) = (0.4, 0.5), \delta_\mathcal{M}=0.1$}
        \label{fig:causal_noncausal_comparison_54_delta_1}
    \end{subfigure}
    \begin{subfigure}{0.33\linewidth}
        \includegraphics[width=\linewidth]{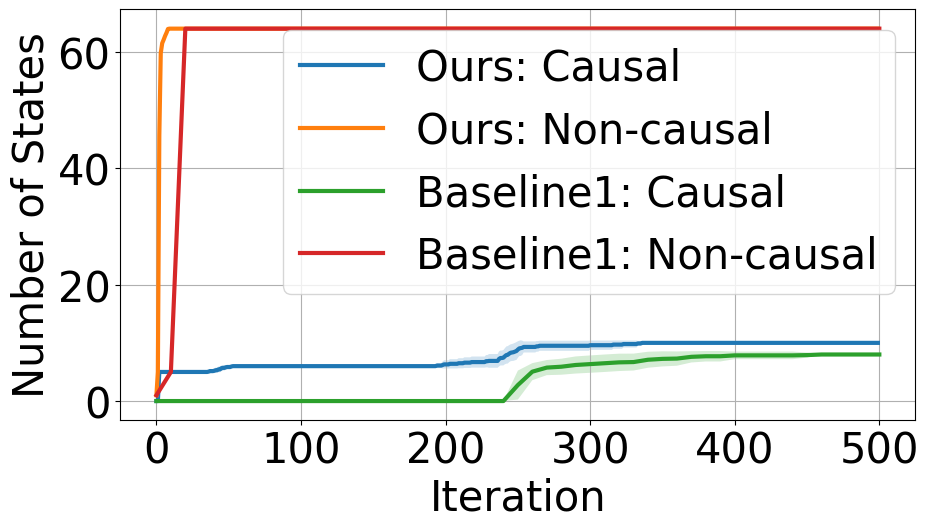}
        \caption{$(p_1, p_2) = (0.3, 0.6), \delta_\mathcal{M}=0.1$}
        \label{fig:causal_noncausal_comparison_63_delta_1}
    \end{subfigure}
    \begin{subfigure}{0.33\linewidth}
        \includegraphics[width=\linewidth]{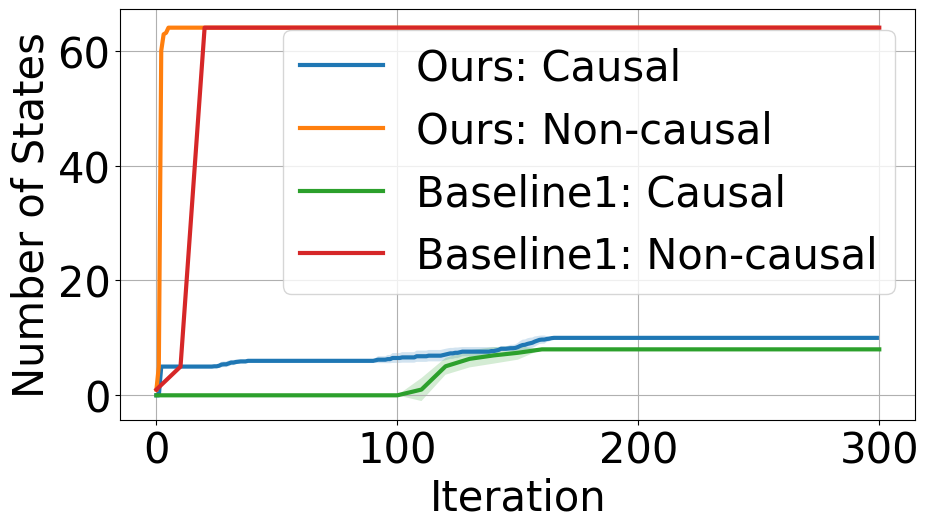}
        \caption{$(p_1, p_2) = (0.2, 0.7), \delta_\mathcal{M}=0.1$}
        \label{fig:causal_noncausal_comparison_72_delta_1}
    \end{subfigure}
    \begin{subfigure}{0.33\linewidth}
        \includegraphics[width=\linewidth]{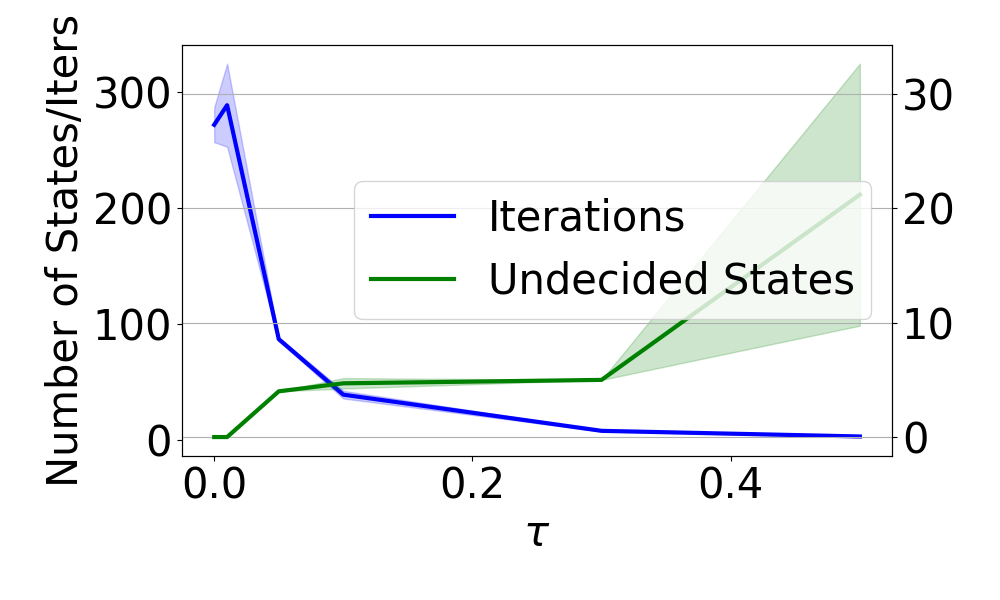}
        \caption{$(p_1, p_2) = (0.4, 0.5)$}
        \label{fig:effect_of_tau_54_delta_05}
    \end{subfigure}
    \begin{subfigure}{0.33\linewidth}
        \includegraphics[width=\linewidth]{For_Paper/Figures/MDP2d_efficient_p_6_and_3_delta_05_tau_effect.png}
        \caption{$(p_1, p_2) = (0.3, 0.6)$}
        \label{fig:effect_of_tau_63_delta_05}
    \end{subfigure}
    \begin{subfigure}{0.33\linewidth}
        \includegraphics[width=\linewidth]{For_Paper/Figures/MDP2d_efficient_p_7_and_2_delta_05_tau_effect.png}
        \caption{$(p_1, p_2) = (0.2, 0.7)$}
        \label{fig:effect_of_tau_72_delta_05}
    \end{subfigure}
    \caption{(a)–(f) show, for the nondeterministic planning setting, the mean numbers of causal and non-causal states identified by our method and \textrm{Baseline1}, respectively. (g)–(i) show, under $\delta_\mathcal{M}=0.05$, the mean number of undecided states, $|C_?|$, and the mean number of iterations for our method. In (a)–(i), shaded areas indicate the standard deviation.}
\end{figure}

To show how causal states affect the system behavior corresponding to Fig. \ref{ex:warehouse}, we depict two states in the SPR cause obtained from our method in Fig. \ref{fig:warehouse_cause}. Each arrow represents the executed actions at the locations. Fig. \ref{fig:warehouse_cause0} shows that the collision probability is increased if the blue robot tries to pick up the item when the green robot is still in the top-left side. Moreover, Fig. \ref{fig:warehouse_cause1} indicates that the collision probability is increased if the green robot reenters the left area when the blue robot is delivering the item.

\begin{figure}[htbp]
    \centering
    \begin{subfigure}{0.25\linewidth}
        \includegraphics[width=\linewidth]{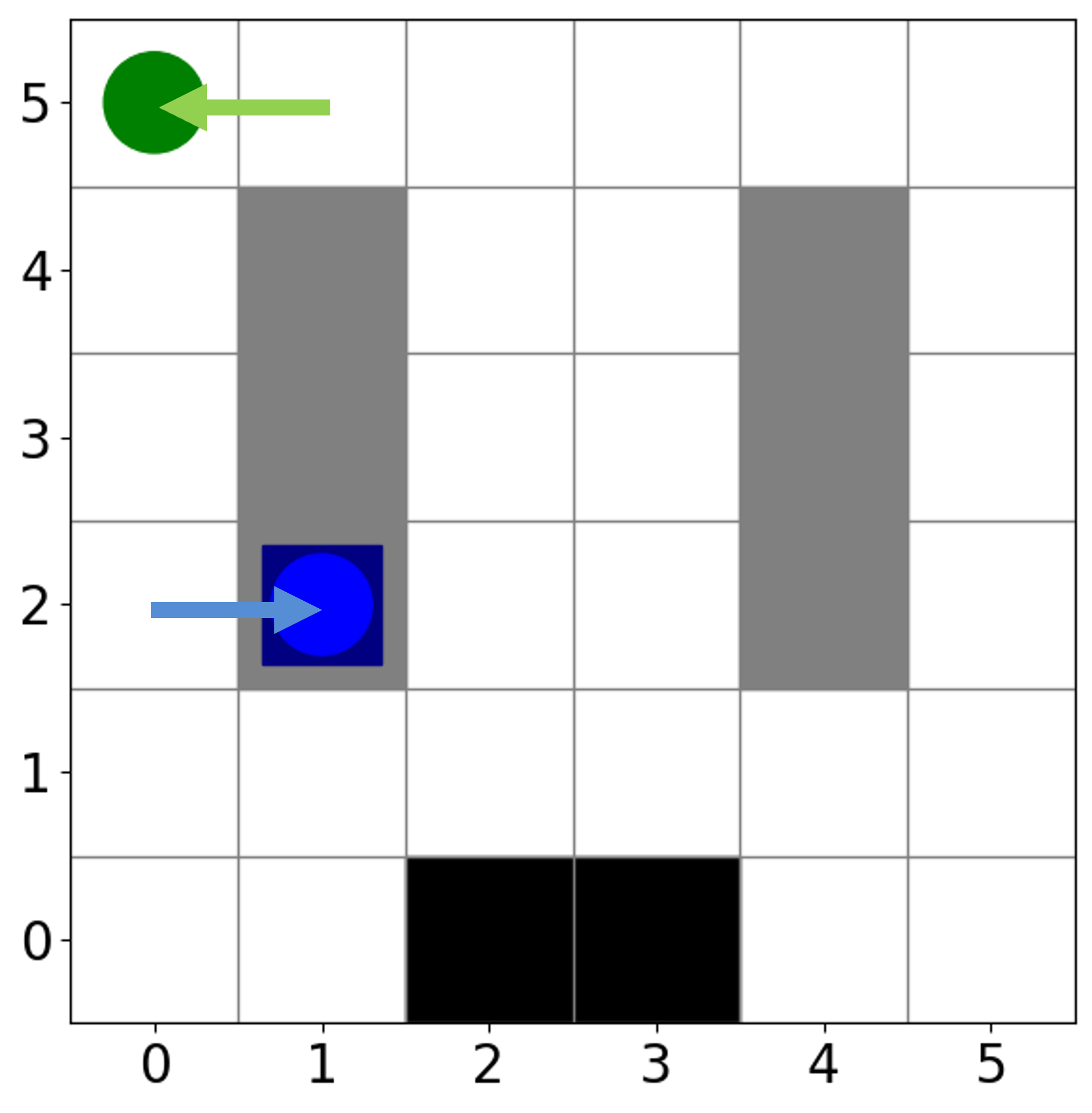}
        \caption{}
        \label{fig:warehouse_cause0}
    \end{subfigure}
    \begin{subfigure}{0.25\linewidth}
        \includegraphics[width=\linewidth]{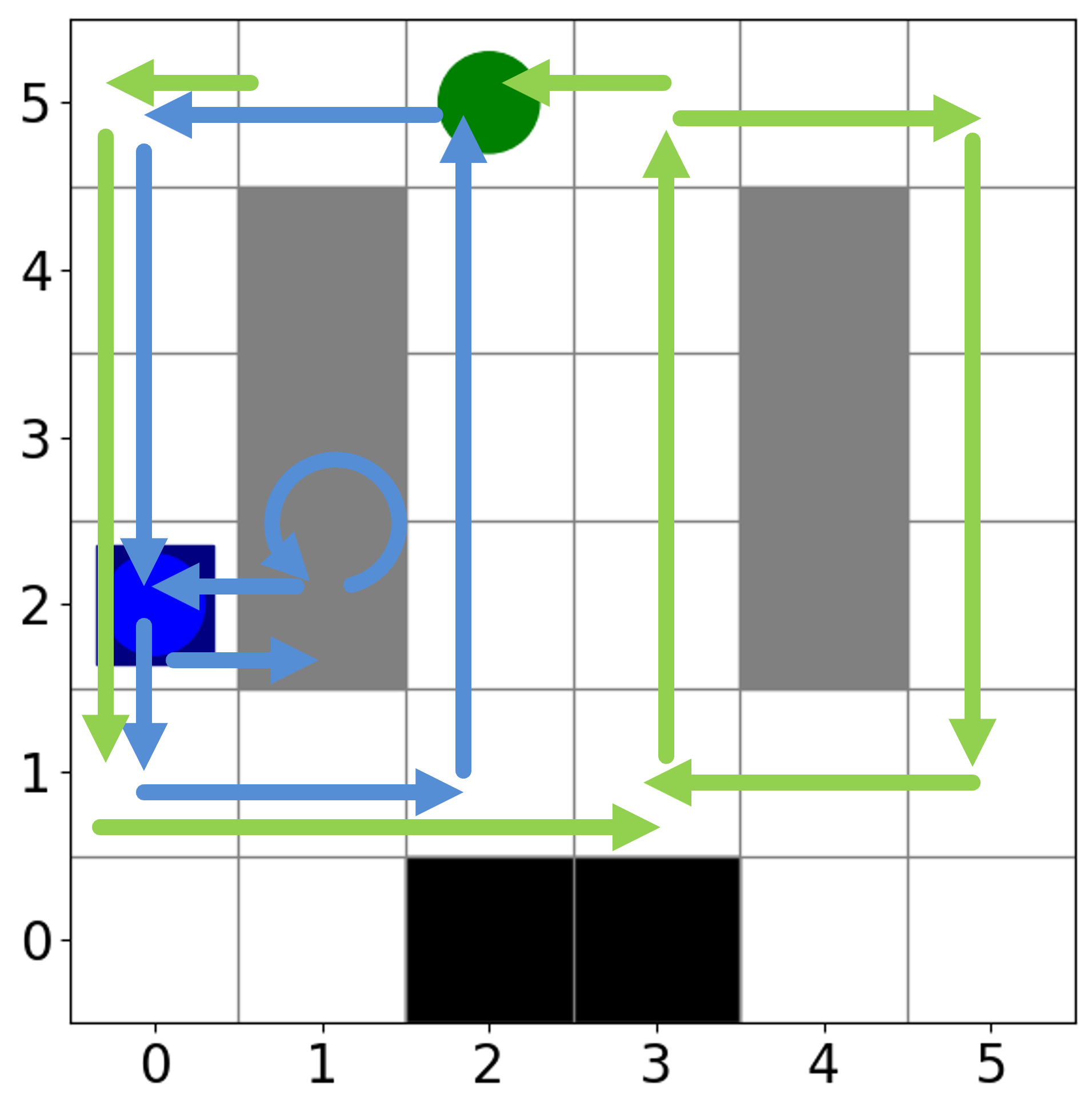}
        \caption{}
        \label{fig:warehouse_cause1}
    \end{subfigure}
    \caption{Two example states in the obtained SPR cause on the scenario shown in Fig. \ref{ex:warehouse} identified by our method.}
    \label{fig:warehouse_cause}
\end{figure}

\section{Parameters for Experiments and Reinforcement Learning}
\label{appendix:experimental_settings}
We summarize the parameter settings in our experiments.
\subsection{Nondeterministic planning scenario}
Environmental parameters are as follows:
\begin{itemize}
    \item $(p_1, p_2)$: $(0.2, 0.7), (0.3, 0.6), (0.4, 0.5)$.
    \item Transition probability to the intended direction: $0.9$.
    \item Transition probability to the perpendicular directions: $0.05$.
    \item Number of states: $81$.
    \item Number of experimental runs: $20$.
\end{itemize}
Hyperparameters in Algorithms \ref{alg:PACIdentification} and \ref{alg:Initialization_EarlyStopping} are as follows.
\begin{itemize}
    \item $k$: $5 \times 10^4$.
    \item $\delta_\mathcal{M}$: $0.05, 0.1$.
    \item $\tau$: $0, 0.01, 0.05, 0.1, 0.3, 0.5$.
    \item $N$: $500$.
\end{itemize}

\subsection{Delivery in warehouse}
Environmental parameters are as follows:
\begin{itemize}
    \item Transition probability to the intended direction when not carrying the item: $1$.
    \item Transition probability to the intended direction when carrying the item: $0.6$.
    \item Transition probability to the same cell when not carrying the item: $0$.
    \item Transition probability to the same cell when carrying the item: $0.4$.
    \item Number of states: $2808$.
    \item Number of experimental runs: $20$.
\end{itemize}
Hyperparameters in Algorithms \ref{alg:PACIdentification} and \ref{alg:Initialization_EarlyStopping} are as follows.
\begin{itemize}
    \item $k$: $5 \times 10^5$.
    \item $\delta_\mathcal{M}$: $0.1$.
    \item $\tau$: $0$.
    \item $N$: $500$.
\end{itemize}

\subsection{Settings for Reinforcement learning in Warehouse Delivery}
We describe the experimental parameter settings for Q-learning in Section \ref{Example:warehouse}. The task is specified by the linear temporal logic formula $\varphi = \eventually(\mathsf{Pickup\_Item} \land \eventually \mathsf{Reach\_Goal}) \land \Box \neg \mathsf{Collision}$ \citep{baier2008principles}. The learning process is executed on the product space of the MDP and the deterministic B\"{u}chi automaton (DBA) translated from the task, that is, the set of states $S^\times = S \times Q$, where $S$ is the set of states of the MDP and $Q$ is the set of states of the DBA. Let $F^\times = S \times F$ be the set of accepting states in the product space, where $F$ is the set of accepting states of the DBA. The value function $Q: S^\times \times A \to \mathbb{R}$ to be learned is defined as follows:
\begin{align}
    Q^*(s^\times, a) = \sup_{\pi \in \Pi^\mathsf{det}} \mathbb{E}_\pi \left[ \sum_{t=0}^\infty \gamma^t R(S^\times_t) \mid S^\times_0 = s^\times, A_0 = a \right],
\end{align}
where the discount factor is given by $\gamma = 0.99$, $\Pi^\mathsf{det}$ denotes the set of deterministic policies $\pi: S^\times \to A$, and $\mathbb{E}_\pi$ denote the expectation under the policy $\pi$.
The reward function $R : S^\times \to \{0, 1\}$ is defined as follows:
\begin{align}
    R(s^\times) = 
    \begin{cases}
        1 & \mbox{ if } s^\times \in F^\times, \\
        0 & \mbox{ otherwise},
    \end{cases}
\end{align}
The update rule of the Q-learning is given by
\begin{align}
    Q(s^\times_t, a_t) \gets Q(s^\times_t, a_t) + \alpha \left(R(s^\times_t) + \lambda \max_{a_{t+1} \in \mathcal{A}(s^\times_{t+1})} Q(s^\times_{t+1}, a_{t+1}) - Q(s^\times_t, a_t) \right),
\end{align}
where the learning rate $\alpha = 0.1$.
During learning, we used the epsilon-greedy method to choose actions under the following scheduling for $\varepsilon$: For each learning step $t$, we compute $\varepsilon \in [0,1]$ as
\begin{align}
    \varepsilon = \max \{\eta^t \varepsilon_0, \varepsilon_\mathsf{min}\},
\end{align}
where $\eta = 0.995$, $\varepsilon=1$, and $\varepsilon_\mathsf{min} = 0.1$.
We conducted $10000$ episodes to learn the deterministic controller $\pi: S^\times \to A$.

\section{Application Scenarios}
\label{appendix:usecases}
We describe two concrete scenarios in which the identified causal states are useful.

\paragraph{Permissive robotic planning with unsafe terminal states.}
A cause defined by Def.~\ref{def:SPR_cause} is a set of states whose visitation increases the probability of reaching bad outcomes regardless of how the permissive controller resolves its actions, and the non-vacuity condition \textbf{(C2)} removes states reachable only through other causal states. Here, the nondeterminism arises from the controller's permissiveness, while the probabilistic transitions arise from the noisy environment or probabilistic events. The result is a compact set of ``robustly responsible'' states, which can guide local repair of the permissive controller toward avoiding the cause. In contrast, a cause defined by Def.~\ref{def:eSPR_cause} is a set of states whose visitation increases the probability of reaching bad outcomes under at least one way the controller resolves its actions, offering a set of ``potentially responsible'' states. This set is also useful for understanding the source of bad outcomes and for repairing the controller accordingly.

\paragraph{Linear temporal logic planning with a fixed controller and an adversary.}
Treating the accepting end components of a linear temporal logic task as the terminal set $E$, a cause defined by Def.~\ref{def:SPR_cause} identifies states whose visitation increases the task-satisfaction probability no matter what actions the other agent takes. Here, the nondeterminism arises from the other agent's action selection, while the probabilistic transitions arise from the controller's action selection or the noisy environment. Under this definition, the other agent is treated adversarially, so the cause is a set of ``robustly responsible'' states that promote task satisfaction even in the worst case, which helps identify favorable conditions for achieving the task and repair the policy to be more robust against adversarial behavior. In contrast, a cause defined by Def.~\ref{def:eSPR_cause} identifies states whose visitation increases the task-satisfaction probability under at least one action selection of the other agent, which is meaningful when the other agent is cooperative: such ``potentially responsible'' states promote task satisfaction provided that the cooperative agent acts suitably, and they suggest favorable conditions to be encouraged through coordination.

\end{document}